\newcommand{\comment}[1]{{\color{red}\textsf{[#1]}}}
\newcommand{\orrp}[1]{{\color{blue}\textsf{[O: #1]}}}
\newcommand{\adam}[1]{{\color{orange}[#1 --Adam]}}
\newcommand{\shafi}[1]{{\color{red}[S: #1]}}
\newcommand{\orrpfn}[1]{\footnote{  }}
\renewcommand{\orrp}[1]{}
\renewcommand{\adam}[1]{}
\renewcommand{\comment}[1]{}
\renewcommand{\shafi}[1]{}
\newcommand{\blah}[1]{~\small{\textcolor{gray}{#1}}}
\newtheorem{theorem}{Theorem}[section]
\newtheorem{definition}[theorem]{Definition}
\newtheorem{lemma}[theorem]{Lemma}
\newtheorem{proposition}[theorem]{Proposition}
\newcommand{\conditionref}[1]{Condition \ref{#1}}
\newcommand{\definitionref}[1]{Definition \ref{#1}}
\newcommand{\lemmaref}[1]{Lemma \ref{#1}}
\DeclarePairedDelimiterX{\infdivx}[2]{(}{)}{%
  #1\;\delimsize\|\;#2%
}
\newcommand{\KL}{\mathrm{KL}\infdivx}
\def\supp{{\mathrm{supp}}}
\DeclareMathOperator{\loss}{\mathcal{L}}
\DeclareMathOperator{\err}{\mathrm{err}}
\DeclareMathOperator{\stat}{\mathrm{D}}
\DeclareMathOperator{\entropy}{\mathrm{H}}
\DeclareMathOperator*{\E}{\mathbb{E}}
\DeclareMathOperator*{\argmin}{\mathrm{argmin}}
\def\eps{{\epsilon}}
\def\x{{\mathbf{x}}}
\newcommand{\cnParam}{\alpha}
\newcommand{\cnBarParam}{\beta}
\newcommand{\cnProofVar}{a}
\newcommand{\arxiv}[1]{}
\newcommand{\literal}[1]{{\small \textsf{\textit{#1}}}}
\newcommand{\ambig}[2]{\literal{#1}$\leftrightarrow$\literal{#2}}
\newcommand{\ErdosRenyi}{Erdős--Rényi\xspace}
\newcommand{\defeq}{\coloneqq}
\newcommand{\oto}{\hookrightarrow} %
\newcommand{\KG}{\mathrm{KG}} %
\newcommand{\CN}{\mathrm{CN}} %
\newcommand{\RLST}{\mathrm{RT}} %
\newcommand{\RT}{\RLST} %
\newcommand{\textand}{,\ } %
\newcommand{\indicator}{\mathbf{1}}
\newcommand{\IA}{B} %
\newcommand{\rlstcond}{\mid A_i \cap T = V}
\newcommand{\lrmid}{\ \middle|\ } %
\newcommand{\resizeablerlstcond}{\lrmid A_i \cap T = V}
\newcommand{\lit}{\textrm{w}}
\def\super{\textsc{Super}}
\def\D{\mathcal{D}}
\def\N{\mathbb{N}}
\def\U{\mathcal{U}} %
\def\W{\mathcal{W}} %
\def\X{\mathcal{X}}
\def\Y{\mathcal{Y}}
\def\inv{\phi}
\def\emp{\hat{v}}
\def\alg{\textsc{MLE}}
\title{A Theory of Unsupervised Translation\\
Motivated by Understanding Animal Communication}
\author{%
  Shafi Goldwasser\thanks{Authors listed alphabetically.} \\
  UC Berkeley \& Project CETI\\
  \texttt{shafi.goldwasser@berkeley.edu}
  \And
  David F. Gruber\footnotemark[1]{} \\
  City University of New York \& Project CETI \\
  \texttt{david@projectceti.org}
  \And
  Adam Tauman Kalai\footnotemark[1]{} \\
  Microsoft Research \& Project CETI\\
  \texttt{adam@kal.ai}
  \And
  Orr Paradise\footnotemark[1]{} \\
  UC Berkeley \& Project CETI \\
  \texttt{orrp@eecs.berkeley.edu}
}
\begin{document}

\maketitle

\begin{abstract}%
Neural networks are capable of translating between languages---in some cases even between two languages where there is little or no access to parallel translations, in what is known as Unsupervised Machine Translation (UMT). Given this progress, it is intriguing to ask whether machine learning tools can ultimately enable understanding animal communication, particularly that of highly intelligent animals. We propose a theoretical framework for analyzing UMT when no parallel translations are available and when it cannot be assumed that the source and target corpora address related subject domains or posses similar linguistic structure. We exemplify this theory with two stylized models of language, for which our framework provides bounds on necessary sample complexity; the bounds are formally proven and experimentally verified on synthetic data. These bounds show that the error rates are inversely related to the language complexity and amount of common ground. This suggests that unsupervised translation of animal communication may be feasible if the communication system is sufficiently complex.

\end{abstract}

\section{Introduction}

Recent interest in translating animal communication \citep{cetiRoadmap,translatingAnimals,animal_linguistics} has been motivated by breakthrough performance of Language Models (LMs). Empirical work has succeeded in unsupervised translation between human-language pairs such as English--French \citep{LampleCDR18,ArtetxeLA19} and programming languages such as Python--Java \citep{RoziereZCHSL22}. Key to this feasibility seems to be the fact that language statistics, captured by a LM (a probability distribution over text), encapsulate more than just grammar. For example, even though both are grammatically correct, \literal{The calf nursed from its mother} is more than 1,000 times more likely than \literal{The calf nursed from its father}.\footnote{Probabilities computed using the GPT-3 API \url{https://openai.com/api/} text-davinci-02 model.} 

Given this remarkable progress, it is  natural to ask whether it is possible  to collect and analyze animal communication data, aiming towards translating animal communication to a human language description. This is particularly interesting when  the source language may be of highly social and intelligent animals, such as whales, and the target language is a human language, such as English.

\smallskip
\noindent
\textbf{Challenges.}  The first and most basic challenge is \textit{understanding the goal}, a question with a rich history of philosophical debate \citep{wittgenstein1953philosophical}. To define the goal, we consider a hypothetical ground-truth translator. As a thought experiment, consider a ``mermaid'' fluent in English and the source language (e.g. sperm whale communication). Such a mermaid could translate whale vocalizations that English naturally expresses. An immediate worry arises:  what about communications that the source language may have about topics for which English has no specific words? For example, sperm whales have a sonar sense which they use to perform echolocation. In that case, lacking a better alternative, the mermaid may translate such a conversation as \literal{(something about echolocation)}.\footnote{A better description might be possible. Consider the fact that some people who are congenitally blind comprehend vision-related verbs as accurately as sighted people \citep{BEDNY2019105}.
}

Thus, we formally define the goal to be to achieve translations similar to those that would be output by a hypothetical ground-truth translator. While this does not guarantee functional utility, it brings the general task of unsupervised translation and the specific task of understanding animal communication into the familiar territory of supervised translation, where one can use existing error metrics to define (hypothetical) error rates. 

\begin{figure}
\begin{center}
\begin{tabular}{p{1.9in}|p{1.9in}|p{1.2in}}
A & B & C\\
\toprule
     \literal{Have you seen any orcas today? I just got back from the reef.} \blah{$p(A_1)\approx 10^{-18}$} & \literal{Have you seen mom? I just returned from the ocean basin.} \blah{$p(B_1)\approx 10^{-18}$} & \literal{Hat out hat dsjgh!!!} \blah{$p(C_1)\approx 10^{-48}$}
     \\
     \literal{At the reef, there were a lot of sea turtles.} \blah{$p(A)\approx 10^{-22}$} & \literal{At the reef, there were a lot of sea turtles.} \blah{$p(B)\approx 10^{-26}$} & \literal{bicycle OMG and.} \blah{$p(C)\approx 10^{-72}$}\\
 \bottomrule
\end{tabular}
\end{center}
\caption{LMs identify incoherent text. The probabilities of three two-paragraph texts computed using the GPT-3 API. The probabilities of just the first paragraphs $A_1,B_1,C_1$ are also shown. Although $p(A_1)\approx p(B_1)$ and the second paragraphs of $A$ and $B$ are identical, overall $p(A)\gg p(B)$ due to coherence between the paragraphs. $C$ is gibberish.
\label{fig:example}}
\end{figure}

The second challenge is that animal communication is unlikely to share much, if any, linguistic structure with human languages.
Indeed, our theory will make no assumption on the source language other than that it is presented in a textual format. That said, one of our instantiations of the general theory (the knowledge graph) shows that translation is easier between compositional languages.

The third challenge is \textit{domain gap}, i.e., that ideal translations of animal communications into English would be semantically different from existing English text, and we have no precise prior model of this semantic content. (In contrast, the distribution of English translations of French text would resemble the distribution of English text.) Simply put: whales do not ``talk'' about smartphones.
Instead, we assume the existence of a \textit{broad prior} that models plausible English translations of animal communication. LMs assign likelihood to an input text based not only on grammatical correctness, but also on agreement with the training data. In particular, LMs trained on massive and diverse data, including some capturing facts about animals, may be able to reason about the plausibility of a candidate translation. See \Cref{fig:example} and the discussion in \Cref{sec:prior}.

\subsection{Framework and results}

A translator\footnote{In this work, a \textit{translator} refers to the function $f\colon \X\to\Y$ while \textit{translation} refers to an output $y=f(x)$.} is a function $f \colon \X \to \Y$ that translates source text $x \in \X$ into the target language $f(x) \in \Y$. We focus on the easier-to-analyze case of \textit{lossless} translation, where $f$ is invertible (one-to-one) denoted by $f_\theta: \X \oto \Y$. See \Cref{sec:lossy} for an extension to lossy translation. 

We will consider a parameterized family of translators $\{f_\theta \colon \X \to \Y\}_{\theta \in \Theta}$, with the goal being to learn the parameters $\theta \in \Theta$ of the most accurate translator. Accuracy (defined shortly) is measured with respect to a hypothetical \emph{ground-truth} translator denoted by $f_\star:\X\rightarrow \Y$. We make a \textit{realizability} assumption that the ground-truth translator can be represented in our family, i.e., $\star \in \Theta$.

The source language is defined as a distribution $\mu$ over $x \in \X$, where $\mu(x)$ is the likelihood that text $x$ occurs in the source language. The error of a model $\theta \in \Theta$ will be measured in terms of $\err(\theta) \defeq \Pr_{x \sim \mu}[f_\theta(x)\ne f_\star(x)]$, or at times a general bounded loss function $\loss(\theta)$.  Given $\mu$ and $f_\star$, it will be useful to consider the \emph{translated language distribution} $\tau$ over $\Y$ by taking $f_\star(x)$ for $x \sim \mu$.

In the case of similar source and target domains, one may assume that the target language distribution $\nu$ over $\Y$ is close to $\tau$. This is a common intuition given for the ``magic'' behind why UMT sometimes works: for complex asymmetric distributions, there may a nearly unique transformation in $\{f_\theta\}_{\theta \in \Theta}$ that maps $\mu$ to something close $\nu$ (namely $f_\star$ which maps $\mu$ to $\tau$). A common approach in UMT is to embed source and target text as high-dimensional vectors and learn a \textit{low-complexity} transformation, e.g., a rotation between these Euclidean spaces. Similarly, translator complexity will also play an important role in our analysis.

\paragraph{Priors.} Rather than assuming that the target distribution $\nu$ is similar to the translated distribution $\tau$, we will instead assume access to a broad prior $\rho$ over $\Y$ meant to capture how plausible a translation $y$ is, with larger $\rho(y)$ indicating more natural and plausible translation. 
\Cref{sec:prior} discusses one way a prior oracle can be created, starting with an LM $\approx \nu$ learned from many examples in the target domain, and combined with a prompt, in the target language, describing the source domain. 

We define the problem of \emph{unsupervised machine translation (with a prior)} to be finding an accurate $\theta\in \Theta$ given $m$ iid unlabeled source texts $x_1,\ldots, x_m\sim \mu$ and oracle access to prior $\rho$.

\paragraph{$\alg$.}
Our focus is on the Maximum-Likelihood Estimator ($\alg$), which selects model parameters $\theta\in\Theta$ that (approximately) maximize the likelihood of translations $\prod_i \rho\bigl(f_\theta(x_i)\bigr)$.
\begin{definition}[$\alg$]\label{def:alg}
    Given input a translator family $\{f_\theta \colon \X \to \Y\}_{\theta \in \Theta}$, samples $x_1,\dots, x_m \in \X$ and a distribution $\rho$ over $\Y$, the $\alg$ outputs
    \begin{equation*}
        \alg^\rho(x_1, x_2, \ldots, x_m) \defeq \argmin_{\theta \in \Theta} \sum_{i = 1}^m \log \frac{1}{\rho(f_\theta ( x_i ))}
    \end{equation*}
    If multiple $\theta$ have equal empirical loss, it breaks ties, say, lexicographically.
\end{definition}
We note that heuristics for $\alg$ have proven extremely successful in training the breakthrough LMs, even though $\alg$ optimization is intractable in the worst case.

Next, we analyze the efficacy of $\alg$ in two complementary models of language: one that is highly structured (requiring compositional language) and one that is completely unstructured. These analyses both make strong assumptions on the target language, but make few assumptions about the source language itself. In both cases, the source distributions are uniform over subsets of $\X$, which (informally) is the ``difficult'' case for UMT as the learner cannot benefit from similarity in text frequency across languages. 
Both models are parameterized by the amount of ``common ground'' between the source language and the prior, and both are randomized.
Note that these models are \emph{not} intended to accurately capture natural language. Rather, they illustrate how our theory can be used to study the effect of language similarity and complexity on data requirements for UMT.

\paragraph{Knowledge graph model.}
Our first model consists of a pair of related \emph{knowledge graphs},
in which edges encode knowledge of binary relations. Each edge yields a text that described the knowledge it encodes. For example, in \Cref{fig:graph} edges encode which animal $A$ eats which other animal $B$, and text is derived as a simple description \literal{$A$ eats $B$}.\footnote{In a future work, it would be interesting to consider $k$-ary relations (hypergraphs) or multiple relations.}

Formally, there are two \ErdosRenyi random digraphs. The target graph is assumed to contain $n$ nodes, while the source graph has $r \le n$ nodes corresponding to an (unknown) subset of the $n$ target nodes. The model has two parameters: the average degree $d$ in the (randomly-generated) target language graph, and the agreement $\alpha \in (0,1]$ between the source and the target graphs. Here $\alpha=1$ is complete agreement on edges in the subgraph, while $\alpha=0$ is complete independence.
We assume the languages use a \textit{compositional} encoding for edges, meaning that they encode a given edge by encoding both nodes,
so we may consider only $|\Theta|=n!/(n-r)!$ translators consistently mapping the $r$ source nodes to the $n$ target nodes, which is many fewer than the number of functions $f: \X \rightarrow \Y$ mapping the $|\X| = O(r^2)$ source edges into the $\Y = O(n^2)$ target edges. Human languages as well as the communication systems of several animals are known to be compositional \citep{zuberbuhler2020syntax}.\footnote{A system is \emph{compositional} if the meaning of an expression is determined by the meaning of its parts.}

We analyze how the error of the learned translator depends on this ``common knowledge'' $\alpha$:
\begin{theorem}[\Cref{thm:KG}, simplified]\label{thm:intro_kg}  %
Consider a source language $\mu$ and a prior $\rho$ generated by the knowledge graph model over $r$ source nodes, $n$ target nodes, average degree $d$ and agreement parameter $\alpha$. Then,  with at least $99\%$ probability, when given $m$ source sentences $x_1,\dots, x_m$ and access to a prior $\rho$, $\alg$ outputs a translator $\hat{\theta}$ with error
    \begin{equation*}
        \err(\hat\theta) \le O\left(\frac{\log n}{\alpha^2 d} + \frac{1}{\alpha}\sqrt{\frac{r \log n}{m}}\right).
    \end{equation*}
\end{theorem}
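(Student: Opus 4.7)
The plan is to combine a standard uniform-convergence bound over the finite family $\Theta$ with a graph-theoretic margin lemma that quantifies the advantage of $\theta^\star$ over incorrect translators, and then to control the extra randomness coming from the random source/target graphs themselves. Since $|\Theta|=n!/(n-r)!\le n^r$ and the per-sample log-loss $-\log\rho(f_\theta(x))$ can be bounded by $B=O(\log n)$ after an appropriate smoothing of $\rho$ on non-edges, Hoeffding's inequality combined with a union bound over $\theta\in\Theta$ gives, with probability $\ge 0.99$,
\[
\sup_{\theta\in\Theta}\bigl|\widehat{L}(\theta)-L(\theta)\bigr|\;\le\;O\bigl(\log n\cdot\sqrt{\tfrac{r\log n}{m}}\bigr),
\]
where $\widehat{L}$ is the empirical $\alg$ objective and $L$ its expectation over $x\sim\mu$. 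Optimality of $\alg$ then converts this into $L(\hat\theta)-L(\theta^\star)\le O(\log n\sqrt{r\log n/m})$.

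The central step is a margin lemma: for every $\theta$ with $\err(\theta)=\epsilon$,
\[
L(\theta)-L(\theta^\star)\;\ge\;\Omega(\alpha\log n\cdot\epsilon).
\]
The intuition is that every source edge $x$ on which $\theta$ and $\theta^\star$ disagree is, in effect, re-routed by $\theta$ away from a ``true'' target-graph edge (where $\rho$ concentrates its mass) to an essentially uniform target-pair. Under the agreement model, $\theta^\star$ therefore hits a target edge with probability $\approx\alpha+d/n$ while $\theta$ hits with probability only $\approx d/n$, so per disagreement the expected $-\log\rho$-gap is at least $(\alpha-d/n)\log(n/d)\gtrsim\alpha\log n$. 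Summing over the $\epsilon$-fraction of disagreements gives the bound, and combining with the uniform-convergence inequality yields the statistical term $\err(\hat\theta)\le O((1/\alpha)\sqrt{r\log n/m})$.

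The first, ``irreducible'' term $\log n/(\alpha^2 d)$ in the theorem arises because $L(\theta)-L(\theta^\star)$ itself fluctuates over the correlated \ErdosRenyi randomness of the source and target graphs: for translators $\theta$ at Hamming distance $k$ from $\theta^\star$, the mean gap of $\Omega(\alpha k\log n/r)$ competes against a graph-randomness standard deviation whose size, together with a union bound over the $\le n^k$ close translators, forces $k\gtrsim r\log n/(\alpha^2 d)$ before $\theta^\star$ can be reliably preferred to all such $\theta$; via $\err(\theta)\approx 2k/r$ this translates into the unavoidable additive error $\log n/(\alpha^2 d)$. The main obstacle is precisely this last step: because $\mu$ and $\rho$ are both random and coupled by the agreement parameter $\alpha$, one has to carefully decompose the variance of $L(\theta)-L(\theta^\star)$ into contributions from approximately independent edge-slots around the $k$ differing vertices and then apply a Bernstein-type concentration inequality; once these graph-level concentration estimates are in place, the three pieces fit together into the claimed two-term bound.
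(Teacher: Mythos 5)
Your high-level plan — sample concentration plus a margin argument showing incorrect translators have a detectable disadvantage plus a separate graph-level concentration step for the irreducible term — does match the paper's structure, and the two terms land in roughly the right place. But there is a genuine gap in the graph-level concentration, which is where you yourself say "the main obstacle" lies. You propose to decompose the variance "into contributions from approximately independent edge-slots" and apply Bernstein, but the indicators $\indicator_{\pi_\theta(y)\notin P}$ for $y$ among the disagreement edges are \emph{not} independent, not even approximately: if $\pi_\theta(y)$ lands on another disagreement $y'$ (say in a 2-cycle of $\pi_\theta$ inside $T$), the event $\pi_\theta(y)\in P$ is coupled through $\alpha$ to $y'\in T$, on which you are conditioning, so the dependence can be total rather than small. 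Bernstein does not repair this. The paper resolves it with a purely combinatorial device (Proposition D.3): the non-fixed points of any permutation $\pi$ can be partitioned into three sets $A_1\cup A_2\cup A_3$ with $\pi(A_i)\cap A_i=\emptyset$, which restores conditional independence inside each piece so that Chernoff applies cleanly piece by piece. Some device of this kind is necessary, and your sketch does not supply one.

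Two smaller inaccuracies. First, since $\rho$ is two-valued, the per-sample log-loss $-\log\rho(f_\theta(x))$ has \emph{range} $\log(1/p)$ rather than $\log n$; your "$\log n$" in the Hoeffding bound and in the margin lemma are really both $\log(1/p)$ and cancel, but they only equal $\Theta(\log n)$ when $p$ is polynomially small. The paper sidesteps the issue altogether by noting that the MLE objective is strictly monotone in the binary statistic $\hat e_\theta=\frac{1}{m}|\{i:f_\theta(x_i)\notin P\}|$, so MLE is empirical-error minimization of a $\{0,1\}$-valued loss and no $\log$-scale factor ever appears in the concentration argument. Second, the Hamming-$k$ stratification you invoke for the irreducible term, once worked out, yields an all-or-nothing condition (one needs $d\gtrsim\log n/\alpha^2$ regardless of $k$) rather than "forcing $k\gtrsim r\log n/(\alpha^2 d)$"; the paper instead gets the stated additive form via a single blanket union bound over all $|\Theta|\le n^r$ translators, with the $\log n^r=r\log n$ factor inside a Chernoff condition on the number of disagreement edges $|A_\theta|\ge\varepsilon|T|$, which directly produces $\varepsilon\approx\frac{\log n}{\alpha^2 d}$.
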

The second term decreases to 0 at a $O(m^{-1/2})$ rate, similar to (noisy) generalization bounds \citep{mohri2018foundations}. Note that the first term does not decrease with the number of samples. The average degree $d$ is a rough model of language complexity capturing per-node knowledge, while the agreement parameter $\alpha$ captures the amount of common ground. Thus, more complex languages can be translated (within a given error rate) with less common ground. Even with $m=\infty$ source data,  there could still be errors in the mapping. For instance, there could be multiple triangles in the source and target graphs that lead to ambiguities. However, for complex knowledge relations (degree $d \gg 1/\alpha^2$), there will be few such ambiguities.

\begin{figure}
    \centering
\includegraphics[width=4.5in]{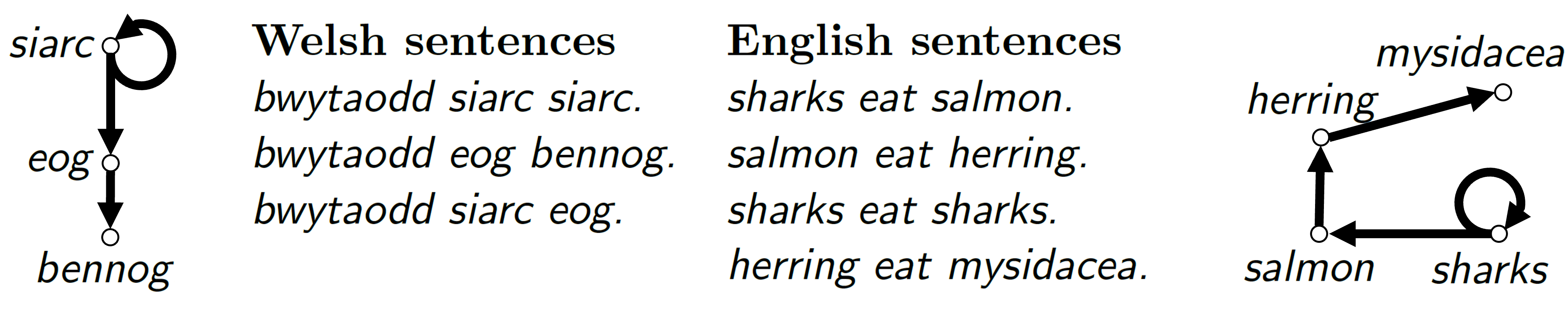}

\caption{\label{fig:graph} An illustration of the knowledge graph model. In this example, the Welsh graph is an exact subgraph of the English knowledge graph, but our model allows for differences.}
\end{figure}

\Cref{fig:graph} illustrates an example of four English sentences and three sentences (corresponding to an unknown subset of three of the English sentences) in Welsh. For UMT, one might hypothesize that \literal{bwytaodd} means \literal{eat} because they both appear in every sentence. One might predict that \literal{siarc} means \literal{shark} because the word \literal{siarc} appears twice in a single Welsh sentence and only the word shark appears twice in an English sentence. Next, note that \literal{eog} may mean \literal{salmon} because they are the only other words occurring with \literal{siarc} and \literal{shark}. Similar logic suggests that \literal{bennog} means \literal{herring}. Furthermore, the word order is consistently permuted, with subject-verb-object in English and verb-subject-object in Welsh. This translation is indeed roughly correct.
This information is encoded in the directed graphs as shown, where each node corresponds to an animal species and an edge between two nodes is present if the one species eats the other. 

\paragraph{``Common nonsense'' model.}
The second model, the \textit{common nonsense} model, assumes no linguistic structure on the source language. Here, we set out to capture the fact that the translated language $\tau = \mu \circ f_\star$ and the prior $\rho$ share some common ground through the fact that the laws of nature may exclude common ``nonsense'' outside both distributions' support.

Earlier work has justified \textit{alignment} for UMT under the intuition that the target language distribution $\nu$ is approximated by a nearly unique simple transformation, e.g., a rotation, of the source distribution $\mu$. However, for a prior $\rho$, our work suggests that UMT may also be possible if there is nearly a unique simple transformation that maps $\tau$ so that it is \textit{contained} in $\rho$. \Cref{fig:mermaid} illustrates such a nearly unique rotation---the UMT ``puzzle'' of finding a transformation $f_\theta$ of $\mu$ which is contained within $\rho$ is subtly different from finding an alignment. 

In the common nonsense model, $\tau$ and $\rho$ are uniform over arbitrary sets $T \subseteq P \subseteq \Y$ from which a common $\cnParam\in (0, 1/2]$ fraction of text is removed (hence the name ``common nonsense''). Specifically, $\tau$ and $\rho$ are defined to be uniform over $\tilde{T} = T \setminus S$, $\tilde{P}= P \setminus S$, respectively, for a set $S$ sampled by including each $y \in \Y$ with probability $\cnParam$.

We analyze the error of the learned translator as a function of the amount of common nonsense:
\begin{theorem}[\Cref{thm:smooth}, simplified]\label{thm:intro_nonsense}  %
Consider source language $\mu$ and a prior $\rho$ generated by the common nonsense model over $|T|$ source texts and common-nonsense parameter $\cnParam$, and a translator family parameterized by $|\Theta|$. Then, with at least $99\%$ probability, when given $m$ source sentences $x_1,\dots, x_m$ and access to a prior $\rho$, $\alg$ outputs a translator $\hat{\theta}$ with error
    \begin{equation*}
        \err\bigl(\hat\theta\bigr) \defeq \Pr_{x \in X}\left[f_\theta(x) \ne f_\star(x)\right]  = O\!\left(\frac{\ln |\Theta|}{\cnParam \min(m, |T|)}\right).
    \end{equation*}
\end{theorem}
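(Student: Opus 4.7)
My plan is a realizable-consistent-learning style argument. Since $\rho$ is uniform on $\tilde{P}$, the per-sample loss $\log(1/\rho(y))$ equals $\log|\tilde{P}|$ on $\tilde{P}$ and $+\infty$ off $\tilde{P}$. Because $f_\star(x_i) \in \tilde{T} \subseteq \tilde{P}$ always, $\star$ attains total loss $m \log|\tilde{P}|$, so any $\alg$ output $\hat{\theta}$ must be \emph{consistent}: $f_{\hat{\theta}}(x_i) \in \tilde{P}$ for every $i$. The problem thus reduces to bounding the probability that some $\theta \in \Theta$ is simultaneously consistent and has large error, for which I would apply a union bound over $\theta \in \Theta$.

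For a fixed $\theta$, let $T_\theta \defeq \{y \in T : f_\theta(f_\star^{-1}(y)) \ne y\}$ and $k_\theta \defeq |T_\theta|$. Partition $T_\theta = T_\theta^{\mathrm{out}} \cup T_\theta^{\mathrm{in}}$ according to whether $y^\star \defeq f_\theta(f_\star^{-1}(y))$ lies outside or inside $P$. A sample $x \sim \mu$ with $f_\star(x) = y \in T_\theta$ causes rejection iff $y^\star \notin \tilde{P}$: for $y \in T_\theta^{\mathrm{out}}$ this always happens, while for $y \in T_\theta^{\mathrm{in}}$ it happens iff the event $y^\star \in S$ occurs, which is independent of $y \in S$ because $y \ne y^\star$. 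Averaging over $S$ and the sample, the per-sample rejection probability is $\Omega(\cnParam k_\theta/|T|)$, and over $m$ iid samples one would hope for $\Pr[\theta \text{ consistent}] \le \exp(-\Omega(\cnParam k_\theta \min(m,|T|)/|T|))$, with the saturation at $m = |T|$ reflecting that past this threshold further samples cannot cover new elements of $\tilde{T}$.

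The main obstacle is that $\mu$ and $\tilde{P}$ both depend on the \emph{same} random $S$, and Jensen's inequality applied to $\E_S[(1-r(S))^m]$ (where $r(S)$ is the realized rejection rate given $S$) points the wrong way. I would handle this by Chernoff-bounding separately the numerator (count of rejecting elements inside $\tilde{T}$) and denominator ($|\tilde{T}|$) of $r(S)$: with probability $1 - \exp(-\Omega(\cnParam k_\theta))$ one gets $r(S) \ge \cnParam k_\theta/(4|T|)$, so the conditional acceptance probability is at most $\exp(-\cnParam k_\theta m/(4|T|))$ on this good event, while the complementary event contributes only $\exp(-\Omega(\cnParam k_\theta))$ via the trivial bound $(1-r)^m \le 1$.

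To finish, set $k_0 \defeq C|T|\ln|\Theta|/(\cnParam \min(m,|T|))$ for a sufficiently large constant $C$. A union bound of the per-$\theta$ consistency probabilities over all $\theta$ with $k_\theta > k_0$ contributes at most $1/200$. For the remaining $\theta$ (those with $k_\theta \le k_0$), one has $\err(\theta) \le k_\theta/|\tilde{T}|$; since $|\tilde{T}| \ge (1-\cnParam)|T|/2$ with probability $\ge 1 - 1/200$ by a standard Chernoff bound on a binomial, automatically $\err(\theta) \le 2k_0/((1-\cnParam)|T|) = O(\ln|\Theta|/(\cnParam \min(m,|T|)))$ using $\cnParam \le 1/2$. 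Intersecting the two high-probability events yields the claimed bound.
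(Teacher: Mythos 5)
Your overall strategy tracks the paper's proof quite closely, just inlined: the observation that $\alg$ must output a $\theta$ with $f_\theta(x_i) \in \tilde{P}$ for all $i$ (since $\rho$ assigns probability zero off $\tilde{P}$ and the ground truth always lands inside) is exactly the realizability-based reduction in the paper's \Cref{thm:stat_sem0}, and your threshold-plus-union-bound finish is the same as the paper's control of $\varepsilon_\gamma$ via \Cref{lem:smooth_gamma}.

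The gap is in your step ``Chernoff-bounding the numerator (count of rejecting elements inside $\tilde{T}$).'' You correctly identified that both the numerator and denominator of $r(S)$ depend on the \emph{same} $S$ and propose to concentrate each separately. But the numerator is not a sum of independent (or obviously negatively associated) indicators. For $y \in T_\theta^{\mathrm{in}}$ the relevant indicator is $\indicator_{y \in S}\cdot\indicator_{\pi_\theta(y)\notin S}$, and when $\pi_\theta$ maps one element of $T_\theta$ onto another, the coin for a single element of $\Y$ appears in two of these indicators with \emph{opposite} signs. For a transposition $\pi_\theta(y_1)=y_2$, $\pi_\theta(y_2)=y_1$ the two indicators are mutually exclusive; in a long cycle the sum $\sum_y \indicator_{y \in S}(1-\indicator_{\pi_\theta(y)\in S})$ counts descents around a random cyclic $0/1$-sequence, whose fluctuations do not admit a vanilla Chernoff bound. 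A McDiarmid-style bounded-differences argument does give concentration, but with exponent $\Omega(\alpha^2 k_\theta)$ rather than $\Omega(\alpha k_\theta)$, which would weaken the theorem to $\err(\hat\theta) = O(\ln|\Theta|/(\alpha^2 \min(m,|T|)))$.

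The paper closes exactly this hole with a small combinatorial device, \Cref{lem:trisect}: the set $A = \{y \in \Y : \pi_\theta(y) \ne y\}$ can be partitioned as $A = A_1 \cup A_2 \cup A_3$ with $\pi_\theta(A_i) \cap A_i = \emptyset$ for each $i$ (split each cycle of $\pi_\theta$ alternately, with a third bin absorbing the parity mismatch of odd cycles). Within a single $A_i$, the coins $\{\indicator_{y \in S}\}_{y \in A_i}$ and $\{\indicator_{\pi_\theta(y) \in S}\}_{y \in A_i}$ live on disjoint element sets, so after conditioning on $A_i \cap S$ the rejection events are honestly independent and your Chernoff bound goes through; a union bound over the three pieces costs only a constant factor. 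Inserting this decomposition before your concentration step makes your proposal correct, and at that point it is essentially the paper's argument (\Cref{lem:smooth_gamma} plus \Cref{thm:stat_sem0}).

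Two minor notes: you wrote ``it happens iff the event $y^\star \in S$ occurs'' where you mean $y^\star \notin S$ (since $\tilde{P} = P \cap S$); and the $\min(m,|T|)$ saturation is better thought of as the residual $\varepsilon_\gamma$ floor from finite common ground rather than samples ``failing to cover new elements,'' but neither affects the structure of the argument.
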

\Cref{thm:sa-lower} gives a nearly matching lower bound.
Let us unpack the relevant quantities. First, we think of $\cnParam$ as measuring the amount of agreement or common ground required, which might be a small constant. Second, note that $|T|$ is a coarse measure of the complexity of the source language, which requires a total of $\tilde{O}(|T|)$ bits to encode. Thus, the bound suggests that accurate UMT requires the translator to be simple, with a description length that is an $\cnParam$-factor of the language description length, and again $\cnParam$ captures the agreement between $\tau,\rho$. Thus, even with limited common ground, one may be able to translate from a source language that is sufficiently complex.
Third, for simplicity, we require $\X, \Y \subset \{0,1\}^*$ to be finite sets of binary strings, so WLOG $\Theta$ may be also assumed to be finite. Thus, $\log_2 |\Theta|$ is the \textit{description length}, a coarse but useful complexity measure that equals the number of bits required to describe any model. (Neural network parameters can be encoded using a constant number of bits per parameter.) \Cref{sec:infinite} discusses how this can be generalized to continuous parameters.

Importantly, we note that (supervised) neural machine translators typically use far fewer parameters than LMs.\footnote{For example, a multilingual model achieves state-of-the-art performance using only 5 billion parameters \citep{TranBCKEF21}, compared to 175 billion for GPT-3 \citep{GPT3}.} To see why, consider the example of the nursing calf (page 1) and the fact that a translator needs not know that calves nurse from mothers. On the other hand, such knowledge is essential to generate realistic text. Similarly, generating realistic text requires maintaining coherence between paragraphs, while translation can often be done at the paragraph level.

\begin{figure}
    \centering
    \includegraphics[width=4.5in]{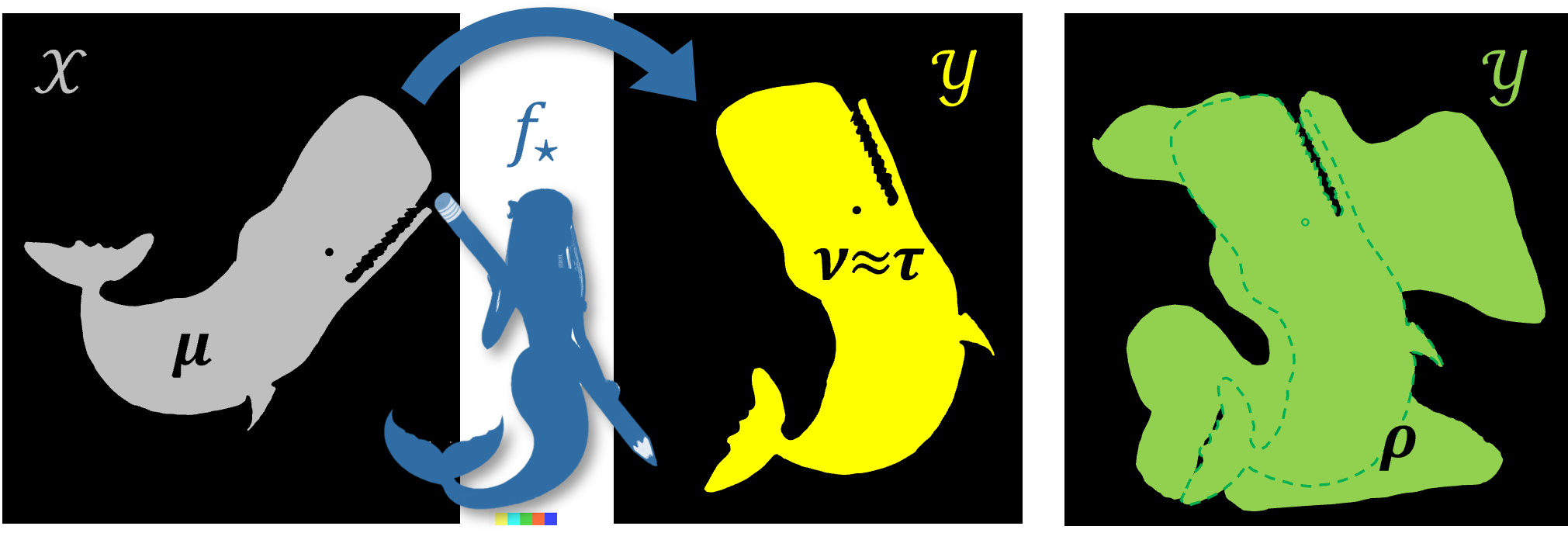}
    \caption{The previous intuition behind UMT has the distributions of target language  $\nu$ (middle) close to ground-truth translations $\tau$, which is assumed to be a low-complexity transformation (in this example a rotation) of the source language $\mu$ (left). When source and target are not aligned, restricting to \emph{prior} $\rho$ region (right) allows for translation, as long as there are enough  ``nonsense'' texts (black regions) so that there is a nearly unique rotation of $\mu$ that is contained in $\rho$. For example, both distributions may assign negligible probability to nonsensical texts such as \literal{I died 3 times tomorrow}. (In this toy example, $\mu$ is uniform over a two-dimensional shape that happens to look like a whale.)
    \label{fig:mermaid}}
\end{figure}

 As a warm-up, we include a simplified version of the common nonsense model, called the \emph{tree-based model} (\Cref{sec:tree}), in which texts are constructed word-by-word based on a tree structure.

\paragraph{Comparison to supervised classification.} Consider the dependency on $m$, the number of training examples. 
Note that the classic Occam bound $O\bigl(\frac{1}{m}\log |\Theta|\bigr)$ is what one gets for noiseless supervised classification, that is, when one is also given labels $y_i=f_\star(x_i)$ at training time, which is similar to \Cref{thm:intro_nonsense}, and give $\tilde{O}(m^{-1/2})$ bounds for noisy classification as in \Cref{thm:intro_kg}. Furthermore, these bounds apply to translation, which can be viewed as a special case of classification with \textit{many} classes $\Y$. Thus, in both cases, the data dependency on $m$ is quite similar to that of classification. 

\paragraph{Experiments.} We validate our theorems generating synthetic data from randomly-generated languages according to each model, and evaluating translator error as a function of the number of samples and amount of common ground. The knowledge graph model (\Cref{fig:exp_kg}, left) is used to generate a source graph (language) on $r = 9$ nodes to a target graph (language) on $n = 10$ nodes and average degree $d \approx 5$, while varying the agreement parameter $\alpha$. We also vary $r$ (\Cref{fig:exp_kg}, right) supporting our main message: more complex languages can be translated more accurately. For the common nonsense model (\Cref{fig:exp_cn}) we simulate translation of a source language of size $|T|=10^5$ while varying the fraction of common nonsense $\alpha$. \Cref{ap:experiments} contains details and code.

\begin{figure}
\centering
    \begin{tabular}{@{} m{4.0in} m{1.4in} @{}}
      \includegraphics[width=4.0in]{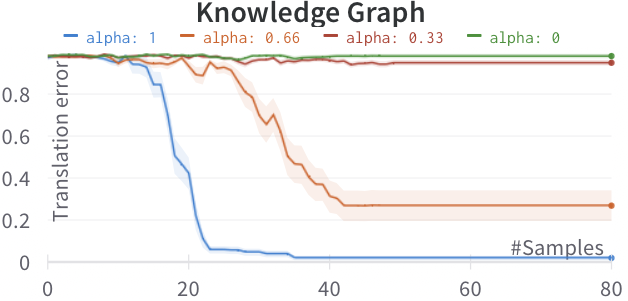}
      &
      \begin{tabular}[c]{ll}
             $r$ & Accuracy\\
             \toprule
             $4$ & $0.10 \pm 0.05$ \\
             $7$ & $0.74 \pm 0.08$ \\
             $10$ & $1.00 \pm 0.00$ \\
             \bottomrule
        \end{tabular}
    \end{tabular}
    \caption{Knowledge Graph model experiments, each run on twenty seeds with standard errors shown. Left: error of the top-scoring translator vs.\ number of source samples $m$. Right: effect of source language complexity (number of source nodes $r$) on translator accuracy in the knowledge graph model. We report the accuracy of the top-scoring translator after all source edges were input to the learning algorithm, i.e., as the number of samples $m \to \infty$.
    \label{fig:exp_kg}}
\end{figure}

\begin{figure}
    \centering
    \includegraphics[width=4.5in]{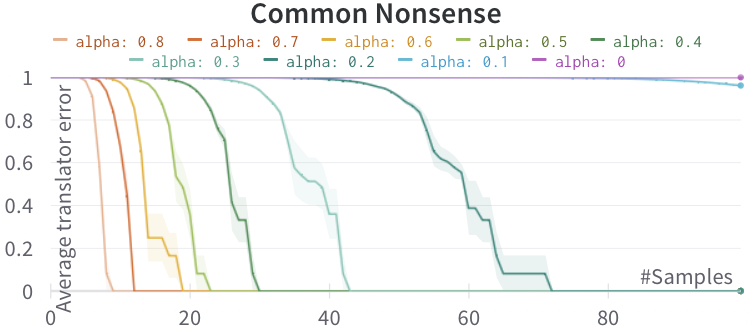}
    \caption{Common Nonsense model. The X-axis is the number of source samples $m$, and the Y-axis is the average error among plausible translators (that have not been ruled-out so far). Each experiment was run on five seeds, with standard error depicted by the shaded area.
    \label{fig:exp_cn}}
\end{figure}

\paragraph{Contributions.} The first contribution of this work is formalizing and analyzing a model of UMT. As an initial work, its value is in the opportunities which it opens for further work more than the finality and tightness/generality of its bounds. Our model applies even to low-resource source languages with massive domain gap and linguistic distance. We emphasize that this work is only \textit{a first step} in the theoretical analysis of UMT (indeed, there is little theoretical work on machine translation in general). Second, we exhibit two simple complementary models for which we prove that: (a) more complex languages require less common ground, and (b) data requirements may not be significantly greater than those of supervised translation (which tends to use less data than training a large LM). These findings may have implications for the quantity and type of communication data that is collected for deciphering animal communication and for UMT more generally. They also give theoretical evidence that UMT can be successful and worth pursuing, in lieu of parallel (supervised) data, in the case of sufficiently complex languages. All of that said, we note that our sample complexity bounds are information theoretic, that is, they do not account for the computational complexity of optimizing the translator.
Finally, animal communication aside, to the best of our knowledge this work is the first theoretical treatment of UMT, and may also shed light on translation between human languages.

\paragraph{Organization.} \looseness=-1 The framework is formally described in \Cref{sec:framework}, and is instantiated with models of language in \Cref{sec:models} (proofs, experiments, and other details deferred to \Cref{ap:stat_sem,apx:thm:tree,ap:smooth,ap:graph,ap:experiments}). Key takeaways from this work are presented in \Cref{sec:discussion}. Related and future work is discussed in \Cref{sec:related,sec:future}. We illustrate how prompting LMs may give priors in \Cref{sec:prior}. \Cref{sec:all-generalizations} sketches a generalization of our framework to the settings of lossy translation and infinite translator families. Lastly, \Cref{sec:supervised} proves sample complexity bounds for the settings of supervised translation.

\section{The General Framework}\label{sec:framework}

We use $f\colon \X \oto \Y$ to denote a 1--1 function, in which $f(x) \ne f(x')$ for all $x \ne x'$. 
For $S \subseteq \X$, we write $f(S) \defeq \{f(x) \mid x \in S\}$.  The indicator $\indicator_P$ is 1 if the predicate $P$ holds, and 0 otherwise. The uniform distribution over a set $S$ is denoted by $\U(S)$, and $\log=\log_2$ denotes base-2 logarithm.

\paragraph{Language and Prior.}
A \emph{source language} is a distribution $\mu$ over a set of possible texts $\X$. Similarly, a \emph{target language} is a distribution $\nu$ over a set of possible texts $\Y$. When clear from context, we associate each language with its corresponding set of possible texts. A \textit{prior} distribution $\rho$ over translations $\Y$ aims to predict the probability of observing each translation. One could naively take $\rho=\nu$, but \Cref{sec:prior} describes how better priors can focus on the domain of interest. Intuitively, $\rho(y)$ measures how ``plausible'' a translation $y$ is. For simplicity, we assume that $\X,\Y\subseteq \{0,1\}^*$ are finite, non-empty sets of binary strings. \Cref{sec:infinite} discusses extensions to infinite sets. 

\paragraph{Translators.} A \emph{translator} is a mapping $f \colon \X \oto \Y$. There is a known set of 1--1 functions $\left\lbrace f_\theta\colon \X \oto \Y \mid \theta \in \Theta \right\rbrace$ with \emph{parameter} set $\Theta$.  Since parameters are assumed to be known and fixed, we will omit them from the theorem statements and algorithm inputs, for ease of presentation. Like $\X, \Y$, the set $\Theta$ is assumed to be finite. \Cref{sec:lossy} considers translators that are not 1--1.

\paragraph{Divergence.}
A translator $f_\theta$ and a distribution $\mu$ induce a distribution over $y=f_\theta(x)$, which we denote by $f_\theta \circ \mu$. The \textit{divergence} between this distribution and $\rho$ is quantified using  the Kullback--Leibler (KL) divergence,
$$\stat(\theta) \defeq \KL{f_\theta \circ \mu}{\rho}=\E_{x \sim \mu}\left[\log \frac{\mu(x)}{\rho\bigl(f_\theta(x)\bigr)}\right]=\sum_x \mu(x) \log \frac{\mu(x)}{\rho\bigl(f_\theta(x)\bigr)} \ge 0.$$
Note that since $\stat(\theta)=\E_{x \sim \mu}\bigl[-\frac{1}{m}\sum \log \rho(f_\theta(x_i))\bigr]-\entropy(\mu)$, and $\entropy(\mu)$ is a constant independent of $\theta$, the $\alg$ of \definitionref{def:alg} approximately minimizes divergence.

\paragraph{Ground truth.} In order to define semantic loss, we consider a \emph{ground-truth translator} $f_\star$ for some $\star \in \Theta$. We can then define the (ground-truth) \emph{translated language} $\tau=f_\star \circ \mu$ over $\Y$, obtained by taking $f_\star(x)$ for $x \sim \mu$.
This is similar to the standard realizability assumption, and some of our bounds resemble Occam bounds with training labels $y_i=f_\star(x_i)$.
Of course, the ground-truth translator $\star$ is \emph{not} known to the unsupervised learning algorithm.
In our setting, we further require that ground-truth translations never have 0 probability under $\rho$:
\begin{definition}[Realizable prior]\label{cond:realizable}  %
$\Pr_{x \sim \mu}[\rho(f_\star(x))=0]=0$, or equivalently $\stat(\star) < \infty$.
\end{definition}

\paragraph{Semantic loss.}
The semantic loss of a translator is defined with respect to a \emph{semantic difference} function $\ell \colon \Y \times \Y \to [0,1]$. This function, unknown to the learner, measures the difference between two texts from the target language $\Y$, with $\ell(y,y) = 0$ for all $y$. For a given semantic difference $\ell$ and ground-truth translator $f_\star$, we define the \emph{semantic loss} of a translator $f_\theta$ by
\[
\loss(\theta)\defeq \E_{x \sim \mu}\bigl[\ell(f_\star(x), f_\theta(x))\bigr].
\]

Of particular interest to us is the \emph{semantic error} $\err(\cdot,\cdot)$, obtained when $\ell$ is taken to be the 0-1 difference $\ell_{01}=(y,y') = 1$ for all $y' \neq y$.
Note that since any semantic difference $\ell$ is upper bounded by $1$, the semantic error upper-bounds any other semantic loss $\loss$. That is,
$$\loss(\theta) \le \err(\theta) \defeq \Pr_{x \sim \mu}[f_\theta(x) \ne f_\star(x)].$$
\Cref{sec:models} analyzes this error $\err(\theta)$, which thus directly implies bounds on $\loss(\theta)$.

\section{Models of Language: Instantiating the General Framework}\label{sec:models}

\subsection{Random knowledge graphs}\label{sec:graph}
In this section, we define a model in which each text represents an edge between a pair of nodes in a knowledge graph. Both languages have knowledge graphs, with the source language weakly agreeing with an unknown subgraph of the target language.

We fix $\X = X \times X = X^2$ and $\Y = Y \times Y = Y^2$ with $r \defeq |X|$ and $n \defeq |Y|$. The set of translators considered is all mappings from the $r$ source nodes to the $n$ target nodes, namely $\Theta_{XY} = \{\theta: X \oto Y\}$ and $f_\theta\bigl((u, v)\bigr) \defeq \bigl(\theta(u), \theta(v)\bigr)$. 
The random knowledge graph is parametrized by the number of source nodes $r$, target node set $\Y$, an edge density parameter $p \in (0, 1)$ representing the expected fraction of edges present in each graph, and an agreement parameter $\alpha \in (0,1]$ representing the correlation between these edges. In particular, $\alpha=1$ corresponds to the case where both graphs agree on all edges, and $\alpha=0$ corresponds to the case where edges in the graphs are completely independent. These parameters are unknown to the learner, who only knows $X$ and  $Y$ (and thus $\X=X^2, \Y=Y^2$).

\begin{definition}[Random knowledge graph]\label{def:KG}
For a natural number $r \leq |Y|$, the $\KG=\KG(Y, r, p, \alpha)$ model determines a distribution over sets $T, P \subseteq \Y$ (which determine distributions $\rho$ and $\mu$). The sets $T$ and $P$ are sampled as follows:
\begin{enumerate}
    \item Set $P \subseteq \Y$ is chosen by including each edge $y \in \Y$ with probability $p$, independently.
    \item Set $S \subseteq Y$ of size $|S|=r$ is chosen uniformly at random.
    \item Set $T \subseteq S^2$ is chosen as follows. For each edge $y \in S^2$, independently,
    \begin{enumerate}
        \item With probability $\alpha$, $y \in T$ if and only if $y \in P$.
        \item With probability $1-\alpha$, toss another $p$-biased coin and add $y$ to $T$ if it lands on ``heads''; that is, %
        $y \in T$ with probability $p$, independently.
    \end{enumerate}
\end{enumerate}

\end{definition}
It is easy to see that $T\subseteq S^2$ and $P\subseteq Y^2$ marginally represent the edges of \ErdosRenyi random graphs $G_{r,p}$ and $G_{n,p}$, respectively. Moreover, the event that $y \in T$ is positively correlated with $y \in P$: for each $y \in S^2$, since with probability $\alpha>0$ they are identical and otherwise they are independent. Formally, the equations below describe the probability of $y \in T$ for each $y \in S^2$ after we fix $S$ and choosing $T \subseteq S^2$.  Letting $q \defeq (1-p)$, for each $y \in S^2$:
\begin{align}
    \Pr[ y \in T] &= \Pr[y \in P] = p\label{eq:pq0}\\ 
    \Pr[ y \in T \setminus P] &= \Pr[y \in P \setminus T] =(1-\alpha)pq\label{eq:pq1}\\ 
    \Pr[ y \notin P \mid y \in T] &= \Pr[ y \notin T \mid y \in P] = \frac{(1-\alpha)pq}{p} = (1-\alpha)q\label{eq:pq2}
\end{align}
The last equality, shows that the probability of excluding a random $y \in T$ from $P$ is smaller than the probability of excluding a random ``incorrect translation'' $y' \ne y$, $\Pr[y' \notin P]=q > (1-\alpha)q$.

We now describe how $\rho, \tau,$ are determined from $T, P$ and how $\mu, \star$ may be chosen to complete the model description. The ground-truth target translated distribution $\tau\defeq \U(T)$ is uniform over $T$.  The prior $\rho$ is uniform over $P$, and then ``smoothed'' 
over the rest of the domain $\Y$. Formally,
\begin{equation*}
    \rho(y) \defeq \begin{cases}
        \frac{1}{2} \cdot \left( \frac{1}{|P|} + \frac{1}{|\Y|}\right) &\text{if }y \in P \\
        \frac{1}{2|\Y|} &\text{if }y \notin P.
    \end{cases}
\end{equation*}
The ground-truth translator $\star \in \Theta$ is obtained by sampling a uniformly random $\star: X\oto S$.
Lastly, we take $\mu=\U(f_\star^{-1}(T))$, which agrees with the definition of $\tau$.%
\footnote{Formally, the KG model outputs $T, P$ which may not determine $S$ if some nodes have zero edges. In that case, we choose $\star$ randomly among $\theta$ such that $f_\theta(\X) \supseteq T$. In the exponentially unlikely event that either $S$ or $T$ is empty, we define both $\tau,\rho$ to be the singleton distribution concentrated on $(y,y)$ for the lexicographically smallest $y \in Y$ and $\mu$ to concentrated on $(x, x)$ for $x = f_\star^{-1}(y)$. It is not difficult to see that $\alg$ selects a translator with 0 error.} %

Next, we state the main theorem for this model, formalizing \Cref{thm:intro_kg} from the introduction.

\begin{theorem}[Translatability in the $\KG$ model]\label{thm:KG}
Fix any $m \ge 1$, $\emptyset \neq S \subseteq Y, \delta, \alpha, p \in (0,1)$, and let $r \defeq |S|, n \defeq |Y|, q=1-p$. Then, with probability $\ge 1-\delta$ over $T, P$ from $\KG(S, Y, p, \alpha)$,
$$\err\bigl(\hat\theta\bigr) \le \max\left( \frac{64}{\alpha^2 pq^2 r^2}\ln \frac{6n^{r}}{\delta},  \frac{2}{\alpha q}\sqrt{\frac{2}{m}\ln \frac{6n^{r}}{\delta}}\right),$$
where $\hat\theta =\alg^\rho(x_1,x_2,\ldots, x_m)$ is from \definitionref{def:alg}. Simply, for $p<0.99$, with probability $\ge 0.99$,
$$\err(\theta) = O\!\left(\frac{\log n}{\alpha^2 p r}+ \frac{1}{\alpha}\sqrt{\frac{r \log n}{m}}\right).$$
\end{theorem}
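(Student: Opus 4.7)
My plan is to exploit the fact that $\rho$ takes only two values---$\rho_P = \tfrac{1}{2}(1/|P|+1/|\Y|)$ on $P$ and $\rho_{\bar P} = 1/(2|\Y|)$ off $P$---so that the log-likelihood is linear in the hit count $\hat{N}(\theta) \defeq |\{i : f_\theta(x_i) \in P\}|$ and thus $\alg$ equivalently maximizes $\hat{N}$. I will compare each candidate $\theta$ to $\star$ through the population version $N(\theta) \defeq |T_\theta \cap P|$, where $T_\theta \defeq f_\theta(f_\star^{-1}(T))$, and use Hoeffding on the $m$ i.i.d.\ samples to pass between $\hat{N}/m$ and $N/|T|$.

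For the combinatorial core, write $\phi \defeq \theta \circ f_\star^{-1} \colon S \oto Y$, let $F = \{s \in S : \phi(s) = s\}$ be its fixed-point set, and let $U \defeq S^2 \setminus F^2$ denote the set of source-edges that $\phi$ moves. A direct change of variables yields
\[
N(\star) - N(\theta) \;=\; \sum_{z \in U} \indicator_{z \in T}\bigl(\indicator_{z \in P} - \indicator_{\phi(z) \in P}\bigr).
\]
Since $z \ne \phi(z)$ for $z \in U$, the $\KG$ model makes the $(T,P)$-status of distinct edges independent, so each summand has mean $p\bigl(1-(1-\alpha)q\bigr) - p\cdot p = p\alpha q$, whence $\E[N(\star) - N(\theta)] = p\alpha q \cdot |U|$. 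Intuitively, $\alpha q$ is the useful ``correlation gap'' between the chance that a true edge lies in $P$ and the chance that a spurious one does.

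The main technical obstacle is a high-probability version of this estimate with the correct $p$-dependence. The summands are not fully independent, but since $\phi$ is 1--1 each underlying Bernoulli enters at most three of them, so the dependency graph has max degree $2$; I 3-color it to partition the sum into three blocks of mutually independent summands, and apply Bernstein's inequality to each. The crucial point is that each summand has second moment at most $p$ (it vanishes unless $z \in T$), so Bernstein yields
\[
\Pr\!\left[N(\star) - N(\theta) \le \tfrac{1}{2}p\alpha q|U|\right] \;\le\; 3\exp\!\bigl(-\Omega(p\alpha^2 q^2 |U|)\bigr).
\]
This variance-aware refinement over McDiarmid is precisely what produces the $1/(pq^2)$ rather than $1/(p^2q^2)$ scaling in the first stated term. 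Union-bounding over the $|\Theta| \le n^r$ candidates then rules out every $\theta$ with $|U(\theta)| \gtrsim \ln(n^r/\delta)/(p\alpha^2 q^2)$.

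To assemble the final bound I combine the above with Hoeffding on the $m$ samples (union-bounded over $\Theta$) to obtain $|\hat{N}(\theta)/m - N(\theta)/|T|| = O\bigl(\sqrt{\ln(n^r/\delta)/m}\bigr)$ uniformly in $\theta$. Since $\alg$ maximizes $\hat{N}$, the selected $\hat\theta$ satisfies $N(\star) - N(\hat\theta) \le 2|T|\sqrt{\ln(n^r/\delta)/(2m)}$; combining with the population gap forces $|U(\hat\theta)|$ either to lie below the first-term threshold or to satisfy the matching $m$-dependent bound $|U(\hat\theta)| = O\bigl(|T|/(p\alpha q) \cdot \sqrt{\ln(n^r/\delta)/m}\bigr)$. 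A routine multiplicative Chernoff bound on the binomials $|T|$ and $|T \cap U|$ (both with parameter $p$) converts each $|U|$-bound into an $\err$-bound via $\err(\theta) = |T \cap U|/|T| \approx |U|/r^2$, yielding the stated maximum; the final probability $\ge 1 - \delta$ comes from apportioning $\delta/6$ to each of the six bad events used along the way.
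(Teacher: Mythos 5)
Your proposal is correct and tracks the paper's Lemma~\ref{lem:BC} closely, with two presentational differences worth noting. First, you observe that $N(\star)-N(\theta)=\sum_{z\in U}\indicator_{z\in T}\bigl(\indicator_{z\in P}-\indicator_{\phi(z)\in P}\bigr)$ has per-summand mean $p\alpha q$ and apply Bernstein with the variance bound $\le p$; the paper instead conditions on $A=A'\cap T$ and applies plain Chernoff to $|A|$-many conditionally independent bits. Both devices achieve exactly the same $1/p$ (rather than $1/p^2$) scaling, since conditioning on $A$ shrinks the effective sample to $\approx p|U|$ just as your variance bound does. Your single-sum formulation is algebraically equivalent to the paper's split into $|B|$ and $|C|$ (indeed $N(\star)-N(\theta)=|B|-|C|$), and your 3-coloring of the degree-$\le 2$ dependency graph is the same idea as the paper's Lemma~\ref{lem:trisect}. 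Two details in your sketch are glossed over. One: the claim $\Pr\bigl[N(\star)-N(\theta)\le\tfrac12 p\alpha q|U|\bigr]\le 3\exp(-\Omega(p\alpha^2q^2|U|))$ does not follow directly after the 3-coloring if the color classes are unbalanced (a singleton class can fail with constant probability); you must instead allocate a $\delta/3$ failure budget to each class and add the per-class deviation terms via Cauchy--Schwarz, as the paper does. Two: converting your bound on $|U(\hat\theta)|$ into one on $\err(\hat\theta)=|A(\hat\theta)|/|T|$ is not quite a ``routine'' multiplicative Chernoff application, because for $\theta$ with $p|U(\theta)|\ll\ln(|\Theta|/\delta)$ the multiplicative bound on $|A(\theta)|$ is vacuous; you need a uniform-over-$\Theta$ bound of the form $|A(\theta)|\lesssim\max\bigl(p|U(\theta)|,\ln(|\Theta|/\delta)\bigr)$, which does work out but is exactly what the paper sidesteps by stating Lemma~\ref{lem:BC} directly in the variable $\eps=|A|/|T|$. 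Neither gap is fatal, and the rest of your plan---two-valued $\rho$ so that $\alg$ maximizes hit counts, Hoeffding plus union bound over $|\Theta|\le n^r$ for the empirical gap, and the final max of two terms---matches the paper's proof.
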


The proof, given in \Cref{ap:graph}, requires generalizing our theory to priors that have full support. Experimental validation of the theorem is described in \Cref{ap:experiments}.

\subsection{Common nonsense model}\label{sec:smooth}
We next perform a ``smoothed analysis'' of arbitrary LMs $\mu, \rho$ that are uniform over sets that share a small amount of randomness, i.e., a small common random set has been removed from both. This shared randomness captures the fact that some texts are implausible in both languages and that this set has some complex structure determined by the laws of nature, which we model as random.

The $\cnParam$-common-nonsense distribution is a meta-distribution over pairs $(\mu, \rho)$ which themselves are uniform distributions over perturbed versions of $P, T$. This is inspired by Smoothed Analysis \citep{spielmanTeng09}. Recall that $\U(S)$ denotes the uniform distribution over the set $S$.
\begin{definition}[Common nonsense]\label{def:smooth}
The $\cnParam$-common-nonsense distribution $\D_\cnParam^{P,T}$ with respect to nonempty sets $T \subseteq P \subseteq \Y$ is the distribution over $\bigl(\rho=\U(P \cap S), \tau=\U(T\cap S)\bigr)$ where $S \subseteq \Y$ is formed by removing each $y\in \Y$ with probability $\cnParam$, independently.\footnote{Again, in the exponentially unlikely event that either $P \cap S$ or $T \cap S$ is empty, we define both $\tau,\rho$ to be the singleton distribution concentrated on the lexicographically smallest element of $\Y$, so $\alg$ outputs a 0-error translator.}
\end{definition}
To make this concrete in terms of a distribution $\mu$ on $\X$, for any ground-truth translator $f_\star: \X \oto \Y$, we similarly define a distribution $\D_{\cnParam, \star}^{P, T}$ over $(\mu, \rho)$ where $\mu \defeq \U(f_\star^{-1}(T \cap S))$ is the uniform distribution over the subset of $\X$ that translates into $\tau$. We now state the formal version of \Cref{thm:intro_nonsense}.
\begin{theorem}[Translatability in the $\CN$ model]\label{thm:smooth}
Let $\{f_\theta:\X \hookrightarrow \Y \mid \theta \in \Theta\}$ a family of translators, $\star \in \Theta$, $\cnParam, \delta \in (0,1/2]$, $T\subseteq P \subseteq \Y$, and $m \ge 1$. Then with probability $\ge 1-\delta$, $\alg$ run on $\rho$ and $m \ge 1$ iid samples from $\mu$ outputs $\hat\theta$ with, 
\[\err(\hat\theta) \le \frac{6}{\cnParam} \max \left(\frac{1}{m}, \frac{16}{|T|}\right) \cdot \ln \frac{6|\Theta|}{\delta}.\]
Note that the probability is over both $(\mu, \rho)$ drawn from $\D_{\cnParam, \star}^{P,T}$, and the $m$ iid samples from $\mu$. More simply, with probability $\ge 0.99$,
\[\err(\hat\theta) = O\!\left(   \frac{\log |\Theta|}{\cnParam \min(m, |T|)}\right).\]
\end{theorem}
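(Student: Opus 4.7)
The plan is to bound the probability that $\alg$ returns a translator with high error by showing every ``bad'' $\theta$ is eliminated by some sample with high probability. Because $\rho$ is uniform on $P \cap S$, a translator $\theta$ achieves finite empirical log-loss iff $f_\theta(x_i) \in P \cap S$ for every sample $i$; since $f_\star(x_i) \in T \cap S \subseteq P \cap S$ always, $\star$ is a minimizer and $\hat\theta$ lies in the candidate set $\{\theta : f_\theta(x_i) \in P \cap S \text{ for all } i\}$. It therefore suffices to show that, with probability $\ge 1-\delta$ over both the random set $S$ and the samples, no $\theta$ with $\err(\theta) > \epsilon$ is a candidate, for $\epsilon = O\!\bigl(\log(|\Theta|/\delta)/(\alpha \min(m,|T|))\bigr)$.

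Fix $\theta \ne \star$ and set $\tilde D_\theta \defeq \{x : f_\star(x) \in T,\ f_\theta(x) \ne f_\star(x)\}$, $k \defeq |\tilde D_\theta|$ (deterministic in $\theta$). The samples that can rule out $\theta$ are exactly those lying in $V_\theta \defeq \{x \in \tilde D_\theta : z_{f_\star(x)} = 1 \text{ and } f_\theta(x) \notin P \cap S\}$, where $z_y \defeq \mathbf{1}_{y \in S}$. The key combinatorial observation is that, because both $f_\star$ and $f_\theta$ are injective, each variable $z_y$ appears in the predicate ``$x \in V_\theta$'' for at most two elements $x \in \tilde D_\theta$. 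Hence the conflict graph on vertex set $\tilde D_\theta$, with an edge between $x, x'$ whenever they share a $z$-variable, has maximum degree $2$ and thus admits an independent set $I$ of size $|I| \ge k/3$. Within $I$ the events $\{x \in V_\theta\}_{x \in I}$ are mutually independent, each with marginal probability $(1-\alpha)\alpha$ (if $f_\theta(x) \in P$) or $1-\alpha$ (otherwise), both at least $\alpha/2$ since $\alpha \le 1/2$.

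The central estimate combining both randomness sources is, for each $\theta \ne \star$:
\begin{equation*}
\Pr[\theta \text{ is a candidate}] = \E_S\!\left[\bigl(1 - |V_\theta|/|X_S|\bigr)^m\right] \le \E_S\!\left[\exp(-m V^*/|T|)\right] = \prod_{x \in I}\bigl(1 - p_x(1 - e^{-m/|T|})\bigr),
\end{equation*}
where $V^* \defeq |I \cap V_\theta|$, the inequality uses $|X_S| \le |T|$ and $V^* \le |V_\theta|$, and the product uses independence within $I$ with $p_x \ge \alpha/2$. The elementary bound $1 - e^{-m/|T|} \ge \min(m,|T|)/(2|T|)$ then collapses this to $\exp\!\bigl(-\alpha k \min(m,|T|)/(12|T|)\bigr)$. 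A Chernoff bound (applied to $|X_S| = |T \cap S|$) gives $|X_S| \ge |T|/4$ with probability $1 - e^{-\Omega(|T|)}$, so any $\theta$ with $k \le \epsilon|T|/8$ automatically satisfies $\err(\theta \mid S) \le k/|X_S| \le \epsilon$; for $k \ge \epsilon|T|/8$ the above estimate gives candidacy probability $\le \exp(-\Omega(\alpha\epsilon\min(m,|T|)))$.

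A union bound over $\theta \in \Theta$ then closes out the argument once $\alpha\epsilon\min(m,|T|) \ge C\log(|\Theta|/\delta)$ for a suitable constant. The main obstacle I anticipate is the uniform-in-$\theta$ combination of the two randomness sources and two regimes ($m \le |T|$ versus $m > |T|$); the factored expectation above handles both seamlessly via the $\min(m,|T|)$ factor, which is the key trick. A minor subtlety is that the ``$f_\theta(x) \notin P$'' elements of $\tilde D_\theta$ contribute constraints depending on only one $z$-variable, but the bookkeeping in the conflict graph absorbs them without loss since $1 - \alpha \ge \alpha/2$ too.
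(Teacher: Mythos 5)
Your proof is correct, and it takes a genuinely different route from the paper while hitting the same core obstacle: the $z$-variables $\indicator_{y\in S}$ that witness disagreements of a bad $\theta$ are not jointly independent, because a single $z_y$ can be shared across two coordinates (once as $f_\star(x)$, once as $f_\theta(x')$). The paper handles this on the $\Y$ side: it abstracts the sample randomness away first via the plausible-ambiguity machinery (\Cref{def:nap0}, \Cref{thm:stat_sem0}), and then in \Cref{lem:smooth_gamma} partitions $\{y : \pi_\theta(y) \neq y\}$ into three sets with $\pi(A_i)\cap A_i=\emptyset$ using cycle decomposition (\Cref{lem:trisect}), conditions on $A_i\cap S$, and Chernoff-bounds the remaining independent variables. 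You instead work on the $\X$ side, building a conflict graph on $\tilde D_\theta$ whose maximum degree is $2$ (by injectivity of $f_\star$ and $f_\theta$), extracting an independent set of size $\ge k/3$ — the same factor-$3$ loss as the trisection, obtained by a different combinatorial argument. The other substantive difference is that you fold both sources of randomness into a single factored expectation $\E_S\bigl[(1-|V_\theta|/|X_S|)^m\bigr]\le \prod_{x\in I}\bigl(1-p_x(1-e^{-m/|T|})\bigr)$ and extract the $\min(m,|T|)$ factor in one stroke via $1-e^{-u}\ge \min(u,1)/2$, whereas the paper sets $\gamma = \max(1/m, O(1/|T|))\log(|\Theta|/\delta)$ and splits the failure probability between the two sources. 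Your version is more self-contained and handles the two regimes uniformly; the paper's version amortizes the sample-side argument into a lemma that is reused verbatim for the tree model. Both give the same asymptotic bound with comparable constants.
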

When the amount of shared randomness $\cnParam$ is a constant, then this decreases asymptotically like the bound of supervised translation (\Cref{thm:occam}) up until a constant, similar to \Cref{thm:KG}.
For very large $m$, each extra bit describing the translator (increase by 1 in $\log |\Theta|$) amounts to a constant number of mistranslated $x$'s out of all $\X$. The proof is deferred to  \Cref{ap:smooth}.

We also prove the following lower-bound that is off by a constant factor of the upper bound.
\begin{theorem}[$\CN$ lower-bound]\label{thm:sa-lower}
There exists constants $c_1, c_2\ge 1$ such that: for any set $T \subseteq \Y$, for any $m \ge 1$, any $\alpha \in (0, 1/2]$, and any $\Theta$ with $c_1 \le \log |\Theta| \le \alpha \min(m, |T|)$, there exists $\Theta'$ of size $|\Theta'| \le |\Theta|$ such that, for any $P \supseteq T$ and any algorithm $A^\rho: \X^m \rightarrow \Theta'$,
with probability $\ge 0.99$ over $\star \sim \U(\Theta')$ and $(\mu, \rho)$ drawn from $\D_{\alpha, \star}^{P,T}$ and $x_1,\ldots, x_m \sim \mu$,  
$$\err\bigl(\hat\theta\bigr) \ge  \frac{\log |\Theta|}{c_2 \alpha \min(m, |T|)},$$
where $\hat\theta = A^\rho(x_1, x_2, \ldots, x_m)$.
\end{theorem}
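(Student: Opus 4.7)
The plan is to build a hard family $\Theta'$ of block-permutation translators indexed by short bit strings, and then apply an information-theoretic symmetry argument. Set $n^\ast \defeq \min(m,|T|)$, $L \defeq \lfloor \log_2 |\Theta| \rfloor$, and $B \defeq \lfloor 1/\alpha \rfloor \ge 2$ (using $\alpha \le 1/2$), so that $LB \le L/\alpha \le n^\ast \le |T|$ by the hypothesis $\log|\Theta| \le \alpha n^\ast$. Pick pairwise-disjoint subsets $g_1,\ldots,g_L \subseteq T$ with $|g_i| = B$, put $\X$ in bijection with $\bigcup_i g_i$ via a fixed map $\phi$, and fix a $B$-cycle $c_i$ on each $g_i$. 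For each $b \in \{0,1\}^L$ define $f_b \defeq \pi_b \circ \phi$, where $\pi_b$ is the permutation of $\bigcup_i g_i$ that acts as $c_i$ on $g_i$ when $b_i = 1$ and as the identity when $b_i = 0$. Take $\Theta' \defeq \{f_b : b \in \{0,1\}^L\}$; sampling $\star \sim \U(\Theta')$ corresponds to sampling $b^\star \sim \U(\{0,1\}^L)$, and any algorithm's output is some $\hat\theta = f_{\hat b}$.

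The key observation is that if the group $g_i$ lies entirely inside the surviving set $S$ from \Cref{def:smooth}, then $\pi_{b^\star}^{-1}(T \cap S) \cap g_i = g_i$ regardless of $b^\star_i$, because $\pi_{b^\star}$ permutes $g_i$ within itself and $g_i \subseteq T \cap S$. Hence $\supp(\mu) = f_\star^{-1}(T \cap S)$, and therefore $\mu$ itself, is unchanged by flipping $b^\star_i$; since $\rho$ depends only on $P \cap S$ (not on $b^\star$ at all), the joint distribution of the full observation---$m$ iid samples from $\mu$ together with the prior---is invariant under any reassignment of $\{b^\star_i\}_{i \in I}$, where $I \defeq \{i : g_i \subseteq S\}$. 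By Bayes the posterior of $\{b^\star_i\}_{i \in I}$ given the observation is uniform and independent, so for any algorithm the events $\{\hat b_i \ne b^\star_i\}$ are iid $\mathrm{Bernoulli}(1/2)$ across $i \in I$.

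Three Chernoff bounds then finish the argument. First, $|I| \sim \mathrm{Binomial}(L, (1-\alpha)^B)$ with $(1-\alpha)^B \ge (1-\alpha)^{1/\alpha} \ge 1/4$ for $\alpha \in (0, 1/2]$, so $|I| \ge L/8$ with high probability. Second, conditional on $|I|$, the count $W \defeq |\{i \in I : \hat b_i \ne b^\star_i\}|$ is $\mathrm{Binomial}(|I|, 1/2)$, so $W \ge |I|/3$ with high probability. Third, $|\supp(\mu)| = \sum_i |g_i \cap S|$ concentrates around $LB(1-\alpha) \le LB$. Each wrong bit $i \in I$ forces $f_{\hat\theta}$ and $f_\star$ to disagree on all $B$ points of $\phi^{-1}(g_i) \subseteq \supp(\mu)$ (since $c_i$ has no fixed point), so $\err(\hat\theta) \ge WB/|\supp(\mu)| = \Omega(1)$. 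Because the hypothesis forces $\log |\Theta|/(c_2 \alpha n^\ast) \le 1/c_2$, choosing a sufficiently large absolute constant $c_2$ gives $\err(\hat\theta) \ge \log |\Theta|/(c_2 \alpha n^\ast)$, and taking $c_1$ large enough that each of the three Chernoff events fails with probability at most $0.003$ yields the required $0.99$ overall probability by a union bound.

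The main obstacle is the symmetry step: verifying carefully that the joint distribution of $m$ iid samples together with the prior $\rho$ is invariant under flipping the bits of $b^\star$ indexed by $I$, even though the observations do carry information about the bits $b^\star_j$ for ``split'' groups $j \notin I$. Once this symmetry is established---and it holds uniformly in the given $P \supseteq T$ because $\rho$ is a function of $P \cap S$ alone---the remainder is routine Chernoff bookkeeping plus the standard move of driving error from unlearnable bits.
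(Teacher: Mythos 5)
Your construction is clean and the symmetry argument at its core is correct and elegant, but there is a genuine gap: the proof effectively shrinks $\X$ to size $LB$ by putting it in bijection with $\bigcup_i g_i$. That is not permitted. In the common-nonsense model, $\X$ is fixed externally, $\tau = \U(T\cap S)$, and $\mu = \U(f_\star^{-1}(T\cap S))$, so consistency ($f_\star\circ\mu = \tau$) requires $f_\star(\X) \supseteq T\cap S$, hence $|\supp(\mu)| = |T\cap S| \approx (1-\alpha)|T| \ge |T|/2$. Plugging that correct denominator into your bound gives $\err(\hat\theta) \gtrsim WB/|T| \approx \tfrac{\log|\Theta|}{\alpha|T|}$, not $\Omega(1)$.

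That bound suffices when $\min(m,|T|) = |T|$, but fails when $m \ll |T|$: there the theorem demands $\err(\hat\theta) \gtrsim \tfrac{\log|\Theta|}{\alpha m}$, which is larger by a factor $|T|/m$, and your fixed group size $B = \lfloor 1/\alpha\rfloor$ cannot produce it. Enlarging $B$ to $\Theta(|T|/(\alpha m))$ to compensate destroys your symmetry event: the probability that an entire group $g_i$ survives in $S$ is $(1-\alpha)^B$, which vanishes for $B$ this large, so the set $I$ of ``fully unlearnable'' indices is typically empty. The paper's proof handles this regime differently (its Case 2): it uses the larger group size, relaxes ``full row'' to ``at least half the row in $S$'' (the dangerous rows $D(S)$), and then argues that with only $m \ll |T|$ samples the algorithm is unlikely to hit any of the $O(\alpha(1-\alpha)ab)$ ``giveaway'' points $x_{ij}$ for which $f_\star(x_{ij})\in S$ but $f_{-\star}(x_{ij})\notin S$. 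That sampling argument, absent from your proposal, is what makes the $m$ (rather than $|T|$) appear in the denominator. So your argument essentially reproves the paper's Case 1 via a posterior-uniformity lens rather than Chernoff counting, but a second, substantively different argument is still needed to cover $m \ll |T|$.
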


The only purpose of $\Theta$ in the above theorem is to upper-bound the description length of translators, as we replace it with an entirely different (possibly \textit{smaller}) translator family $\Theta'$ that still has the lower bound using $\log |\Theta| \ge \log |\Theta'|$. Since $\U(\Theta')$ is the uniform distribution over $\Theta'$, the ground-truth classifier is uniformly random from $\Theta'$. A requirement of the form $\log |\Theta| =O\bigl(\alpha\min(m, |T|)\bigr)$ is inherent as otherwise one would have an impossible right-hand side error lower-bound greater than 1, though the constants could be improved.

The proof of this theorem is given in \Cref{ap:lower}, and creates a model with $O(\log n)$  independent ``ambiguities'' that cannot be resolved, with high probability over $S, x_1, x_2, \ldots, x_m$. Experimental validation of the theorem is described in \Cref{ap:experiments}.

\section{Discussion}\label{sec:discussion}
We have given a framework for unsupervised translation and instantiated it in two stylized models. Roughly speaking, in both models, the error rate is inversely related to the amount of samples, common ground, and the language complexity. The first two relations are intuitive, while the last is perhaps more surprising. All error bounds were \emph{information-theoretic}, meaning that they guarantee a learnable accurate translator, but learning this translator might be computationally intensive.

In both models, the translators are \textit{restricted}. In the knowledge graph, the translators must operate node-by-node following an assumed compositional language structure.\footnote{That is, we assume that each translator has a latent map from nodes in the source graph into nodes in the target graph, and edges are mapped from the source to target graphs in the natural way. The study of compositional communication systems, among humans and animals, has played a central role in linguistics \citep{zuberbuhler2020syntax}.} In the common nonsense model, the restriction is based on the translator description bit length $\log |\Theta|$. To illustrate how such restrictions can be helpful, consider \textit{block-by-block} translators which operate on limited contexts (e.g., by paragraph). Consider again the hypothetical example of \Cref{fig:example}. Suppose the three texts are outputs of three translators $\Theta = \{A, B, C\}$. Let us suppose that translator A always produces accurate and natural translations, and further that all translators work paragraph-by-paragraph, as modern translation algorithms operate within some limited context window. In fact, one can imagine the translators of different paragraphs as a set of \emph{isolated adversaries}
where each adversary is trying to mistranslate a paragraph, knowing the ground-truth translation of their paragraph, while attempting to maintain the plausibility of the entire translation. If only the first-paragraph adversary mistranslates \literal{reef} to \literal{ocean basin}, then the translation lacks coherence and is unlikely. If the adversaries are in cahoots and coordinate to all translate \literal{reef} to \literal{ocean basin}, they would generate: \literal{Have you seen mom? I just returned from the ocean basin. At the basin, there were a lot of sea turtles.} which has low probability $\approx 10^{-25}$, presumably because encoded in GPT-3's training data is the knowledge that there are no turtles deep in the ocean near the basin. While the adversary could also decide to change the word \literal{turtle} to something else when it appears near \literal{basin}, eventually it would get caught in its ``web of deceit.'' The intuition is that, across sufficiently many translations, the prior will not ``rule out'' the ground-truth translations while very incorrect translators will be ruled out.

\paragraph{Judging success.} 
Our analysis sheds some light on whether it is even possible to tell if translation without parallel data (UMT) is successful. A positive sign would be if millions of translations are fluent English accounts that are consistent over time across translations. In principle, however, this is what LM likelihood should measure (excluding consistencies across translations which sufficiently powerful LMs may be able to measure better than humans). We also considered a statistical distance (KL divergence) between the translations $f_{\hat\theta}(x)$ for $x\sim \mu$ and the prior $y \sim \rho$, and $\mu$ could be estimated given enough samples. If this distance is close to zero, then one can have predictive accuracy regardless of whether the translations are correct. This raises a related philosophical quandary: a situation in which two beings are communicating via an erroneous translator, but both judge the conversation to be natural.

\begin{ack}
    We thank Madhu Sudan, Yonatan Belinkov and the entire Project CETI team, especially Pratyusha Sharma, Jacob Andreas, Gašper Beguš, Michael Bronstein, and Dan Tchernov for illuminating discussions. This study was funded by Project CETI via grants from Dalio Philanthropies and Ocean X; Sea Grape Foundation; Rosamund Zander/Hansjorg Wyss, Chris Anderson/Jacqueline Novogratz through The Audacious Project: a collaborative funding initiative housed at TED.
\end{ack}

\bibliography{arxiv}

\begin{thebibliography}{39}
\providecommand{\natexlab}[1]{#1}
\providecommand{\url}[1]{\texttt{#1}}
\expandafter\ifx\csname urlstyle\endcsname\relax
  \providecommand{\doi}[1]{doi: #1}\else
  \providecommand{\doi}{doi: \begingroup \urlstyle{rm}\Url}\fi

\bibitem[Ahyong et~al.(2022)Ahyong, Boyko, Bailly, Bernot, Bieler, Brandão,
  Daly, De~Grave, Gofas, Hernandez, Hughes, Neubauer, Paulay, Decock, Dekeyzer,
  Vandepitte, Vanhoorne, Adlard, Agatha, Ahn, Akkari, Alvarez, Amorim,
  Anderberg, Anderson, Andrés~Sánchez, Ang, Antic, Antonietto, Arango,
  Artois, Atkinson, Auffenberg, Baldwin, Bank, Barber, Barbosa, Bartsch,
  Bellan-Santini, Bergh, Berta, Bezerra, Blanco, Blasco-Costa, ..., and
  Zullini]{WoRMS20221022}
S.~Ahyong, C.B. Boyko, N.~Bailly, J.~Bernot, R.~Bieler, S.N. Brandão, M.~Daly,
  S.~De~Grave, S.~Gofas, F.~Hernandez, L.~Hughes, T.A. Neubauer, G.~Paulay,
  W.~Decock, S.~Dekeyzer, L.~Vandepitte, B.~Vanhoorne, R.~Adlard, S.~Agatha,
  K.J. Ahn, N.~Akkari, B.~Alvarez, V.~Amorim, A.~Anderberg, G.~Anderson,
  S.~Andrés~Sánchez, Y.~Ang, D.~Antic, L.S.. Antonietto, C.~Arango,
  T.~Artois, S.~Atkinson, K.~Auffenberg, B.G. Baldwin, R.~Bank, A.~Barber, J.P.
  Barbosa, I.~Bartsch, D.~Bellan-Santini, N.~Bergh, A.~Berta, T.N. Bezerra,
  S.~Blanco, I.~Blasco-Costa, ..., and A.~Zullini.
\newblock World register of marine species (worms).
\newblock \url=https://www.marinespecies.org, 2022.
\newblock URL \url{https://www.marinespecies.org}.
\newblock Accessed: 2022-10-22.

\bibitem[Andreas et~al.(2022)Andreas, Beguš, Bronstein, Diamant, Delaney,
  Gero, Goldwasser, Gruber, {de Haas}, Malkin, Pavlov, Payne, Petri, Rus,
  Sharma, Tchernov, Tønnesen, Torralba, Vogt, and Wood]{cetiRoadmap}
Jacob Andreas, Gašper Beguš, Michael~M. Bronstein, Roee Diamant, Denley
  Delaney, Shane Gero, Shafi Goldwasser, David~F. Gruber, Sarah {de Haas},
  Peter Malkin, Nikolay Pavlov, Roger Payne, Giovanni Petri, Daniela Rus,
  Pratyusha Sharma, Dan Tchernov, Pernille Tønnesen, Antonio Torralba, Daniel
  Vogt, and Robert~J. Wood.
\newblock Toward understanding the communication in sperm whales.
\newblock \emph{iScience}, 25\penalty0 (6):\penalty0 104393, 2022.
\newblock ISSN 2589-0042.
\newblock \doi{https://doi.org/10.1016/j.isci.2022.104393}.
\newblock URL
  \url{https://www.sciencedirect.com/science/article/pii/S2589004222006642}.

\bibitem[Anthes(2022)]{translatingAnimals}
Emily Anthes.
\newblock The animal translators.
\newblock \emph{The New York Times}, Aug 2022.
\newblock URL
  \url{https://www.nytimes.com/2022/08/30/science/translators-animals-naked-mole-rats.html}.

\bibitem[Artetxe et~al.(2018)Artetxe, Labaka, and
  Agirre]{artetxe-etal-2018-robust}
Mikel Artetxe, Gorka Labaka, and Eneko Agirre.
\newblock A robust self-learning method for fully unsupervised cross-lingual
  mappings of word embeddings.
\newblock In \emph{Proceedings of the 56th Annual Meeting of the Association
  for Computational Linguistics (Volume 1: Long Papers)}, pages 789--798,
  Melbourne, Australia, July 2018. Association for Computational Linguistics.
\newblock \doi{10.18653/v1/P18-1073}.
\newblock URL \url{https://aclanthology.org/P18-1073}.

\bibitem[Artetxe et~al.(2019)Artetxe, Labaka, and Agirre]{ArtetxeLA19}
Mikel Artetxe, Gorka Labaka, and Eneko Agirre.
\newblock Unsupervised neural machine translation, a new paradigm solely based
  on monolingual text.
\newblock \emph{Proces. del Leng. Natural}, 63:\penalty0 151--154, 2019.
\newblock URL
  \url{http://journal.sepln.org/sepln/ojs/ojs/index.php/pln/article/view/6107}.

\bibitem[Babai et~al.(1980)Babai, Erdo˝s, and Selkow]{bes80}
L\'{a}szl\'{o} Babai, Paul Erdo˝s, and Stanley~M. Selkow.
\newblock Random graph isomorphism.
\newblock \emph{SIAM Journal on Computing}, 9\penalty0 (3):\penalty0 628--635,
  1980.
\newblock \doi{10.1137/0209047}.
\newblock URL \url{https://doi.org/10.1137/0209047}.

\bibitem[Baziotis et~al.(2020)Baziotis, Haddow, and Birch]{BaziotisHB20}
Christos Baziotis, Barry Haddow, and Alexandra Birch.
\newblock Language model prior for low-resource neural machine translation.
\newblock In Bonnie Webber, Trevor Cohn, Yulan He, and Yang Liu, editors,
  \emph{Proceedings of the 2020 Conference on Empirical Methods in Natural
  Language Processing, {EMNLP} 2020, Online, November 16-20, 2020}, pages
  7622--7634. Association for Computational Linguistics, 2020.
\newblock \doi{10.18653/v1/2020.emnlp-main.615}.
\newblock URL \url{https://doi.org/10.18653/v1/2020.emnlp-main.615}.

\bibitem[Bedny et~al.(2019)Bedny, Koster-Hale, Elli, Yazzolino, and
  Saxe]{BEDNY2019105}
Marina Bedny, Jorie Koster-Hale, Giulia Elli, Lindsay Yazzolino, and Rebecca
  Saxe.
\newblock There’s more to “sparkle” than meets the eye: Knowledge of
  vision and light verbs among congenitally blind and sighted individuals.
\newblock \emph{Cognition}, 189:\penalty0 105--115, 2019.
\newblock ISSN 0010-0277.
\newblock \doi{https://doi.org/10.1016/j.cognition.2019.03.017}.
\newblock URL
  \url{https://www.sciencedirect.com/science/article/pii/S0010027719300721}.

\bibitem[Berthet et~al.(2022)Berthet, Coye, Dezecache, and
  Kuhn]{animal_linguistics}
Mélissa Berthet, Camille Coye, Guillaume Dezecache, and Jeremy Kuhn.
\newblock Animal linguistics: a primer.
\newblock \emph{Biological Reviews}, n/a\penalty0 (n/a), 2022.
\newblock \doi{https://doi.org/10.1111/brv.12897}.
\newblock URL \url{https://onlinelibrary.wiley.com/doi/abs/10.1111/brv.12897}.

\bibitem[Brants et~al.(2007)Brants, Popat, Xu, Och, and Dean]{BrantsPXOD07}
Thorsten Brants, Ashok~C. Popat, Peng Xu, Franz~Josef Och, and Jeffrey Dean.
\newblock Large language models in machine translation.
\newblock In Jason Eisner, editor, \emph{EMNLP-CoNLL 2007, Proceedings of the
  2007 Joint Conference on Empirical Methods in Natural Language Processing and
  Computational Natural Language Learning, June 28-30, 2007, Prague, Czech
  Republic}, pages 858--867. {ACL}, 2007.
\newblock URL \url{https://aclanthology.org/D07-1090/}.

\bibitem[Brown et~al.(2020)Brown, Mann, Ryder, Subbiah, Kaplan, Dhariwal,
  Neelakantan, Shyam, Sastry, Askell, Agarwal, Herbert-Voss, Krueger, Henighan,
  Child, Ramesh, Ziegler, Wu, Winter, ..., and Amodei]{GPT3}
Tom Brown, Benjamin Mann, Nick Ryder, Melanie Subbiah, Jared~D Kaplan, Prafulla
  Dhariwal, Arvind Neelakantan, Pranav Shyam, Girish Sastry, Amanda Askell,
  Sandhini Agarwal, Ariel Herbert-Voss, Gretchen Krueger, Tom Henighan, Rewon
  Child, Aditya Ramesh, Daniel Ziegler, Jeffrey Wu, Clemens Winter, ..., and
  Dario Amodei.
\newblock Language models are few-shot learners.
\newblock In \emph{Advances in Neural Information Processing Systems},
  volume~33, pages 1877--1901. Curran Associates, Inc., 2020.
\newblock URL
  \url{https://proceedings.neurips.cc/paper/2020/file/1457c0d6bfcb4967418bfb8ac142f64a-Paper.pdf}.

\bibitem[Chowdhery et~al.(2022)Chowdhery, Narang, Devlin, Bosma, Mishra,
  Roberts, Barham, Chung, Sutton, Gehrmann, Schuh, Shi, Tsvyashchenko, Maynez,
  Rao, Barnes, Tay, Shazeer, Prabhakaran, Reif, Du, Hutchinson, Pope, Bradbury,
  Austin, Isard, Gur-Ari, Yin, Duke, Levskaya, Ghemawat, ..., and
  Fiedel]{palm22}
Aakanksha Chowdhery, Sharan Narang, Jacob Devlin, Maarten Bosma, Gaurav Mishra,
  Adam Roberts, Paul Barham, Hyung~Won Chung, Charles Sutton, Sebastian
  Gehrmann, Parker Schuh, Kensen Shi, Sasha Tsvyashchenko, Joshua Maynez,
  Abhishek Rao, Parker Barnes, Yi~Tay, Noam Shazeer, Vinodkumar Prabhakaran,
  Emily Reif, Nan Du, Ben Hutchinson, Reiner Pope, James Bradbury, Jacob
  Austin, Michael Isard, Guy Gur-Ari, Pengcheng Yin, Toju Duke, Anselm
  Levskaya, Sanjay Ghemawat, ..., and Noah Fiedel.
\newblock Palm: Scaling language modeling with pathways.
\newblock \emph{arXiv preprint arXiv:2204.02311}, 2022.
\newblock URL \url{https://arxiv.org/abs/2204.02311}.

\bibitem[Devlin et~al.(2018)Devlin, Chang, Lee, and Toutanova]{DevlinCLT18}
Jacob Devlin, Ming-Wei Chang, Kenton Lee, and Kristina Toutanova.
\newblock Bert: Pre-training of deep bidirectional transformers for language
  understanding.
\newblock \emph{arXiv preprint arXiv:1810.04805}, 2018.

\bibitem[Doerr(2019)]{negCorr}
Benjamin Doerr.
\newblock Probabilistic tools for the analysis of randomized optimization
  heuristics.
\newblock In \emph{Natural Computing Series}, pages 1--87. Springer
  International Publishing, nov 2019.
\newblock \doi{10.1007/978-3-030-29414-4_1}.
\newblock URL \url{https://doi.org/10.1007%2F978-3-030-29414-4_1}.

\bibitem[Edmiston et~al.(2022)Edmiston, Keung, and Smith]{EdmistonKS22}
Daniel Edmiston, Phillip Keung, and Noah~A. Smith.
\newblock Domain mismatch doesn't always prevent cross-lingual transfer
  learning.
\newblock In Nicoletta Calzolari, Fr{\'{e}}d{\'{e}}ric B{\'{e}}chet, Philippe
  Blache, Khalid Choukri, Christopher Cieri, Thierry Declerck, Sara Goggi,
  Hitoshi Isahara, Bente Maegaard, Joseph Mariani, H{\'{e}}l{\`{e}}ne Mazo, Jan
  Odijk, and Stelios Piperidis, editors, \emph{Proceedings of the Thirteenth
  Language Resources and Evaluation Conference, {LREC} 2022, Marseille, France,
  20-25 June 2022}, pages 892--899. European Language Resources Association,
  2022.
\newblock URL \url{https://aclanthology.org/2022.lrec-1.94}.

\bibitem[Gero et~al.(2016)Gero, Whitehead, and Rendell]{GeroWL2016}
Shane Gero, Hal Whitehead, and Luke Rendell.
\newblock Individual, unit and vocal clan level identity cues in sperm whale
  codas.
\newblock \emph{Royal Society Open Science}, 3\penalty0 (1):\penalty0 150372,
  2016.

\bibitem[Goldreich et~al.(2012)Goldreich, Juba, and Sudan]{GoldreichJS12}
Oded Goldreich, Brendan Juba, and Madhu Sudan.
\newblock A theory of goal-oriented communication.
\newblock \emph{J. {ACM}}, 59\penalty0 (2):\penalty0 8:1--8:65, 2012.
\newblock \doi{10.1145/2160158.2160161}.
\newblock URL \url{https://doi.org/10.1145/2160158.2160161}.

\bibitem[Han et~al.(2021)Han, Babuschkin, Edwards, Neelakantan, Xu, Polu, Ray,
  Shyam, Ramesh, Radford, and Sutskever]{abs-2110-05448}
Jesse~Michael Han, Igor Babuschkin, Harrison Edwards, Arvind Neelakantan, Tao
  Xu, Stanislas Polu, Alex Ray, Pranav Shyam, Aditya Ramesh, Alec Radford, and
  Ilya Sutskever.
\newblock Unsupervised neural machine translation with generative language
  models only.
\newblock \emph{CoRR}, abs/2110.05448, 2021.
\newblock URL \url{https://arxiv.org/abs/2110.05448}.

\bibitem[He et~al.(2022)He, Wang, Wang, Shi, and Tu]{HeWWST22}
Zhiwei He, Xing Wang, Rui Wang, Shuming Shi, and Zhaopeng Tu.
\newblock Bridging the data gap between training and inference for unsupervised
  neural machine translation.
\newblock In Smaranda Muresan, Preslav Nakov, and Aline Villavicencio, editors,
  \emph{Proceedings of the 60th Annual Meeting of the Association for
  Computational Linguistics (Volume 1: Long Papers), {ACL} 2022, Dublin,
  Ireland, May 22-27, 2022}, pages 6611--6623. Association for Computational
  Linguistics, 2022.
\newblock \doi{10.18653/v1/2022.acl-long.456}.
\newblock URL \url{https://doi.org/10.18653/v1/2022.acl-long.456}.

\bibitem[Hu et~al.(2022)Hu, Wang, and Yu]{ERgraphmatch}
Yaofang Hu, Wanjie Wang, and Yi~Yu.
\newblock Graph matching beyond perfectly-overlapping erd{\H{o}}s-r{\'{e}}nyi
  random graphs.
\newblock \emph{Stat. Comput.}, 32\penalty0 (1):\penalty0 19, 2022.
\newblock \doi{10.1007/s11222-022-10079-1}.
\newblock URL \url{https://doi.org/10.1007/s11222-022-10079-1}.

\bibitem[Juba and Sudan(2008)]{JubaS08}
Brendan Juba and Madhu Sudan.
\newblock Universal semantic communication {I}.
\newblock In Cynthia Dwork, editor, \emph{Proceedings of the 40th Annual {ACM}
  Symposium on Theory of Computing, Victoria, British Columbia, Canada, May
  17-20, 2008}, pages 123--132. {ACM}, 2008.
\newblock \doi{10.1145/1374376.1374397}.
\newblock URL \url{https://doi.org/10.1145/1374376.1374397}.

\bibitem[Kim et~al.(2020)Kim, Gra{\c{c}}a, and Ney]{KimGN20}
Yunsu Kim, Miguel Gra{\c{c}}a, and Hermann Ney.
\newblock When and why is unsupervised neural machine translation useless?
\newblock In Mikel~L. Forcada, Andr{\'{e}} Martins, Helena Moniz, Marco Turchi,
  Arianna Bisazza, Joss Moorkens, Ana~Guerberof Arenas, Mary Nurminen, Lena
  Marg, Sara Fumega, Bruno Martins, Fernando Batista, Lu{\'{\i}}sa Coheur,
  Carla~Parra Escart{\'{\i}}n, and Isabel Trancoso, editors, \emph{Proceedings
  of the 22nd Annual Conference of the European Association for Machine
  Translation, {EAMT} 2020, Lisboa, Portugal, November 3-5, 2020}, pages
  35--44. European Association for Machine Translation, 2020.
\newblock URL \url{https://aclanthology.org/2020.eamt-1.5/}.

\bibitem[Lample et~al.(2018{\natexlab{a}})Lample, Conneau, Denoyer, and
  Ranzato]{LampleCDR18}
Guillaume Lample, Alexis Conneau, Ludovic Denoyer, and Marc'Aurelio Ranzato.
\newblock Unsupervised machine translation using monolingual corpora only.
\newblock In \emph{6th International Conference on Learning Representations,
  {ICLR} 2018, Vancouver, BC, Canada, April 30 - May 3, 2018, Conference Track
  Proceedings}. OpenReview.net, 2018{\natexlab{a}}.
\newblock URL \url{https://openreview.net/forum?id=rkYTTf-AZ}.

\bibitem[Lample et~al.(2018{\natexlab{b}})Lample, Ott, Conneau, Denoyer, and
  Ranzato]{LampleOCDR18}
Guillaume Lample, Myle Ott, Alexis Conneau, Ludovic Denoyer, and Marc'Aurelio
  Ranzato.
\newblock Phrase-based {\&} neural unsupervised machine translation.
\newblock In Ellen Riloff, David Chiang, Julia Hockenmaier, and Jun'ichi
  Tsujii, editors, \emph{Proceedings of the 2018 Conference on Empirical
  Methods in Natural Language Processing, Brussels, Belgium, October 31 -
  November 4, 2018}, pages 5039--5049. Association for Computational
  Linguistics, 2018{\natexlab{b}}.
\newblock URL \url{https://aclanthology.org/D18-1549/}.

\bibitem[Lample et~al.(2018{\natexlab{c}})Lample, Ott, Conneau, Denoyer, and
  Ranzato]{lample-etal-2018-phrase}
Guillaume Lample, Myle Ott, Alexis Conneau, Ludovic Denoyer, and Marc{'}Aurelio
  Ranzato.
\newblock Phrase-based {\&} neural unsupervised machine translation.
\newblock In \emph{Proceedings of the 2018 Conference on Empirical Methods in
  Natural Language Processing}, pages 5039--5049, Brussels, Belgium,
  October-November 2018{\natexlab{c}}. Association for Computational
  Linguistics.
\newblock \doi{10.18653/v1/D18-1549}.
\newblock URL \url{https://aclanthology.org/D18-1549}.

\bibitem[Livi and Rizzi(2013)]{graphmatch}
Lorenzo Livi and Antonello Rizzi.
\newblock The graph matching problem.
\newblock \emph{Pattern Analysis and Applications}, 16:\penalty0 253--283, 08
  2013.
\newblock \doi{10.1007/s10044-012-0284-8}.

\bibitem[Lynum et~al.(2012)Lynum, Marsi, Bungum, and Gamb{\"{a}}ck]{LynumMBG12}
Andr{\'{e}} Lynum, Erwin Marsi, Lars Bungum, and Bj{\"{o}}rn Gamb{\"{a}}ck.
\newblock Disambiguating word translations with target language models.
\newblock In Petr Sojka, Ales Hor{\'{a}}k, Ivan Kopecek, and Karel Pala,
  editors, \emph{Text, Speech and Dialogue - 15th International Conference,
  {TSD} 2012, Brno, Czech Republic, September 3-7, 2012. Proceedings}, volume
  7499 of \emph{Lecture Notes in Computer Science}, pages 378--385. Springer,
  2012.
\newblock \doi{10.1007/978-3-642-32790-2\_46}.
\newblock URL \url{https://doi.org/10.1007/978-3-642-32790-2\_46}.

\bibitem[Marchisio et~al.(2020)Marchisio, Duh, and Koehn]{MarchisioDK20}
Kelly Marchisio, Kevin Duh, and Philipp Koehn.
\newblock When does unsupervised machine translation work?
\newblock In Lo{\"{\i}}c Barrault, Ondrej Bojar, Fethi Bougares, Rajen
  Chatterjee, Marta~R. Costa{-}juss{\`{a}}, Christian Federmann, Mark Fishel,
  Alexander Fraser, Yvette Graham, Paco Guzman, Barry Haddow, Matthias Huck,
  Antonio Jimeno{-}Yepes, Philipp Koehn, Andr{\'{e}} Martins, Makoto Morishita,
  Christof Monz, Masaaki Nagata, Toshiaki Nakazawa, and Matteo Negri, editors,
  \emph{Proceedings of the Fifth Conference on Machine Translation, WMT@EMNLP
  2020, Online, November 19-20, 2020}, pages 571--583. Association for
  Computational Linguistics, 2020.
\newblock URL \url{https://aclanthology.org/2020.wmt-1.68/}.

\bibitem[Miceli~Barone(2016)]{Barone16}
Antonio~Valerio Miceli~Barone.
\newblock Towards cross-lingual distributed representations without parallel
  text trained with adversarial autoencoders.
\newblock In \emph{Proceedings of the 1st Workshop on Representation Learning
  for {NLP}}, pages 121--126, Berlin, Germany, August 2016. Association for
  Computational Linguistics.
\newblock \doi{10.18653/v1/W16-1614}.
\newblock URL \url{https://aclanthology.org/W16-1614}.

\bibitem[Mohri et~al.(2018)Mohri, Rostamizadeh, and
  Talwalkar]{mohri2018foundations}
Mehryar Mohri, Afshin Rostamizadeh, and Ameet Talwalkar.
\newblock \emph{Foundations of machine learning}.
\newblock MIT press, 2018.

\bibitem[Ranathunga et~al.(2021)Ranathunga, Lee, Skenduli, Shekhar, Alam, and
  Kaur]{ranathunga2021neural}
Surangika Ranathunga, En-Shiun~Annie Lee, Marjana~Prifti Skenduli, Ravi
  Shekhar, Mehreen Alam, and Rishemjit Kaur.
\newblock Neural machine translation for low-resource languages: A survey.
\newblock \emph{arXiv preprint arXiv:2106.15115}, 2021.

\bibitem[Ravi and Knight(2011)]{RaviK11}
Sujith Ravi and Kevin Knight.
\newblock Deciphering foreign language.
\newblock In Dekang Lin, Yuji Matsumoto, and Rada Mihalcea, editors, \emph{The
  49th Annual Meeting of the Association for Computational Linguistics: Human
  Language Technologies, Proceedings of the Conference, 19-24 June, 2011,
  Portland, Oregon, {USA}}, pages 12--21. The Association for Computer
  Linguistics, 2011.
\newblock URL \url{https://aclanthology.org/P11-1002/}.

\bibitem[Rozi{\`{e}}re et~al.(2022)Rozi{\`{e}}re, Zhang, Charton, Harman,
  Synnaeve, and Lample]{RoziereZCHSL22}
Baptiste Rozi{\`{e}}re, Jie Zhang, Fran{\c{c}}ois Charton, Mark Harman, Gabriel
  Synnaeve, and Guillaume Lample.
\newblock Leveraging automated unit tests for unsupervised code translation.
\newblock In \emph{The Tenth International Conference on Learning
  Representations, {ICLR} 2022, Virtual Event, April 25-29, 2022}.
  OpenReview.net, 2022.
\newblock URL \url{https://openreview.net/forum?id=cmt-6KtR4c4}.

\bibitem[Shin et~al.(2020)Shin, Razeghi, Logan~IV, Wallace, and
  Singh]{autoprompt}
Taylor Shin, Yasaman Razeghi, Robert~L. Logan~IV, Eric Wallace, and Sameer
  Singh.
\newblock {A}uto{P}rompt: {E}liciting {K}nowledge from {L}anguage {M}odels with
  {A}utomatically {G}enerated {P}rompts.
\newblock In \emph{Proceedings of the 2020 Conference on Empirical Methods in
  Natural Language Processing (EMNLP)}, pages 4222--4235, Online, November
  2020. Association for Computational Linguistics.
\newblock \doi{10.18653/v1/2020.emnlp-main.346}.
\newblock URL \url{https://aclanthology.org/2020.emnlp-main.346}.

\bibitem[Song et~al.(2019)Song, Tan, Qin, Lu, and Liu]{SongTQLL19}
Kaitao Song, Xu~Tan, Tao Qin, Jianfeng Lu, and Tie{-}Yan Liu.
\newblock {MASS:} masked sequence to sequence pre-training for language
  generation.
\newblock In Kamalika Chaudhuri and Ruslan Salakhutdinov, editors,
  \emph{Proceedings of the 36th International Conference on Machine Learning,
  {ICML} 2019, 9-15 June 2019, Long Beach, California, {USA}}, volume~97 of
  \emph{Proceedings of Machine Learning Research}, pages 5926--5936. {PMLR},
  2019.
\newblock URL \url{http://proceedings.mlr.press/v97/song19d.html}.

\bibitem[Spielman and Teng(2009)]{spielmanTeng09}
Daniel~A. Spielman and Shang-Hua Teng.
\newblock Smoothed analysis: An attempt to explain the behavior of algorithms
  in practice.
\newblock \emph{Commun. ACM}, 52\penalty0 (10):\penalty0 76–84, oct 2009.
\newblock ISSN 0001-0782.
\newblock \doi{10.1145/1562764.1562785}.
\newblock URL \url{https://doi.org/10.1145/1562764.1562785}.

\bibitem[Tran et~al.(2021)Tran, Bhosale, Cross, Koehn, Edunov, and
  Fan]{TranBCKEF21}
Chau Tran, Shruti Bhosale, James Cross, Philipp Koehn, Sergey Edunov, and
  Angela Fan.
\newblock Facebook ai's {WMT21} news translation task submission.
\newblock In Lo{\"{\i}}c Barrault, Ondrej Bojar, Fethi Bougares, Rajen
  Chatterjee, Marta~R. Costa{-}juss{\`{a}}, Christian Federmann, Mark Fishel,
  Alexander Fraser, Markus Freitag, Yvette Graham, Roman Grundkiewicz, Paco
  Guzman, Barry Haddow, Matthias Huck, Antonio Jimeno{-}Yepes, Philipp Koehn,
  Tom Kocmi, Andr{\'{e}} Martins, Makoto Morishita, and Christof Monz, editors,
  \emph{Proceedings of the Sixth Conference on Machine Translation, WMT@EMNLP
  2021, Online Event, November 10-11, 2021}, pages 205--215. Association for
  Computational Linguistics, 2021.
\newblock URL \url{https://aclanthology.org/2021.wmt-1.19}.

\bibitem[Wittgenstein(1953)]{wittgenstein1953philosophical}
Ludwig Wittgenstein.
\newblock \emph{Philosophical Investigations}.
\newblock Basil Blackwell, Oxford, 1953.
\newblock ISBN 0631119000.

\bibitem[Zuberb{\"u}hler(2020)]{zuberbuhler2020syntax}
Klaus Zuberb{\"u}hler.
\newblock Syntax and compositionality in animal communication.
\newblock \emph{Philosophical Transactions of the Royal Society B},
  375\penalty0 (1789):\penalty0 20190062, 2020.

\end{thebibliography}
\bibliographystyle{plainnat}

\newpage

\appendix

\section{Translator Revisions and Plausible Ambiguities}\label{ap:stat_sem}
Even though $\star$ is unknown, it will be convenient to 
define, for each $\theta\in\Theta$, a \textit{revision} of $f_\star$ which is the permutation $\pi^\star_\theta\colon\Y \oto \Y$ between $f_\theta$ and $f_\star$, $\pi^\star_\theta(y)\defeq f_\theta(f_\star^{-1}(y))$.\footnote{Formally, $f_{\theta} \circ f_\star^{-1}\colon f_\star(\X) \oto \Y$ is only defined on $f_\star(\X)$ but can be extended to a full permutation on $\Y$ in many ways. For concreteness, we take the lexicographically smallest extension on $\Y \subseteq \{0,1\}^*$.} We write $\pi_\theta=\pi_\theta^\star$ when $\star$ is clear from context, and $\pi_\star(y)\equiv y$ is the identity revision.

Note that the divergence $\stat(\theta)$ and semantic loss $\loss(\theta)$ can equivalently be defined using revisions,
\begin{align*}
  \stat(\theta) & \defeq \KL{f_\theta \circ \mu}{\rho} = \KL{\tau} {\pi^{-1}_\theta \circ \rho} = \E_{y \sim \tau}\bigl[\log\frac{\tau(y)}{\rho(\pi_\theta(y))}\bigr],\\
  \loss(\theta) &\defeq \E_{x \sim \mu}\bigl[\ell(f_\star(x), f_\theta(x))\bigr] = \E_{y \sim \tau}\bigl[\ell(y, \pi_\theta(y))\bigr].
\end{align*}

To relate divergence and loss, it is helpful to define a notion of \textit{plausible ambiguities} which are $\theta$ whose revisions $\pi_\theta(y \sim \tau)$ are not too unlikely under the prior. These may also be of independent interest as they capture certain types of ambiguities that can only be resolved with supervision.

\begin{definition}[$\gamma$-Plausible ambiguities]\label{def:nap0}
For any $\gamma \in [0,1]$, $\star \in \Theta$, and distributions $\tau,\rho$ over $\Y$, the set of $\gamma$-\textit{plausible ambiguities} $\mathcal{A}_\gamma=\mathcal{A}_\gamma^{\star, \tau,\rho} \subseteq \Theta$ is:
\[
\mathcal{A}_\gamma \defeq \left\{\theta \in \Theta ~\left\vert~ \Pr_{y \sim \tau}[ \rho(\pi_\theta(y))= 0] \le \gamma  \right.\right\}, \text{ and } \varepsilon_\gamma \defeq \max_{\theta \in \mathcal{A}_\gamma} \loss(\theta).
\]
\end{definition}

For example, \ambig{left}{right} would constitute a $\gamma$-plausible ambiguity if one could swap the two words, making a $\le \gamma$ fraction of the translations have 0 probability. Such a revision would have low loss if such swaps are considered similar in meaning. \conditionref{cond:realizable} is equivalent to $\star \in \mathcal{A}_0$.

The quantity of interest is $\varepsilon_\gamma$, the maximum loss of any $\gamma$-plausible ambiguity.\footnote{For intuition, note that $\varepsilon_\gamma$ can be bounded using $\gamma$:
$\frac{\gamma}{\varepsilon_\gamma} \ge \min_{\theta \in \Theta} \Pr_{y \sim \tau}\bigl[ \rho(\pi_\theta(y))= 0 \mid \pi_\theta(y) \ne y\bigr].$
}
Next, we prove that the loss of the translator output by the Maximum Likelihood Estimator is not greater than $\varepsilon_\gamma$ given $m \ge \frac{1}{\gamma}\ln \frac{|\Theta|}{\delta}$ examples.
\begin{theorem}\label{thm:stat_sem0}
Let $\mu, \rho$ be probability distributions over $\X,\Y$, resp., and $f_\star$ satisfy \conditionref{cond:realizable}: $\Pr_\mu[\rho(f_\star(x))=0]=0$. Fix any $\delta \in (0,1)$, $m \ge 1$, and let $\gamma \geq \frac{1}{m} \ln \frac{|\Theta|}{\delta}.$
Then,
\[\Pr_{x_1, \ldots, x_m \sim \mu}\left[\loss(\hat{\theta})\le \varepsilon_\gamma\right] > 1-\delta,\]
where $\hat\theta=\alg^{\rho}(x_1, \ldots, x_m)$ and $\varepsilon_\gamma$ is from \definitionref{def:nap0}.
\end{theorem}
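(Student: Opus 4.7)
The plan is to show that with high probability every $\theta \notin \mathcal{A}_\gamma$ is ruled out by $\alg$ because it assigns probability $0$ to at least one of the observed samples, while $\star$ (and hence the minimizer) remains finite. Formally, I will argue that $\hat\theta \in \mathcal{A}_\gamma$ with probability $\ge 1-\delta$, which by \definitionref{def:nap0} immediately gives $\loss(\hat\theta) \le \varepsilon_\gamma$.

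First, I would rewrite the relevant event in terms of revisions. For every $\theta \in \Theta$ and every $x$, we have $f_\theta(x) = \pi_\theta(f_\star(x))$ by definition of the revision $\pi_\theta^\star$. Thus, pushing forward $\mu$ through $f_\star$ to $\tau$,
\[
\Pr_{x \sim \mu}\bigl[\rho(f_\theta(x)) = 0\bigr] \;=\; \Pr_{y \sim \tau}\bigl[\rho(\pi_\theta(y)) = 0\bigr].
\]
So $\theta \notin \mathcal{A}_\gamma$ means precisely that $\Pr_\mu[\rho(f_\theta(x))=0] > \gamma$.

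Next, the union-bound step. For a fixed $\theta \notin \mathcal{A}_\gamma$, the probability that none of the $m$ iid samples $x_1,\dots,x_m \sim \mu$ lands in the ``zero-probability'' set is at most $(1-\gamma)^m \le e^{-\gamma m}$. Taking a union bound over the at most $|\Theta|$ bad $\theta$'s, the probability that \emph{some} $\theta \notin \mathcal{A}_\gamma$ has all samples avoiding its zero set is at most $|\Theta|\, e^{-\gamma m} \le \delta$ by the hypothesis $\gamma \ge \frac{1}{m}\ln\frac{|\Theta|}{\delta}$. Equivalently, outside a $\delta$-measure event, every $\theta \notin \mathcal{A}_\gamma$ satisfies $\rho(f_\theta(x_i)) = 0$ for some $i$, so its $\alg$ objective $\sum_i \log\frac{1}{\rho(f_\theta(x_i))}$ is $+\infty$.

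Finally, I would close the argument by invoking realizability. \conditionref{cond:realizable} gives $\Pr_\mu[\rho(f_\star(x))=0]=0$, so with probability $1$ the $\alg$ objective at $\star$ is finite. Combined with the previous step, with probability $\ge 1-\delta$ the minimizer $\hat\theta$ must lie in $\mathcal{A}_\gamma$ (anything outside has objective $+\infty$ while $\star \in \mathcal{A}_0 \subseteq \mathcal{A}_\gamma$ has a finite objective). Hence $\loss(\hat\theta) \le \max_{\theta \in \mathcal{A}_\gamma}\loss(\theta) = \varepsilon_\gamma$, as desired. There is no real technical obstacle here; the only thing to be slightly careful about is checking that the pushforward identity used in step one holds for all $\theta$ (not just those in $\mathcal{A}_\gamma$) and that ties in the lexicographic tie-breaking rule of \definitionref{def:alg} do not accidentally select a $\theta$ with infinite objective, which they cannot since $\star$ itself is a finite-objective candidate.
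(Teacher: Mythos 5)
Your proposal is correct and takes essentially the same route as the paper's proof: both observe that any $\theta \notin \mathcal{A}_\gamma$ has $\Pr_\mu[\rho(f_\theta(x))=0] > \gamma$, bound the probability that all $m$ samples avoid the zero set by $(1-\gamma)^m \le e^{-\gamma m} \le \delta/|\Theta|$, union-bound, and conclude via realizability that the minimizer has finite objective and hence lies in $\mathcal{A}_\gamma$. The only cosmetic difference is that the paper restricts the union bound to the set $B = \{\theta : \loss(\theta) > \varepsilon_\gamma\} \subseteq \Theta \setminus \mathcal{A}_\gamma$, whereas you union over all of $\Theta \setminus \mathcal{A}_\gamma$; both yield the same bound.
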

Since $\varepsilon_\gamma$ is non-decreasing in $\gamma$, as the number of examples increases the loss bound $\varepsilon_\gamma$ decreases to approach $\varepsilon_0$. This bound and its proof in \Cref{ap:stat_sem} are analogous to the realizable Occam bound of supervised classification, see \Cref{sec:supervised}. 

\begin{proof}[{Proof of \Cref{thm:stat_sem0}}]
$\alg$ minimizes $\emp(\theta) \defeq \frac{1}{m}\sum_{i \le m}\log \frac{1}{\rho(\pi_\theta(y_i))}$  over $\theta \in \Theta$, where $y_i=f_\theta(x_i)$.
By the realizable prior assumption $\emp(\star) < \infty$. Thus, for the algorithm to fail, there must be a ``bad'' $\hat\theta \in B\defeq \{\theta \in \Theta \mid\loss(\theta)> \varepsilon_\gamma\}$ with $\emp(\theta)\le \emp(\star) < \infty$. If $\rho(\pi_\theta(y_i))=0$ for any $i$,  then $\emp(\pi_\theta)=\infty$.  Note that by \definitionref{def:nap0}, $B \cap \mathcal{A}_\gamma = \emptyset$, thus for all $\theta \in B$: $\Pr[\rho(\pi_\theta(y))=0]>\gamma$ and,
$$\Pr\left[\emp(\theta) < \infty\right] < (1-\gamma)^m \le e^{-\gamma m} \leq \frac{\delta}{|\Theta|}.$$
By the union bound over $\theta \in B$, $\Pr[\exists \theta \in B~ \emp(\theta)<\infty] < \delta$ since $|B| \le |\Theta|$. 
\end{proof}

\section{The Tree-based Model}\label{apx:thm:tree}
\subsection{A tree-based probabilistic model of language}\label{sec:tree}
The last model, which we view as secondary to the previous two, is a simpler version of the common nonsense model. It can be viewed as a ``warm-up'' leading up to that more general model, and we hope it helps illuminate that model by instantiating the language therein with a simple tree-based syntax.

In the tree-based model, the nodes of a tree are labeled with random words, and plausible texts (according to the prior $\rho$) correspond to paths from root to leaf.
A translated language $\tau$ (or equivalently, a source language $\mu$) is then derived from root-to-leaf paths in a random subtree $H \subseteq G$.
We prove that, with high probability, a prior $\rho$ and translated language $\tau$ sampled by this process satisfy \conditionref{cond:realizable} and semantic error $\varepsilon_\gamma = O\bigl(1/\min(m, a^n)\bigr)$. Therefore, $\alg$ yields a semantically accurate translator for the sampled language with high probability, for sufficiently large $n$.

\begin{figure}
    \centering
    \includegraphics[width=4in]{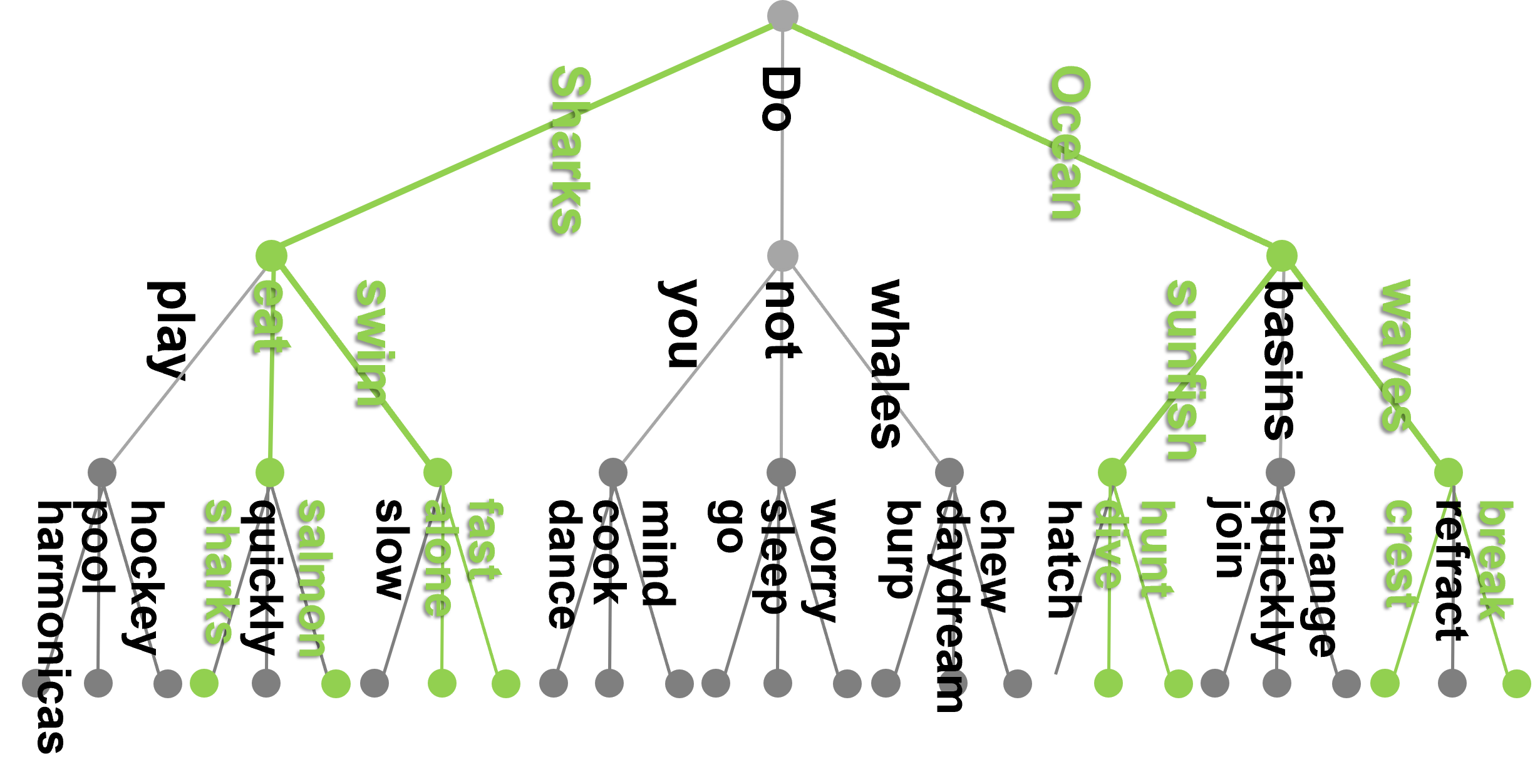}
    \caption{An example of a language tree of plausible texts and the subtree of ground-truth translations  illustrated in green.\label{fig:tree}}
\end{figure}

The \emph{random tree language} ($\RLST$) model is a randomized process for generating a tree-based source language $\mu$, translated language $\tau$, and prior $\rho$.
It is parameterized by a finite vocabulary set $\W$, depth $n \in \N$, and arities $a,b \in \N$ such that $1 \le a\le b \le |\W| / 4$.
For simplicity, the possible target texts are taken to be the set of all possible $n$-grams over $\W$, namely, $\Y \defeq \W^n$. Again for simplicity, we also assume $|\X|=|\Y|$ so that each $f_\theta \colon \X \oto \Y $ is a bijection. To generate an $\RT$, first a full $b$-ary, depth $n+1$ tree $G$ is constructed. Labeled edges shall correspond to words, and paths shall correspond to texts (sentences), as follows:
\begin{enumerate}
    \item Starting from the root node and proceeding in level-order traversal, for each node $v$: sample $b$ words $w_1,\dots, w_b \in \W$ uniformly at random and \emph{without} replacement; label each of $v$'s child edges with one of the sampled words.
    \item The labels on each path from the root to a leaf $(y_1, \dots, y_n)$ corresponds to a \emph{plausible text}, giving a set of \emph{plausible texts}
    \begin{equation*}
    P \defeq \left\lbrace y \mid y = \left(y_1, \dots, y_n \right) \text{ labels of a path in }G \right\rbrace.
    \end{equation*}
    \item A subtree $H \subseteq G$ is obtained by sampling uniformly at random $a$ out of $b$ of the children at each level of the tree, in level-order traversal. The set of \emph{translated texts} is analogously defined as
    \begin{equation*}
        T \defeq \left\lbrace y \mid y = \left(y_1, \dots, y_n \right) \text{ labels of a path in }H \right\rbrace \subseteq P.
    \end{equation*}
\end{enumerate}
The prior $\rho=\U(P)$ is uniform over plausible texts $P$, while $\mu=$ is uniform over $f_\star^{-1}(T)$. We let $(\mu, \rho) \sim \RLST(\W, n, a, b)$ denote the sampling of a prior and translated language obtained via this randomized process. When the parameters $\W, n, a, b$ are clear from context we simply write $\RLST$.

Note that $\Theta$ and $f_\star \in \Theta$ may be arbitrary, and by definition of $\tau$, $\mu=\U(f_\star^{-1}(T))$ is uniform over $T$'s preimage. Since we assumed $|\X|=|\Y|$ above, $f_\star$ is invertible.

Next, we will argue that a random tree language sampled in this process satisfies realizability (\conditionref{cond:realizable}) and \definitionref{def:nap0} with appropriate choice of parameters. While we make no assumptions on $\Theta$, it is important that the plausible texts $P$ are sampled \emph{after} the possible permutations (ambiguities) $\Pi$ are specified. Otherwise, if an adversary were able to choose a permutation $\pi \colon \Y \to \Y$ based on $P$, then they could arbitrarily permute $P$, resulting in a permutation $\pi$ with high expected loss but no change in likelihood according to the prior $\rho$---thereby violating \definitionref{def:nap0}.

You can think of \Cref{thm:tree-simple} as pointing to the required text length $n$ and an upper bound on the number of parameters $|\Theta|$ for which \conditionref{cond:realizable} and \definitionref{def:nap0} hold with high probability. As we know from \Cref{thm:stat_sem0}, and state clearly next, these in turn imply translatability of a random tree language.

\begin{theorem}[Translatability in the $\RT$ model]\label{thm:tree-simple}
Fix any $m \ge 1$, $\delta \in (0,1)$ vocabulary set $\W$, and tree arities $a \le b \le |\W|/4$.  Then, for any $\Theta$ and any $\star \in \Theta$, with probability at least $1-\delta$ over $(\mu, \rho) \sim \RT(\W,n,a,b)$ and iid samples $x_1,x_2,\ldots,x_m \sim \mu$,
$$\err(\hat\theta) \le 16\max\left(\frac{1}{m}, \frac{4}{a^n}\right) \cdot \ln \frac{6|\Theta|}{\delta},$$
where $\hat\theta=\alg^\rho(x_1,x_2,\ldots,x_m)$ and $\alg$ is from \definitionref{def:alg}
\end{theorem}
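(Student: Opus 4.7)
The plan is to invoke \Cref{thm:stat_sem0} with $\loss = \err$, reducing the theorem to a high-probability bound on $\varepsilon_\gamma$, the worst-case error of any $\gamma$-plausible ambiguity. Since $\tau$ is supported on $T \subseteq P = \supp(\rho)$, realizability (\definitionref{cond:realizable}) holds deterministically under the $\RT$ process, so for any $\gamma \ge \frac{1}{m}\ln\frac{2|\Theta|}{\delta}$, \Cref{thm:stat_sem0} yields $\err(\hat\theta) \le \varepsilon_\gamma$ with probability at least $1-\delta/2$ over the $m$ training samples. It then remains to show that with probability $\ge 1-\delta/2$ over $(\mu,\rho)\sim\RT$, every $\theta \in \mathcal{A}_\gamma$ satisfies $\err(\theta) \le \eps \defeq 16\max(1/m, 4/a^n)\ln(6|\Theta|/\delta)$; taking a union bound over these two failure events then yields the theorem.

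For each $\theta \ne \star$ I would decompose $\err(\theta) = a(\theta)+b(\theta)$, where $a(\theta) \defeq |\{y\in T : \pi_\theta(y)\in P\setminus\{y\}\}|/|T|$ counts ``moved but still plausible'' $y$'s and $b(\theta) \defeq |\{y\in T : \pi_\theta(y)\notin P\}|/|T|$ counts those ruled out by $\rho$; by definition $\theta\in\mathcal{A}_\gamma$ iff $b(\theta)\le\gamma$. The core estimate is pointwise: for any $y \in \Y$ with $\pi_\theta(y)\ne y$, letting $k$ be the length of the longest common prefix of $y$ and $\pi_\theta(y)$,
\[
\Pr_{\RT}\!\left[\pi_\theta(y)\in P \mid y\in T\right] \;\le\; \frac{b-1}{|\W|-1}\cdot \left(\frac{b}{|\W|}\right)^{n-k-1} \;\le\; \left(\frac{b}{|\W|}\right)^{n-k} \;\le\; \frac{b}{|\W|} \;\le\; \frac{1}{4},
\]
because conditioning on $y\in T$ only pins down the labels along $y$'s root-to-leaf path in $G$, whereas $\pi_\theta(y)$ branches off at depth $k$ and then traverses $n-k$ nodes whose labels are conditionally uniform, each of density $\le b/|\W|\le 1/4$. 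Hence every moved $y\in T$ is, marginally, ``ruled out'' by $\rho$ with conditional probability at least $3/4$.

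The next step is to upgrade this per-$y$ estimate to a Chernoff-type tail bound of the form: for each fixed $\theta$, if $\err(\theta)\ge\eps$ then $b(\theta)>\gamma$ except with probability at most $\exp(-c\,\eps\,a^n)$ for an absolute constant $c>0$. Choosing $\eps$ as in the theorem statement then makes this failure at most $\delta/(2|\Theta|)$, so a union bound over $\theta\in\Theta$ gives that with probability $\ge 1-\delta/2$ every $\theta\in\mathcal{A}_\gamma$ has $\err(\theta)\le\eps$, which together with the reduction above proves the theorem.

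The main obstacle is executing that concentration in the presence of dependencies: the events $\{\pi_\theta(y)\in P\}$ for distinct moved $y\in T$ are not independent, because several such $y$'s can share the branching node $v_{k(y)}$ at which $\pi_\theta(y)$ leaves $y$'s path, and the $b$ labels of that single node appear in all of the corresponding events. My plan is to group the moved $y\in T$ by their branching node: across distinct branching nodes the labels along the diverging subtrees are mutually independent, so the count decomposes into a sum of bounded, independent per-node contributions (each at most the in-$H$ branching $a$), to which Bernstein's inequality applies. An essentially equivalent alternative is a top-down Doob martingale exposing one level of labels at a time. Either route should recover the exact prefactor $16\max(1/m,4/a^n)$, closely mirroring the $\CN$ proof of \Cref{thm:smooth} for which this tree-based model is the explicit warm-up.
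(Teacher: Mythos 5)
Your reduction to bounding $\varepsilon_\gamma$ via \Cref{thm:stat_sem0} and your observation that realizability holds deterministically both match the paper's opening moves. The gap is in the concentration step. Your per-$y$ estimate $\Pr[\pi_\theta(y)\in P\mid y\in T]\le b/|\W|\le 1/4$ conditions only on the single event $\{y\in T\}$, but the argument actually needs a conditional bound given which ambiguous texts were observed (i.e.\ given $A\cap T$), since observing that some other $y'\in T$ can reveal labels along $\pi_\theta(y)$'s path and drive its conditional probability of being plausible up to $1$. The paper handles exactly this issue by a dedicated combinatorial partition (\Cref{lemma:partition}): $A$ is split into four parts $A_i$ satisfying $\pi(A_i)\cap A_i=\emptyset$ and a density condition at the last tree level, and only then is the weaker but correctly-conditioned bound $\Pr[\pi(v)\notin P\mid A_i\cap T=V]\ge 1/2$ established by a symmetry/exchangeability argument. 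Your $3/4$ is better marginally but is not the quantity you need.

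Your dependency-handling plan also would not go through as stated. Grouping by the branching node of each $y$ does not yield independence: the events $\{\pi_\theta(y)\in P\}$ and $\{\pi_\theta(y')\in P\}$ are correlated whenever $\pi_\theta(y)$ and $\pi_\theta(y')$ share a common prefix in $\Y=\W^n$, and that has nothing to do with where $y$ and $y'$ diverge from their own images. The claimed bound that each branching node contributes ``at most $a$'' moved texts is also incorrect---a node at depth $k$ has up to $a^{n-k}$ descendant leaves in $H$, and all of them may branch off $\pi_\theta$ there, so the per-group count is not uniformly bounded. The paper does not attempt to manufacture independence at all: after partitioning, it applies a Chernoff bound for $0$-negatively correlated Boolean variables (\Cref{lem:zerocorr}), exploiting that the labeling process fixes exactly $b$ labels per node. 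To repair your proposal you would need to replace the independence/Bernstein step with an analogous negative-correlation device and add a partition of $A$ that controls both $\pi(A_i)\cap A_i$ and the per-prefix density, which is essentially reconstructing the paper's \Cref{lemma:partition}.
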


Note that the probability in the corollary is over both $(\mu, \rho)\sim ~\RT$ and the $m$ iid training samples. Again, note how the first term is similar to the $\frac{1}{m}\log \frac{|\Theta|}{\delta}$ term from realizable supervised learning (see \Cref{thm:occam}), and the second term is additional due to the unsupervised case. The proof, given in \Cref{apx:thm:tree}, uses \Cref{thm:stat_sem0} and a lemma stating that $\varepsilon_\gamma \le 16 \gamma$ for $\gamma$ that are not too small. The main challenge is that there are dependencies between the paths, so one cannot directly use standard concentration inequalities.

To better understand \Cref{thm:tree-simple}, let us consider two possible families of translators and suppose we are trying to apply \Cref{thm:tree-simple} to get bounds on the sample complexity $m$ needed to translate $(\mu, \rho) \sim \RT(\W, n, a, b)$ with small constant loss and small constant failure probability. 

First, since $|\X|=|\Y|=|\W|^n$, the above bound is meaningless (bigger than 1) if $\Theta$ is the family of all translators,
as the set of all translators has  $\log (|\X|!) =O( |\W|^n \cdot n\log |\W|)$ parameters which is much larger than $a^n$. In other words, \emph{no free lunch}.

On the other hand, let us consider the family of \emph{word-for-word} translators $\Theta_{\lit}$, which work by translating each word in a text separately, ignoring the surrounding context. The number of such translators is the same as the number of word-to-word permutations, i.e., $|\Theta_{\lit}| = |\W|!$. So \Cref{thm:tree-simple} gives a sample complexity bound of $m = O(\log \W!) = O(|\W| \log |\W|)$. The number of words in the English language is about $N\approx 10^5$. %
The quantity $a$ is equal to what is called the perplexity of $\mu$, the effective number of words that may typically appear after a random prefix. For a constant $a$, the minimum text (or communication) length $n=O(\log N)$ needed is thus logarithmic in the vocabulary size.
While word-for-word translators are poor, this analysis still sheds some light on the possible asymptotic behavior  (barring computational constraints) of translation, at least in a simplistic model. %

\paragraph{Common tree analysis.} The tree model helps us demonstrate the generality of the common nonsense model. Instead of the fixed arities $a \leq b$ in the tree model, one can consider an \textit{arbitrary} language tree over plausible texts $P$ and an \textit{arbitrary} subset $T \subseteq P$ corresponding to a subtree of ground-truth translations. The common nonsense model implies that if the sets are perturbed by removing, say, a $\cnParam=0.01$ common fraction of translations, then with high probability $\alg$'s error decreases at an $\tilde{O}(1/\min(m,|T|))$ rate. Note that \Cref{thm:smooth} does not directly apply to the random tree LM because in that LM, the choice of which branches to include are not independent due to the fixed $a$-arity constraint. %

\subsection{Proof of \texorpdfstring{\Cref{thm:tree-simple}}{RT theorem}}
The proof of \Cref{thm:tree-simple} follows from \lemmaref{lem:tree-full} below. In this section, we will prove this lemma and then derive the theorem from it.
\begin{lemma}[$\RT$ conditions]\label{lem:tree-full}
Consider any vocabulary set $\W$, tree arities $a \le b \le |\W|/4$, tree depth $n \in \N$. 
Then, for any $\delta \in (0,1)$ and $\gamma \ge 4 a^{-n} \ln(4|\Theta|/\delta),$ with probability $\ge 1-\delta$, 
$(\mu, \rho)$ give,
\begin{equation}\label{eq:rlst-clap}
    \Pr_{(\mu, \rho)\sim \RLST}\left[\varepsilon_\gamma \le 16 \gamma \right] \ge 1 - \delta.
\end{equation}
Moreover, any $\mu, \rho$ sampled from $\RLST(\W, n, a, b)$ satisfy $\Pr_{x \sim \mu}[\rho(f_\star(x))=0]=0$ as needed for the realizability \conditionref{cond:realizable}.
\end{lemma}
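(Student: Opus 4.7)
The realizability clause is immediate: by construction the subtree $H$ is contained in $G$, so $T \subseteq P$; hence every $y$ in the support of $\tau = \U(T)$ satisfies $y \in P$, which is exactly $\rho(y) > 0$, giving $\Pr_{x \sim \mu}[\rho(f_\star(x)) = 0] = 0$ for any $\star \in \Theta$.

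For the bound on $\varepsilon_\gamma$, I plan to union-bound over $\theta \in \Theta$ and, for each fixed $\theta$, bound the probability of the ``bad event'' $\{\loss(\theta) > 16\gamma$ and $\theta \in \mathcal{A}_\gamma\}$ by $\delta/|\Theta|$. Let $M_\theta \defeq \{y \in \Y : \pi_\theta(y) \neq y\}$ be the moved points of $\theta$'s revision and $N_\theta \defeq |\{y \in T \cap M_\theta : \pi_\theta(y) \in P\}|$. Since fixed points of $\pi_\theta$ automatically satisfy $\pi_\theta(y) = y \in T \subseteq P$, we have $\loss(\theta) = |T \cap M_\theta|/|T|$ and $\Pr_{y \sim \tau}[\rho(\pi_\theta(y)) = 0] = (|T \cap M_\theta| - N_\theta)/|T|$; the bad event is therefore equivalent to the conjunction $|T \cap M_\theta| > 16\gamma|T|$ and $N_\theta \geq (15/16)|T \cap M_\theta|$.

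The key per-path estimate: for $y \in M_\theta$, let $k$ be the first index where $y_k \neq \pi_\theta(y)_k$; the path of $\pi_\theta(y)$ shares the first $k-1$ edges with $y$ and then requires $n - k + 1$ further specific edge labels to appear in $G$. Since at every node the $b \leq |\W|/4$ labels are sampled uniformly from $\W$ without replacement, conditional on $y \in T$ each such ``hit'' has probability at most $b/|\W| \leq 1/4$, and hits at distinct nodes are mutually independent. Consequently $\Pr[\pi_\theta(y) \in P \mid y \in T] \leq (b/|\W|)^{n-k+1} \leq 1/4$, and summing over $y \in M_\theta$ yields $\mathbb{E}[N_\theta] \leq \tfrac{1}{4}\,\mathbb{E}[|T \cap M_\theta|]$---ample slack below the $15/16$ threshold demanded by the bad event.

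The main obstacle will be turning this expected-value bound into an exponential tail, because the indicators $\mathbf{1}[\pi_\theta(y) \in P]$ for distinct $y \in T \cap M_\theta$ are not independent: two images $\pi_\theta(y_1), \pi_\theta(y_2)$ that share a tree prefix reuse the same labels and are positively correlated. I plan to address this by revealing the random choices at each node of $G$ (the $b$-subset of $\W$ and its $a$-subset selected into $H$) in BFS order and applying a bounded-differences (McDiarmid) argument to the Doob martingale for $N_\theta$: a single node's choices change $N_\theta$ by at most the number of $T$- and $P$-paths through that node, and summing the squared sensitivities across all nodes of $G$ should yield a tail of the form $\exp(-\Omega(\gamma a^n))$ for the conjunction above. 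Under the hypothesis $\gamma \geq 4 a^{-n}\ln(4|\Theta|/\delta)$ this exponential beats the $|\Theta|$ from the outer union bound (after summing over the $O(|T|)$ possible values of $|T \cap M_\theta|$), delivering $\Pr[\varepsilon_\gamma > 16\gamma] \leq \delta$.
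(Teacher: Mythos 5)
Your realizability clause and the outer structure (union bound over $\theta$, reduction of the bad event to ``many moved points, few of them implausible'') match the paper's argument. Your per-path estimate $\Pr[\pi_\theta(y)\in P \mid y\in T]\le b/|\W|\le 1/4$ is also in the right spirit: the paper proves the analogous bound $\Pr[\pi(v)\in P \mid A_i\cap T = V]\le 1/2$, weaker but conditioned on much more, which is exactly where the extra slack goes.

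The gap is in the concentration step. You correctly flag that the indicators $\indicator[\pi_\theta(y)\in P]$ are dependent through shared tree prefixes, but the McDiarmid/Doob-martingale fix you sketch does not yield the required tail. First, the bounded differences are far too large: changing the $b$-subset or $a$-subset at a node near the root of $G$ rearranges an entire subtree, so the sensitivity of a single coordinate can be $\Theta(a^n)$, and $\sum_v c_v^2$ is at least $\Omega(a^{2n})$. McDiarmid then gives $\exp(-\Theta(t^2/a^{2n}))$, which for a deviation $t=\Theta(\gamma a^n)$ is $\exp(-\Theta(\gamma^2 a^n))$ --- quadratic in $\gamma$, not linear. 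The hypothesis $\gamma\ge 4a^{-n}\ln(4|\Theta|/\delta)$ only controls $\gamma a^n$, not $\gamma^2 a^n$, so for small $\gamma$ this bound cannot beat the $|\Theta|$ from the union bound. Your additional union over the $O(|T|)$ values of $|T\cap M_\theta|$ makes this slightly worse still.

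What the paper does instead, and what you'd need, is structural rather than generic concentration. \lemmaref{lemma:partition} splits the moved set $A$ into four parts $A_1,\dots,A_4$ satisfying \emph{two} properties: $\pi(A_i)\cap A_i=\emptyset$, \emph{and} at most $|\W|/2$ elements of $A_i$ share any $(n{-}1)$-word prefix. Conditioning on $A_i\cap T=V$ and working within one part, the first property removes the ``$\pi_\theta(y_1)=f_\star(y_2)$'' dependency you identified, while the second property is what makes the conditional per-path probability bound of $1/2$ go through (there are at least $|\W|/2$ symmetric candidates for $\pi(v)$ among the sibling labels, only $b\le|\W|/4$ of which land in $P$). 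The remaining dependence among the indicators $Z_v=\indicator[\pi(v)\notin P]$ is then one-sided: they are \emph{$0$-negatively correlated} (having one label land outside $P$ only makes it more likely others do too, since each node admits only $b$ children), so the multiplicative Chernoff lower-tail bound of \lemmaref{lem:zerocorr} applies and delivers $\exp(-\E[Z]/8)=\exp(-\Omega(\gamma a^n))$ --- linear in $\gamma$, which is exactly what the hypothesis on $\gamma$ supports. Without the partition lemma's second property and the negative-correlation Chernoff bound, there is no route to the linear-in-$\gamma$ exponent that the lemma requires.
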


The proof of \lemmaref{lem:tree-full} requires two additional lemmas, stated and proved next.
The first is a known variant of the Chernoff bound for so-called 0-negatively correlated random variables.
\begin{lemma}[Chernoff bound for 0-negatively correlated random variables]
\label{lem:zerocorr}
    Let $Z_1,\dots, Z_n$ be \emph{0-negatively correlated} Boolean random variables, that is, they satisfy
    \begin{equation*}
        \forall I \subseteq [n] \quad \Pr \left[ \forall i \in I \ Z_i = 0\right] \le \prod_{i \in I} \Pr \left[ Z_i = 0 \right].
    \end{equation*}
    Then, letting $Z \defeq \sum_{i=1}^n Z_i$ it holds that
    \begin{equation*}
        \Pr \left[ Z \le \frac{\E[Z]}{2} \right] \le e^{-\frac{\E[Z]}{8}}
    \end{equation*}
\end{lemma}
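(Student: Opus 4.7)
The plan is to push the 0-negative correlation hypothesis through the standard moment generating function proof of the lower-tail Chernoff bound. The key step is the MGF inequality $\E[e^{-tZ}] \le \prod_i \E[e^{-tZ_i}]$ for all $t \ge 0$; once it is established, the rest of the argument is textbook Chernoff applied with $\delta = 1/2$.

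To prove the MGF inequality, I would introduce the complementary indicators $Y_i \defeq 1 - Z_i$ and use the Boolean identity $e^{tY_i} = 1 + (e^t - 1) Y_i$, which yields $e^{-tZ_i} = e^{-t}\bigl(1 + (e^t - 1) Y_i\bigr)$ and hence
\[ e^{-tZ} = e^{-tn} \prod_{i=1}^n \bigl(1 + (e^t - 1) Y_i\bigr) = e^{-tn} \sum_{I \subseteq [n]} (e^t - 1)^{|I|} \prod_{i \in I} Y_i. \]
Crucially, for $t \ge 0$ the coefficients $(e^t - 1)^{|I|}$ are nonnegative, and each monomial $\prod_{i \in I} Y_i$ is exactly the indicator of the event $\{\forall i \in I : Z_i = 0\}$ appearing in the 0-negative correlation hypothesis. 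Taking expectations term by term, invoking the hypothesis, and recollapsing the sum into a product gives
\[ \E\bigl[e^{-tZ}\bigr] \le e^{-tn} \prod_{i=1}^n \bigl(1 + (e^t - 1)\Pr[Z_i=0]\bigr) = \prod_{i=1}^n \bigl(p_i e^{-t} + 1 - p_i\bigr), \]
where $p_i \defeq \Pr[Z_i = 1]$; the right-hand side equals $\prod_i \E[e^{-tZ_i}]$, exactly what one gets in the independent case.

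From here the argument is routine. Markov's inequality gives $\Pr[Z \le \E[Z]/2] \le e^{t\E[Z]/2}\,\E[e^{-tZ}]$ for every $t \ge 0$, and combining this with the MGF bound together with $1 - p_i(1 - e^{-t}) \le \exp\bigl(-p_i(1 - e^{-t})\bigr)$ yields the familiar inequality $\Pr[Z \le \E[Z]/2] \le \exp\bigl(t\E[Z]/2 - (1 - e^{-t})\E[Z]\bigr)$. Choosing $t = \ln 2$ produces an exponent of $\bigl((\ln 2 - 1)/2\bigr)\E[Z] \approx -0.153\,\E[Z]$, which is strictly stronger than the stated $-\E[Z]/8$; alternatively, one may substitute into the textbook simplification $\Pr[Z \le (1-\delta)\E[Z]] \le e^{-\delta^2 \E[Z]/2}$ with $\delta = 1/2$ to recover exactly the claimed exponent.

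The only non-routine step is the first one: translating the 0-negative correlation condition, which speaks only about joint $\{Z_i = 0\}$ events, into a statement about the MGF. The trick is to expand $e^{-tZ_i}$ in the ``indicator basis'' $\{1, Y_i\}$ rather than $\{1, Z_i\}$, because this produces nonnegative coefficients, and the resulting monomials $\prod_{i \in I} Y_i$ match exactly the form of the hypothesis. Every subsequent step is a direct reuse of the independent-case Chernoff derivation.
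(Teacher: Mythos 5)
Your proof is correct. The paper does not actually prove this lemma — it cites it as Theorem 1.10.24(a) in \citep{negCorr} (itself a corollary of Theorem 1.10.10 there), so what you have produced is a self-contained argument where the paper has only a pointer. Your route is the standard one that the cited reference also follows: bound the MGF $\E[e^{-tZ}]$ by $\prod_i \E[e^{-tZ_i}]$ using the negative-correlation hypothesis, then run the usual lower-tail Chernoff computation. The one genuinely non-mechanical step — expanding $e^{-tZ}$ in the basis of indicators $Y_i = 1 - Z_i$ so that the coefficients $(e^t - 1)^{|I|}$ are nonnegative for $t \ge 0$ and the monomials $\prod_{i \in I} Y_i$ are exactly the indicators of the events $\{\forall i \in I: Z_i = 0\}$ appearing in the hypothesis — is handled correctly, and the termwise application of the hypothesis followed by re-collapsing the sum into a product is valid precisely because of that sign control. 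The closing computation is also fine: $t = \ln 2$ gives the sharper exponent $\frac{\ln 2 - 1}{2}\E[Z] \approx -0.153\,\E[Z]$, while the textbook simplification $\Pr[Z \le (1-\delta)\E[Z]] \le e^{-\delta^2 \E[Z]/2}$ (which only uses the MGF product bound you established, not full independence) recovers the stated $-\E[Z]/8$ at $\delta = 1/2$. So: correct, and essentially the same approach as the source the paper delegates to, just written out.
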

\lemmaref{lem:zerocorr} follows from \citep[Theorem 1.10.24(a)]{negCorr} with $\delta \defeq 1/2$, $a_i \defeq 0$, $b_i \defeq 1$ and $Y_i \defeq X_i$. That theorem, in turn, is simply Theorem 1.10.10 for $0$-negatively correlated random variables.

The second lemma used for the proof of \lemmaref{lem:tree-full} is a combinatorial argument that we prove below.
\begin{lemma}\label{lemma:partition}
Given any $\pi: \Y \oto \Y$, it is possible to partition $A=\{y \in \Y \mid \pi(y) \ne y\}$ into four disjoint sets $A=A_1\cup A_2 \cup A_3 \cup A_4$ such that, for each $i=1,2,3,4$, the following two conditions hold:
\begin{align*}
\pi(A_i) \cap A_i &= \emptyset\\
\forall z \in \W^{n-1}~|\{y \in A_i \mid y \text{ begins with } z\}| &\le \frac{|\W|}{2}
\end{align*}
\end{lemma}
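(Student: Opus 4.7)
The plan is a greedy coloring argument. Enumerate the elements of $A$ in any order and process them one at a time. When processing $y \in A$, let $z \in \W^{n-1}$ be its first $n-1$ coordinates and let $G_z \defeq \{y' \in \Y : y' \text{ begins with } z\}$, so $|G_z| = |\W|$. I would assign $y$ to any color class $A_i$, $i \in \{1,2,3,4\}$, such that (a) neither $\pi(y)$ nor $\pi^{-1}(y)$ is currently in $A_i$, and (b) placing $y$ into $A_i$ still leaves $|A_i \cap G_z| \le |\W|/2$. Both required conditions of the lemma follow immediately from the construction (the first since any $y \in A_i$ has $\pi(y),\pi^{-1}(y) \notin A_i$; the second by the capacity check), so the whole proof reduces to showing that such an $i$ always exists.

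I would bound the number of forbidden colors at each step as $2 + 1 < 4$. Condition (a) rules out at most two colors: the current color of $\pi(y)$ (if it is in $A$ and already colored) and that of $\pi^{-1}(y)$; these may coincide when $y$ lies on a $\pi$-$2$-cycle. For condition (b), suppose for contradiction that two distinct colors $i_1 \ne i_2$ are both forbidden. Each already holds at least $|\W|/2$ elements of $G_z$, so together at least $|\W|$ elements of $G_z$ have been colored; but the number of elements of $G_z$ colored strictly before $y$ is at most $|A \cap G_z| - 1 \le |\W| - 1 < |\W|$, a contradiction. Hence at most $2 + 1 = 3 < 4$ colors are forbidden, and a valid color is always available.

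The main obstacle is the parity of $|\W|$. When $|\W|$ is even, the argument above is clean: two ``full'' colors would use at least $|\W|$ elements of a bucket containing at most $|\W|$ elements of $A$. When $|\W|$ is odd, the integer threshold drops to $\lfloor |\W|/2 \rfloor$ and two saturated colors could together use only $|\W|-1$ elements, which is compatible with a fully-saturated bucket. The standard remedy is to tighten the ordering: process the elements of each bucket contiguously, in nonincreasing order of saturation $|A \cap G_z|$. With this order, when $y$ is the $k$th element of $G_z$ to be processed we have exactly $k - 1 \le |\W| - 1$ prior colorings in $G_z$, and greedy's ``smallest available color'' rule keeps the two already-saturated colors among $\{1,2\}$ while the two $\pi$-forbidden colors can occupy at most one of those, leaving a valid choice. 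In either parity, the essential combinatorial content is $2 + 1 = 3 < 4$, which is exactly why four parts suffice.
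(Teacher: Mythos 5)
Your proof takes essentially the same approach as the paper's: both are greedy assignments that process the elements of $A$ one at a time, forbidding at each step the color of $\pi(y)$, the color of $\pi^{-1}(y)$, and at most one capacity-saturated color, so that $2+1 < 4$ guarantees a valid choice. The core counting argument for why at most one color can be capacity-saturated (two saturated colors would require $\ge |\W|$ previously colored elements in a bucket with only $|\W|-1$ slots besides $y$) is also the same.

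The one genuine difference is that you explicitly flag the parity issue when $|\W|$ is odd, which the paper's proof does not address at all: the paper simply avoids the (at most one) color with $\ge |\W|/2$ elements already in the bucket, and then assigns $y$ to a color with $< |\W|/2$; but for odd $|\W|$ that new count can become $(|\W|+1)/2 > |\W|/2$, silently breaking the claimed invariant. So your concern is legitimate and applies to the paper as well. However, your proposed remedy is not convincing as written: the $\pi$-forbidden colors for a given $y$ come from elements that may lie in \emph{other} buckets, so nothing in a ``smallest available color'' rule forces the two saturated colors to stay among $\{1,2\}$, even if you process buckets contiguously. A cleaner fix would be to weaken the capacity bound to $\lceil |\W|/2 \rceil$ (which still suffices where the lemma is used, up to constants), or to note that the $\RT$ model's constraint $b \le |\W|/4$ makes it harmless to assume $|\W|$ is even. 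Apart from this shared subtlety, your argument matches the paper's.
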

\begin{proof} of \lemmaref{lemma:partition}.
We will partition $A$ greedily to achieve the two conditions. Begin with four empty sets $A_1=A_2=A_3=A_4=\emptyset$. For each $y \in A$ in turn, assign it to one of the 4 sets as follows.
\begin{enumerate}
    \item Let $i$ be the index of the set $A_i$ such that $\pi(y)\in A_i$ has already been assigned to. If $\pi(y))$ has not yet been assigned, let $i=1$.
    \item Similarly, if $\pi(y)\in A_j$ has been assigned, let $j$ be its index otherwise $j=1$. 
    \item Let $z$ be the first $n-1$ elements of $x$. Let $k$ be the index of the set $A_k$ such that $|\{y \in A_i \mid y \text{ begins with } z\}| \ge |\W|/2$ if there is such a set, otherwise $k=1$. Note that there can be at most one such $k$ because there are at most $|\W|-1$ other elements beginning with $z$ that have been assigned already. 
\end{enumerate}
Thus $S=\{1,2,3,4\} \setminus \{i, j, k\}\neq \emptyset$ and we can assign $x$ to any (say the minimum) element in that set. By induction, we preserve the two properties stated in the lemma.
\end{proof}

With Lemmas \ref{lem:zerocorr} and \ref{lemma:partition} in hand, we are ready to prove \lemmaref{lem:tree-full}.

\begin{proof} of \lemmaref{lem:tree-full}.
The realizability \conditionref{cond:realizable},  $\Pr_{x \sim \mu}[\rho(f_\star(x))=0]=0$, follows immediately from the fact
that $\rho$ is uniform over $P$ and that $\tau$ is supported on $T \subseteq P$. To prove the lemma, it suffices to show that for any $\gamma \ge 4 a^{-n} \ln(4|\Theta|/\delta)$,
$$\Pr_{(\mu, \rho)\sim \RLST}\left[\exists \theta \in \Theta~~\err(\theta)>16\gamma \textand \Pr_{y \sim \tau}[\rho(\pi_\theta(y))=0]
\le
\gamma \right] \le \delta.$$ 

Note that  $\err(\theta)=\Pr_{y \sim \tau}[y \ne \pi_\theta(y)]$, and that $\rho(\pi_\theta(y))=0$ if and only if $\pi_\theta(y) \notin P$.
Therefore, by the union bound over $\theta \in \Theta$, it suffices to show that, for each $\theta \in \Theta$,
\begin{equation}\label{eq:thanksforthewarmwishes}
    \Pr_{(\mu, \rho)\sim \RLST}\left[\Pr_{y \sim \tau}[y \ne  \pi_\theta(y)]> 16\gamma \textand \Pr_{y \sim \tau}[\pi_\theta(y) \notin P] \le \gamma \right] \le \frac{\delta}{|\Theta|}.
\end{equation}
We will show that, more generally, for any permutation $\pi: \Y \oto \Y$. \Cref{eq:thanksforthewarmwishes} can be restated in terms of \emph{ambiguous texts} $A\defeq \{y \in \Y \mid \pi(y) \ne y \}$ and \emph{implausible ambiguities} $\IA \defeq\{y \in \Y \mid \pi(y) \notin P\}$ :

\[\Pr_{(\mu, \rho)\sim \RLST}\left[\tau(A)> 16\gamma \textand \tau(\IA) \le \gamma \right] < \frac{\delta}{|\Theta|}.\]

Using \lemmaref{lemma:partition}, 
partition $A$ into four disjoint sets $A=A_1\cup A_2 \cup A_3 \cup A_4$ such that for each $i \in [4]$, the following two conditions hold:
\begin{align}
\label{eq:1} &\pi(A_i) \cap A_i = \emptyset\\
\label{eq:2} \forall z \in \W^{n-1} \quad &\left|\left\lbrace y \in A_i \mid y \text{ begins with } z \right \rbrace \right| \le \frac{|\W|}{2}.
\end{align}
By the union bound, it suffices to show that,
\begin{equation*}
\forall i \in [4] \quad
\Pr_{(\mu, \rho)\sim \RLST}\left[\tau(A_i)> 4\gamma \textand \tau(\IA) \le \gamma \right] < \frac{\delta}{4|\Theta|},
\end{equation*}
because $\tau(A)=\sum_{i=1}^4 \tau(A_i)$, so if $\tau(A) >  16\gamma$ then it must follow that $\tau(A_i) >  16\gamma/4$ for some $i$.
It suffices to show that this holds conditioned on any value of $A_i \cap T$:
\begin{equation}\label{eq:rlst-v}
    \forall V \subseteq A_i ~~\Pr_{(\mu, \rho)\sim \RLST}\left[\tau(A_i)> 4\gamma,\tau(\IA) \le \gamma \rlstcond\right] \le \frac{\delta}{4|\Theta|}.
\end{equation}
Thus, fix any $V \subseteq A_i$. We proceed by analyzing two cases, based on $|V|$ versus $|T| = a^n$.\footnote{Note that while $T$ is randomly sampled, $|T| = a^n$ always, and therefore the case-analysis is valid.}

\paragraph{Case 1:} $|V|\le 4\gamma|T|$. Then \Cref{eq:rlst-v} holds with probability 0, because conditioning on $A_i \cap T = V$, we have
\begin{equation*}
    \tau(A_i) \defeq \frac{\left|A_i \cap T \right|}{|T|} = \frac{|V|}{|T|} \le  4\gamma.
\end{equation*}

\paragraph{Case 2:} $|V|> 4\gamma|T|$.
For each $v \in V$ we define a Boolean random variable $Z_v = \indicator_{v \in \IA}$, i.e.,
\begin{equation*}
    Z_v = \begin{cases}
    1 & v \in \IA\\
    0 & v \notin \IA .
    \end{cases}
\end{equation*}

We claim that the $Z_v$'s satisfy the condition of \lemmaref{lem:zerocorr}, which follows inductively from the fact any subset of $Z_v$'s being 0 only makes it \emph{less} likely for another $Z_v$ to be 0, because the way that the $Z_v$'s are chosen is such that there are exactly $b$ many $1$'s among the $Z_v$'s (and other symmetric random variables which we have not named).

Therefore, if we define $Z \defeq \sum_{v \in V}{Z_v}$ and $\zeta \defeq \E[Z \rlstcond]$, \lemmaref{lem:zerocorr} gives,
\begin{equation}\label{eq:rslt-postzc}
    \Pr\left[Z \le \frac{\zeta}{2} \resizeablerlstcond \right] \le e^{- \zeta /8}.
\end{equation}
We conclude by bounding both sides of \Cref{eq:rslt-postzc} to obtain \Cref{eq:rlst-v}.

\begin{itemize}
\item For the right-hand side of \cref{eq:rslt-postzc}, we claim that
$\Pr\left[ Z_v = 1 \rlstcond \right] \ge 1/2$ for each $v \in V$: Fix $v$ and let
$$Q = \{y \in \Y \mid y \text{ and } \pi(v) \text{ agree on the first } n-1 \text{ words}\}.$$ 
So $|Q|=|\W|$ and $|Q \cap P|=b$.
\Cref{eq:1} implies that $\pi(v) \notin A_i$, and \cref{eq:2} implies that $|Q\setminus A_i| \ge |\W|/2$. Thus, there are $\ge |\W|/2$ elements in $Q \setminus A_i$ that, by symmetry, are as likely to be in $Q \cap P$ as $\pi(v)$ is, and $Q \cap P$ contains exactly $b \le |\W|/4$ elements, thus the conditional probability that $\pi(v)$ is in $Q \cap P$ (equivalently, $\pi(v)\in P$) is at most:
\[\frac{|\W|/4}{|\W|/2} \le \frac{1}{2}.\]
Since $\pi(v) \notin P$ is equivalent to $Z_v = 1$, this means that $\Pr\left[ Z_v = 1 \rlstcond \right] \ge 1/2$, 
and hence:
\begin{equation*}
    \zeta = |V| \cdot \E[Z_v \rlstcond] = |V| \cdot \Pr\left[ Z_v = 1 \rlstcond \right] \ge \frac{|V|}{2}.
\end{equation*}

Because we are in the case that $|V|> 4\gamma|T|$, we have
\begin{equation}\label{eq:zetabound}
    \zeta \ge \frac{|V|}{2} > \frac{1}{2} \cdot  4\gamma \cdot |T| =  2\gamma \cdot a^n.
\end{equation}
Therefore the right-hand side of \cref{eq:rslt-postzc} satisfies
\begin{equation*}
    e^{-\zeta / 8} \le e^{-\gamma a^n/4} \geq \frac{\delta}{4 \left|\Theta\right|},
\end{equation*}
where the rightmost inequality holds because by our assumption that
$\gamma \ge 4 a^{-n} \ln(4|\Theta|/\delta)$.
\item For the left-hand side of \cref{eq:rslt-postzc}, note that
\begin{equation*}
    Z \defeq \sum_{v \in V}Z_v \defeq \sum_{v \in V}\indicator_{v \in B} = |V \cap B|.
\end{equation*}
and therefore, using \cref{eq:zetabound}, the left-hand side of \cref{eq:rslt-postzc} satisfies
\begin{equation*}
    \Pr\left[Z < \frac{\zeta}{2}  \resizeablerlstcond \right] \ge \Pr\left[ \frac{|V \cap B|}{|T|} \le \gamma \resizeablerlstcond \right].
\end{equation*}

We claim that $V \cap B \subseteq B \cap T$: Due to the conditional event, we have $V \cap B = A_i \cap T \cap B \subseteq A \cap T \cap B$. However, we argue that $A \cap T \cap B \subseteq T \cap B$, i.e., that $T \cap B \subseteq A$. Indeed, by definition, if $y \in T \cap B$ then $y \in T$ but $\pi(y) \notin P$, and since $P \supseteq T$, this implies that $\pi(y) \neq y$; that is, that $y \in A$, as needed.

We have just shown that $V \cap B \subseteq B \cap T$, and therefore 
\begin{align*}
    \Pr\left[ \frac{|V \cap B|}{|T|} \le \gamma \resizeablerlstcond \right]
    &\ge
    \Pr\left[ \frac{|B \cap T|}{|T|} \le \gamma \resizeablerlstcond \right]
    \\&=
    \Pr\left[ \tau(B) \le \gamma \resizeablerlstcond \right].
\end{align*}
This shows that the left-hand side of \cref{eq:rslt-postzc} upper bounds that of \cref{eq:rlst-v}, and concludes the proof.
\end{itemize}
\end{proof}

Lastly, we prove \Cref{thm:tree-simple} using \lemmaref{lem:tree-full}
\begin{proof} of \Cref{thm:tree-simple}.
Let $\gamma$ be, $$\gamma \defeq \max\left(\frac{1}{m}, \frac{4}{a^n}\right) \ln \frac{6|\Theta|}{\delta}.$$
\lemmaref{lem:tree-full} below shows that with probability $\ge 1-2\delta/3$ over $\mu, \rho$, we have that 
$\varepsilon_\gamma \le 16 \gamma$. \Cref{thm:stat_sem0} implies that with probability $\ge 1-\delta/3$ over $x_1, \ldots, x_m$, we have
$\err(\hat\theta) \le \varepsilon_\gamma$. Thus, by the union bound, with probability $\ge 1-\delta$ we have $$\err(\hat\theta) \le\varepsilon_\gamma \le 16 \gamma = 16 \max\left(\frac{1}{m}, \frac{4}{a^n}\right) \ln \frac{6|\Theta|}{\delta}.$$
\end{proof}

\section{Proofs for the Common Nonsense Model}\label{ap:smooth}
\subsection{Upper bound in the common nonsense model}
\subsubsection{Proof overview}
To convey intuition, we think of the case of say $\delta = 99\%$, and we omit constants in the following discussion. Recall that \Cref{thm:stat_sem0} asserts that with high probability, the learned translator $\hat{\theta}$ has error at most $\varepsilon_\gamma$ for $\gamma = \log |\Theta| / \min(m,|T|) \ge \log |\Theta|/m$. As such, the main technical challenge in this proof is to show that, with high probability over the generated source language $\mu$ and the prior $\rho$, it holds that 
\begin{equation*}
    \varepsilon_\gamma \lesssim \frac{\gamma}{\alpha}.
\end{equation*}

By definition of $\varepsilon_\gamma$, we ought to show that w.h.p over $\mu$ and $\rho$, any $\theta \in \Theta$ with large semantic error must have many translations $f_\theta(x)$ deemed implausible by $\rho$. Slightly more formally: Since any $y \in \Y$ is implausible ($\rho(y) = 0$) only if $y \notin S$, a union bound over $\theta \in \Theta$ means that is suffices to show that
\begin{equation*}
    \Pr_{S, \mu, \rho}\left[
    \Pr_{x \sim \mu}[f_\theta(x) \neq f_\star(x)] \lesssim \frac{\gamma}{\alpha},\ 
    \Pr_{x \sim \mu}\left[ f_\theta(x) \notin S\right] \gtrsim \gamma
    \right]
    \lesssim \exp(-\gamma |T|))
    \leq \frac{1}{|\Theta|},
\end{equation*}
where the right inequality is by choice of $\gamma \defeq \log|\Theta| / \min(m, |T|)$.
The above inequality ``looks like'' it could be proven by a Chernoff bound, but a closer look reveals a subtle flaw with this argument.

To use a Chernoff bound, we'd first want to fix (i.e., condition on) each $\supp(\mu) \defeq f_\star^{-1}(S \cap T)$ and then use Chernoff over the conditional random variables $\indicator_{f_\theta(x) \notin S}$ for each $x\in \supp(\mu)$ such that $f_\theta(x) \neq f_\star(x)$. Unfortunately, these conditional random variables are \emph{not} independent. To see this, consider the case that $f_\theta(x) = f_\star(x')$ for two different $x \neq x'$. Then, since we are considering $x' \in \supp(\mu)$, we have $f_\theta(x) = f_\star(x') \in S$ with probability $1$.

To avoid this dependency, we prove a combinatorial lemma showing that it is possible to partition the set
\begin{equation*}
    A \defeq \left\lbrace x \in X \mid f_\theta(x) \neq f_\star(x) \right\rbrace
\end{equation*}
into three parts $A=A_1\cup A_2 \cup A_3$ such that $f_\theta(A_i) \cap f_\star(A_i) = \emptyset$ for each $i \in [3]$. This resolves the dependency issue demonstrated above. We then proceed by applying a Chernoff bound separately for each $A_i$, which suffices since a union bound (over $i \in [3])$ loses only a constant factor in the upper-bound.

\subsubsection{The full proof}

The proof of \Cref{thm:smooth} follows from the following main lemma. We first prove this lemma, and then show how the theorem follows from it.

\begin{lemma}\label{lem:smooth_gamma}
Let $\cnParam, \delta \in (0,1)$ and $T \subseteq P \subseteq \Y$. Then, for any $\gamma \ge \frac{8}{(1 - \cnParam) \cdot  |T|} \ln \frac{4 |\Theta|}{\delta}$:
\[\Pr_{(\mu, \rho) \sim \D_\cnParam^{P,T}}\left[\varepsilon_\gamma \le \frac{6\gamma}{\cnParam}\right] \ge 1 - \delta.\]
\end{lemma}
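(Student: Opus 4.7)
The plan is to mirror the proof of \lemmaref{lem:tree-full}, adapting it to the common nonsense model, where the randomness comes from the set $S$ rather than from a random tree. By definition of $\varepsilon_\gamma$, establishing $\varepsilon_\gamma \le 6\gamma/\cnParam$ amounts to showing that no $\theta \in \Theta$ simultaneously satisfies $\err(\theta) > 6\gamma/\cnParam$ and $\Pr_{y\sim\tau}[\rho(\pi_\theta(y))=0] \le \gamma$. I would first apply a union bound over $|\Theta|$ translators, reducing to the statement: for each fixed $\theta$, this joint event has probability at most $\delta/|\Theta|$ over the random $S$. Writing $A = \{y \in \Y : \pi_\theta(y) \ne y\}$ for the \emph{ambiguous texts} and $\IA = \{y \in \Y : \pi_\theta(y) \notin P \cap S\}$ for the \emph{implausible ambiguities}, the target becomes $\Pr_{S}[\tau(A) > 6\gamma/\cnParam \ \wedge\ \tau(\IA) \le \gamma] \le \delta/|\Theta|$.

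Next I would invoke a combinatorial partition argument analogous to \lemmaref{lemma:partition}, but simpler: since only the disjointness property $\pi_\theta(A_i) \cap A_i = \emptyset$ is needed (no balance condition), $A$ can be partitioned greedily into three sets $A_1 \cup A_2 \cup A_3$. At each step, placing a fresh $y \in A$ need only avoid the (at most two) parts already containing $\pi_\theta(y)$ or $\pi_\theta^{-1}(y)$, leaving at least one of three parts free; three colors suffice because the functional graph of $\pi_\theta|_A$ is a disjoint union of paths and cycles. A further union bound over $i \in [3]$ reduces the task to bounding $\Pr[\tau(A_i) > 2\gamma/\cnParam \ \wedge\ \tau(\IA) \le \gamma] \le \delta/(3|\Theta|)$ for each $i$.

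For the main probabilistic step, I would condition on the realization $A_i \cap T \cap S = V$ for each $V \subseteq A_i \cap T$ and do a case analysis on $|V|$. If $|V|$ is small relative to $|T \cap S|$, then $\tau(A_i) = |V|/|T \cap S|$ is already at most $2\gamma/\cnParam$ and the bad event cannot occur. If $|V|$ is large, I would exploit the disjointness $\pi_\theta(A_i) \cap A_i = \emptyset$ to argue that for each $v \in V$, the event $\pi_\theta(v) \notin S$ is either independent of the conditioning (when $\pi_\theta(v) \notin A_i \cap T$) or is forced to hold by the conditioning itself (when $\pi_\theta(v) \in A_i \cap T \setminus V$, since these are excluded from $S$). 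In either case the indicators $\indicator[\pi_\theta(v) \notin P \cap S]$ for $v \in V$ are $0$-negatively correlated with each individual marginal probability at least $\cnParam$, so \lemmaref{lem:zerocorr} gives $|V \cap \IA| \ge \cnParam|V|/2$ except with probability $\exp(-\cnParam|V|/8)$, which in turn makes $\tau(\IA) > \gamma$.

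The main technical obstacle is that the denominator $|T \cap S|$ of $\tau$ is itself random, so the case-analysis threshold on $|V|$ needs to remain robust. I would handle this by a preliminary concentration step, arguing that $|T \cap S| \ge (1-\cnParam)|T|/2$ with probability at least $1-\delta/4$, which is a straightforward Chernoff over independent Bernoulli$(1-\cnParam)$ inclusions. This is precisely what produces the $(1-\cnParam)$ factor in the denominator of the hypothesis $\gamma \ge \frac{8}{(1-\cnParam)|T|} \ln \frac{4|\Theta|}{\delta}$: in the large-$|V|$ regime one has $\cnParam|V| \gtrsim 2\gamma|T \cap S| \gtrsim \gamma(1-\cnParam)|T|$, so the Chernoff exponent $\cnParam|V|/8$ dominates $\ln(4|\Theta|/\delta)$, yielding the desired per-$\theta$ failure probability $\delta/|\Theta|$ that closes the union bound.
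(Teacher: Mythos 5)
Your proposal mirrors the paper's proof almost exactly: union bound over $\Theta$, the three-way partition with $\pi_\theta(A_i)\cap A_i=\emptyset$ (the paper's \lemmaref{lem:trisect}), conditioning on the intersection of $A_i$ with the random set, a size-based case split on $|V|$, a Chernoff bound on the indicators $\indicator_{\pi_\theta(v)\notin S}$ made valid by the disjointness property, and a preliminary concentration step giving $|T\cap S|\ge(1-\cnParam)|T|/2$. The only deviations are cosmetic (the paper conditions on $A_i\cap S$ and uses plain independence where you invoke \lemmaref{lem:zerocorr}, and your stated per-case target of $\delta/(3|\Theta|)$ should be $\delta/(4|\Theta|)$ so the $\delta/4$ from the concentration step fits), so this is essentially the same argument.
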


The proof of \lemmaref{lem:smooth_gamma} relies on a simple combinatorial proposition. This proposition is a special case of  \lemmaref{lemma:partition}, but since it is much simpler we give a self-contained proof.
\begin{proposition}\label{lem:trisect}
For any finite set $\Y$  and any $\pi: \Y \oto \Y$, it is possible to partition %
$\{y \in \Y | \pi(y) \neq y\}=A_1 \cup A_2 \cup A_2$ into three sets such that, $\pi(A_i) \cap A_i = \emptyset$ for $i=1,2,3$.
\end{proposition}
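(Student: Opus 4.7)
The plan is to reduce the problem to a proper graph $3$-coloring. Let $V \defeq \{y \in \Y \mid \pi(y) \neq y\}$ be the set of non-fixed points. I would define an undirected graph $G$ on vertex set $V$ with edges $\bigl\{\{y, \pi(y)\} \mid y \in V\bigr\}$. Observe that a partition $V = A_1 \cup A_2 \cup A_3$ satisfies $\pi(A_i) \cap A_i = \emptyset$ for each $i$ if and only if the corresponding $3$-coloring of $V$ is a proper coloring of $G$: the condition says precisely that no edge $\{y,\pi(y)\}$ has both endpoints of the same color.

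Next, I would analyze the structure of $G$. Since $\pi$ is a bijection on the finite set $\Y$, its restriction to $V$ is also a bijection (note that $\pi(V) = V$ because if $\pi(y) \neq y$ then $\pi(\pi(y)) \neq \pi(y)$ by injectivity, so $\pi(y) \in V$). The cycle decomposition of this restricted permutation consists entirely of cycles of length at least $2$, since all fixed points have been removed. Each such cycle $(y_1, y_2, \ldots, y_k)$ contributes the edges $\{y_1,y_2\}, \{y_2,y_3\}, \ldots, \{y_k, y_1\}$ to $G$, forming a cycle subgraph of length $k \geq 2$ (where $k=2$ degenerates to a single edge between two vertices). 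Thus $G$ is a vertex-disjoint union of cycles of length at least $2$.

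Finally, every cycle graph $C_k$ with $k \geq 2$ admits a proper $3$-coloring (use $2$ colors alternating for even $k$; for odd $k \geq 3$ use $2$ colors alternating along all but the last vertex and assign the third color to that vertex). Applying such a coloring independently to each cycle-component of $G$ yields a proper $3$-coloring of $G$, and hence the desired partition $A_1 \cup A_2 \cup A_3$.

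The content is almost entirely conceptual---the only real step is the reduction to coloring the functional graph of $\pi$; once the reduction is made, the graph-theoretic facts used are elementary. The only mildly subtle point is to verify that $\pi$ restricts to a bijection on $V$ so that the cycle decomposition argument applies cleanly, but this follows immediately from the injectivity of $\pi$.
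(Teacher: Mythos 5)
Your proof is correct and takes essentially the same approach as the paper's: both decompose $V=\{y : \pi(y)\neq y\}$ into the cycles of $\pi$ and then $2$- or $3$-color each cycle according to its parity. The paper does this directly on the cycle decomposition, while you phrase it as a proper $3$-coloring of the functional graph of $\pi$ restricted to $V$, which is the same argument in graph-coloring language (your explicit check that $\pi$ restricts to a bijection on $V$ is a welcome bit of care that the paper leaves implicit).
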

\begin{proof}
Let $S \defeq \{y \in \Y \mid \pi(y) \neq y\}$. We proceed iteratively, dividing each (non-trivial) cycle of $\pi$ separately into the three $A_i$'s: Fix a cycle $\{s_1, s_2, \ldots, s_n\}$ such that $\pi(s_i) = s_{i+1}$ and $\pi(s_n)=s_1$. If $n$ is even we can just partition it into two sets: put the $s_i$ for even $i$'s into $A_1$ and odd $i$'s into $A_2$. If $n$ is odd, we can do the same except put the last element $s_n$ into $A_3$.
\end{proof}

We can now prove \lemmaref{lem:smooth_gamma}.

\begin{proof} of \lemmaref{lem:smooth_gamma}.
Note that the lemma holds trivially for any $\gamma > 1/6$ because we always have $\varepsilon_\gamma \le 1$. Assume that $\gamma \le 1/6$. 
Let $\cnProofVar\defeq \frac{6}{\cnParam}\gamma$. The probabilities in this proof are over the choice of $S \subseteq \Y$.
It suffices to show that,
\begin{equation}\label{eq:tosho1}
\Pr_S\left[\exists \theta \in \Theta ~~ \Pr_\tau[\pi_\theta(y) \ne y] > \cnProofVar ~\wedge~ \Pr_\tau[\pi_\theta(y)\notin S] \le \gamma \right] \le \delta,
\end{equation}
because  $\rho(\pi_\theta(y)) = 0$ whenever $\pi_\theta(y)\notin S$, thus $\mathcal{A}_\gamma = \{ \theta \in \Theta \mid \Pr_\tau[\pi_\theta(y)\notin S] \le \gamma\}$. The set $S$ determines the perturbed sets $\tilde{T}\defeq T \cap S$ and $\tilde{P} \defeq P \cap S$ and the distributions $\tau = \U(\tilde{T})$ and $\rho = \U(\tilde{P})$. 

Define $\cnBarParam \defeq 1 - \cnParam$. Since $\E\bigl[\lvert\tilde{T}\rvert\bigr]=\cnBarParam |T|$, a multiplicative Chernoff bound\footnote{Specifically, that the probability that a sum of binary random variables is less than half its mean $\cnBarParam |T|$ is at most $\exp(-\cnBarParam |T|/8)$.} gives
\[\Pr_S\left[|\tilde{T}|\le \frac{\cnBarParam }{2}|T|\right] \le \exp\left(-\frac{\cnBarParam }{8}|T|\right) \le \exp\left(-\frac{\cnBarParam }{8}\cdot \frac{8}{\cnBarParam  \gamma}\ln \frac{4|\Theta|}{\delta}\right)< \frac{\delta}{4 |\Theta|}\le \frac{\delta}{4}.\]

Thus, to show \cref{eq:tosho1}, it suffices to show that
\begin{equation}\label{eq:tosho2}
\Pr\left[|\tilde{T}| \ge \frac{\cnBarParam }{2}|T| ~\wedge~ \exists \theta \in \Theta~ \Pr_\tau[\pi_\theta(y) \ne y] > \cnProofVar ~\wedge~ \Pr_\tau[\pi_\theta(y)\notin S] \le \gamma  \right] \le \frac{3\delta}{4}.\end{equation}
By the union bound, to show \cref{eq:tosho2} it thus suffices to show that for any $\pi: \Y \oto \Y$,
\begin{equation}\label{eq:tosho3}
\Pr\left[|\tilde{T}| \ge \frac{\cnBarParam }{2}|T| ~\wedge~ \Pr_\tau[\pi(y) \ne y] > \cnProofVar ~\wedge~ \Pr_\tau[\pi(y) \notin S] \le \gamma  \right] \le \frac{3\delta}{4|\Theta|}.\end{equation}
Fix any $\pi: \Y \oto \Y$. By \lemmaref{lem:trisect}, we can partition $\{y \in \Y \mid \pi(y) \ne y\}=A_1 \cup A_2 \cup A_3$ such that $\pi(A_i)\cap A_i = \emptyset$, and hence $\Pr_\tau[\pi(y) \ne y]=\sum_i \tau(A_i)$.  So if $\Pr_\tau[\pi(y) \ne y] > \cnProofVar$, then $\tau(A_i) > \cnProofVar/3$ for some $i$. Therefore, it suffices to show that 
\begin{equation*}
\Pr\left[|\tilde{T}| \ge \frac{\cnBarParam }{2}|T| ~\wedge~ \exists i\in[3] \ \tau(A_i) > \frac{\cnProofVar}{3}~\wedge~ \Pr_\tau[\pi(y) \notin S] \le \gamma  \right] \le \frac{3\delta}{4|\Theta|}.
\end{equation*}

With a union bound over $i \in [3]$, it suffices to show that for each $i \in [3]$
\begin{equation}\label{eq:toshow-precondition}
\Pr\left[|\tilde{T}| \ge \frac{\cnBarParam }{2}|T| ~\wedge~ \tau(A_i) > \frac{\cnProofVar}{3} ~\wedge~ \Pr_\tau[\pi(y) \notin S] \le \gamma  \right] \le \frac{\delta}{4|\Theta|}.
\end{equation}

Thus, now in addition to fixing $\pi$, we fix $i \le 3$, thus fixing $A_i$. To continue, imagine we are picking $S$ by first selecting $A_i \cap S$, and subsequently selecting $S \setminus A_i$. We will show that \Cref{eq:toshow-precondition} holds when conditioning on each possible value for the first selection, that is, each possible $A_i \cap S$.
Formally, we fix $V \subseteq A_i$ and condition on $A_i \cap S = V$, claiming that
\begin{equation}\label{eq:toshow-cond}
    \Pr\left[|\tilde{T}| \ge \frac{\cnBarParam }{2}|T| ~\wedge~ \tau(A_i) > \frac{\cnProofVar}{3} ~\wedge~ \Pr_\tau[\pi(y) \notin S] \le \gamma \middle| V = A_i \cap S \right]
    \le \frac{\delta}{4|\Theta|}.
\end{equation}

First, observe that
\begin{equation*}
    \tau(A_i) = \frac{|A_i \cap \tilde{T}|}{|\tilde{T}|} = \frac{|A_i \cap S|}{|\tilde{T}|} = \frac{|V|}{|\tilde{T}|},
\end{equation*}
therefore if $|V| < \cnBarParam \cnProofVar |T| / 6$ then \Cref{eq:toshow-cond} holds with probability $0$ (due to the first two events in the conjunction). Thus, we can assume that $|V| \geq \cnBarParam \cnProofVar |T| / 6$.

Note that $\tau(V) = \tau(A_i)$, and that $\tau(A_i) \geq \cnProofVar / 3$ implies
\begin{equation*}
    \Pr_{y \sim \tau}[\pi(y) \notin S] \geq \Pr_{y \in V}[\pi(y) \notin S] \cdot \tau(V) \geq \Pr_{y \in V}[\pi(y) \notin S] \cdot \frac{\cnProofVar}{3},
\end{equation*}
therefore the left-hand side of \Cref{eq:toshow-cond} is upper-bounded by
\begin{equation}\label{eq:toshow-lhs}
    \Pr\left[\Pr_{y \in V}[\pi(y) \notin S] \le \frac{3 \cdot \gamma}{\cnProofVar} \ \middle|\  V = A_i \cap S \right]
    =
    \Pr\left[\Pr_{y \in V}[\pi(y) \notin S] \le \frac{a}{2} \ \middle|\  V = A_i \cap S \right]
    .
\end{equation}

We conclude the proof by upper-bounding \Cref{eq:toshow-lhs} with a Chernoff bound.
Consider the random variables $Z_y = \indicator_{\pi(y)\notin S}$ for $y \in V$. These random variables are independent by definition of $\cnParam$-common-nonsense. Furthermore, due to the fact that $\pi(A_i) \cap A_i = \emptyset$, they remain independent even when conditioning on the event $A_i \cap S = V$. By linearity of expectation,
$$\E\left[\left.\sum_{y \in V} Z_y ~\right|~ V = A_i \cap S \right] = \cnParam|V|.$$
Using the same Chernoff bound as above, we have
\begin{equation*}
    \Pr\left[ \frac{\sum_{y \in V} Z_y}{|V|} \leq \frac{\cnParam}{2} \ \middle|\ V = A_i \cap S \right] \leq \exp(-\cnParam |V|/8)
\end{equation*}
Noting that $\sum Z_y / |V| = \Pr_{y \in V}[\pi(y) \notin S]$, we conclude that that \Cref{eq:toshow-lhs} is upper-bounded by
\begin{equation*}
    \exp\left(-\frac{\cnParam |V|}{8} \right) \leq \exp\left({-\frac{\cnParam \cnBarParam  \cnProofVar |T|}{48}}\right) \leq \frac{\delta}{4|\Theta|}.
\end{equation*}
This proves the inequality in \Cref{eq:toshow-cond}, thereby concluding the proof.

\end{proof}

Finally, with the main technical lemma in hand, we prove \Cref{thm:smooth}.
\begin{proof} of \Cref{thm:smooth}.
The proof is very similar to that in \Cref{apx:thm:tree}. First, the realizability \conditionref{cond:realizable},  $\Pr_{x \sim \mu}[\rho(f_\star(x))=0]=0$, follows immediately from the fact
that $\mu, \rho$ are uniform distributions with $f_\star(\supp(\mu)) \subseteq \supp(\rho)$ which follows from the fact that $T \subseteq P$ and the definitions of $\mu,\rho$.

Let $\cnBarParam  \defeq 1-\cnParam$ and define $\gamma$ by, $$\gamma \defeq \max\left(\frac{1}{m}, \frac{8}{\cnBarParam  |T|}\right) \ln \frac{6|\Theta|}{\delta}.$$
\lemmaref{lem:smooth_gamma} below shows that with probability $\ge 1-2\delta/3$ over $\mu, \rho$, we have that 
$\varepsilon_\gamma \le 6 \gamma/\cnParam$. \Cref{thm:stat_sem0} implies that with probability $\ge 1-\delta/3$ over $x_1, \ldots, x_m$, we have
$\err(\hat\theta) \le \varepsilon_\gamma$. Thus, by the union bound, with probability $\ge 1-\delta$ we have $$\err(\hat\theta) \le\varepsilon_\gamma \le \frac{6 \gamma}{\cnParam} = \frac{6}{\cnParam} \max\left(\frac{1}{m}, \frac{8}{\cnBarParam  |T|}\right) \ln \frac{6|\Theta|}{\delta}.$$
\end{proof}

\subsection{Lower bound in the common nonsense model}\label{ap:lower}

The proof of the lower bound works by creating $\log n$ candidate ``plausible ambiguities`` and arguing that a constant fraction of survive the random removal of elements.

\begin{proof} of \Cref{thm:sa-lower}.
The constants $c_1, c_2$ will also be determined through this proof to be large enough to satisfy multiple conditions defined below. No effort has been made to minimize the constants in this proof. 

Let $n\defeq |\Theta|$.

We will lay out two $a \times b$ grids $X \subseteq \X$ and $Y \subseteq \Y$, for:
$$a \defeq \lfloor \log n \rfloor, b \defeq \left\lfloor \frac{1}{\alpha} \max\left(1, \frac{|T|}{10^5 m}\right)\right\rfloor.$$ 
For integer $t$, denote $[t]\defeq \{1,2,\ldots, t\}$. 
For $i \in [a], j\in [b]$, choose distinct elements $x_{ij}\in \X$ and $y_{ij}\in T$. To ensure this is even possible, we must make sure $ab \le |T|$, which holds because we assumed $\log n \le \alpha \min(m, |T|)$ thus,
$$ab \le \alpha \min(m, |T|)\frac{1}{\alpha} \max\left(1, \frac{|T|}{m}\right)=\min(m, |T|)\cdot  \max\left(\frac{1}{|T|}, \frac{1}{m}\right)|T| = |T|.$$
 Let $X \defeq \{x_{ij} \mid i \in [a], j \in [b]\}$ and $Y \defeq \{y_{ij} \mid i \in [a], j \in [b]\}$. Let $h: \X \setminus X \oto \Y \setminus Y$ be a fixed 1--1 mapping, say the lexicographically smallest. The parametrized translator family is defined by,
$$\Theta' = \Theta_{m, n, Y} \defeq \{-1,1\}^a, ~~f_\theta(x_{ij}) \defeq \begin{cases}y_{ij}, &\theta_i=1\\y_{i(j+1 \bmod b)},&\theta_i=-1\end{cases}, ~~\forall x \notin X ~f_\theta(x)=h(x).$$
Clearly $|\Theta'| =2^a \le n = |\Theta|$ as needed. Let $\x=(x_1,x_2,\ldots,x_m)$. 
It suffices to show:
$$ \Pr_{S, \x}\left[\err\bigl(\hat\theta\bigr) \ge  \frac{\log n}{c_3\alpha\min(10^5 m, |T|)}\right] \ge 0.99,$$
for some constant $c_3$ sufficiently large, because we can set $c_2=10^5 c_3$.  The above equation is equivalent to the following two cases based on $m$:
\begin{align}
\text{Case 1.} ~10^5 m > |T| &\implies \Pr_{S, \x}\left[\err\bigl(\hat\theta\bigr) \ge  \frac{\log n}{c_3 \alpha |T|}\right] \ge 0.99    \label{eq:q1}\\
\text{Case 2.} ~10^5 m \le |T| &\implies \Pr_{S, \x}\left[\err\bigl(\hat\theta\bigr) \ge \frac{\log n}{c_3 \alpha 10^5 m}\right] \ge 0.99 \label{eq:q2}
\end{align}

In both cases, it will be convenient to notice that, for any $z \ge 2$, $\lfloor z \rfloor  \ge z - 1 \ge \frac{z}{2}$. Since $b \ge 2$ (because $\alpha \le 1/2$), we therefore have
\begin{equation}
\frac{1}{2\alpha} \max\left(1, \frac{|T|}{10^5 m}\right) \le b \le \frac{1}{\alpha} \max\left(1, \frac{|T|}{10^5 m}\right)
\label{eq:floor}
\end{equation}

\paragraph{Case 1:} $10^5 m>T$. In this case $\frac{1}{2\alpha} \le b \le  \frac{1}{\alpha}$  by \Cref{eq:floor}. We will show \Cref{eq:q1}.  Now, consider the ``full rows'':
$$C(S) \defeq \{i \in [a] \mid \forall j \in [b]~ y_{ij} \in S\}.$$
These rows will be useful to consider because nothing has been removed from the entire row, no information about $\theta_i$ has been revealed and (on average) one cannot achieve error $ < 1/2$ on these examples, because one cannot distinguish between the two permutations on this row.

Note that the membership of different $i, i' \in C(S)$ is independent since $S$ is chosen independently, and by definition of $C$ and $S$:
$$\E[|C(S)|] = (1-\alpha)^b a \ge (1-\alpha)^{1/\alpha} a \ge \frac{a}{4},$$
since $(1-\alpha)^{1/\alpha}$ is decreasing in $\alpha$ and $\alpha \le 1/2$. Thus, by multiplicative Chernoff bounds (specifically, $\Pr[Z \le \E[Z]/2] \le e^{-\E[Z]/8}$),
\begin{equation}\label{eq:120}
    \Pr_S\left[|C(S)| \le \frac{a}{8}\right] \le e^{-a/32} \le e^{- \lfloor c_1\rfloor /32} \le 0.001,
\end{equation}
for sufficiently large $c_1$. Thus, $\Pr_S[|C(S)| > a/8] \ge 0.999$. 
Let $C'(S,\x) \subseteq C(S)$ be those $i$ which $\hat\theta_i \ne \star_i$,
$$C'(S,\x) \defeq \{i \in C(S) \mid \hat\theta_i \ne \star_i\}.$$
Clearly, for any algorithm and any $C(S)$, $\E_\x[|C'(S, \x)| \mid S]=|C(S)|/2$ because no information whatsoever has been revealed about $\theta_i$ for any $i \in C$. Thus, by the same Chernoff bound, we have: 
$$\Pr_{S, \x}\left[|C'(S, \x)| \le \frac{1}{4}|C(S)| ~\left\vert~ |C(S)| > \frac{a}{8}\right.\right] \le e^{-\frac{a}{16}\cdot \frac{1}{8}} \le 0.001,$$
for sufficiently large $c_1$, because $a \ge  c_1$. By the union bound over this and \Cref{eq:120},
$$\Pr_{S, \x}\left[|C'(S, \x)| \ge \frac{a}{32}\right] \ge 0.998.$$
Since each row $i \in C'$ incurs $b$ errors on examples $x$, one for each $j$ because $f_\star(x_{ij})\in S$:
$$\err\bigl(\hat\theta\bigr) \ge \frac{b \cdot \bigl|C'(S, \x)\bigr|}{|T|}.$$
Thus,
$$\Pr_{S, \x}\left[\err\bigl(\hat\theta\bigr) \ge \frac{b a}{32\alpha |T|}\right] \ge 0.998.$$
Now, $a \ge \frac{1}{2}\log n$ for sufficiently large $c_1$ and as mentioned
$b \ge \frac{1}{2\alpha}$. Thus,
$$\Pr_{S, \x}\left[\err\bigl(\hat\theta\bigr) \ge \frac{\log n}{128\alpha |T|}\right] \ge 0.998.$$
This establishes \Cref{eq:q1} as long as $c_3 \ge 128$.

It remains to prove \Cref{eq:q2}.

\paragraph{Case 2:} $10^5 m\le T$. In this case $\frac{|T|}{2\alpha 10^5 m} \le b \le \frac{|T|}{\alpha 10^5 m} $ by \Cref{eq:floor}.
Next, consider the set of rows with at least 1/2 of the elements in $S$:
$$D(S)\defeq\left\{i \in [a] ~\left\vert~ \bigl|\{j\in [b] \mid y_{ij} \in S \}\bigr| \ge \frac{b}{2}  \right.\right\}.$$
Intuitively, any row $i \in D(S)$ is ``dangerous'' in the sense that if $\hat\theta_i \neq \star_i$, then it causes errors on $b/2$ different $x$'s in the support of $\mu$, i.e., for which $f_\star(x_{ij}) \in S$.
Observe that $\E[|D(S)|] \ge a/2$ since each size $s=\bigl|\{j\in [b] \mid y_{ij} \in S \}\bigr|\ge b/2$ is at least as likely as the size $b-s$, since $\alpha \le 1/2$. And also, membership of $i,i'\in D(S)$ since $S$ is independent. Thus, by the same Chernoff bound as above, for sufficiently large $c_1$,
\begin{equation}
    \Pr_S\left[|D(S)|\le \frac{a}{4}\right] \le e^{-a/16} \le e^{-\lfloor c_1 \rfloor/16} \le  0.001.\label{eq:qq9}
\end{equation}
Let $-\theta  \defeq (-\theta_1,-\theta_2,\ldots, -\theta_a)$. This makes it convenient to define the \textit{giveaways} $G(S) \subseteq X$ to be,
$$G(S) \defeq \{ x_{ij} \mid i \in [a], j \in [b], f_\star(x_{ij}) \in S, f_{-\star}(x_{ij}) \notin S\}.$$
These are the points $x_{ij}$ which we might observe $f_\star(x_{ij})$ which would imply that $\theta_i = \star_i$ (and not its negative). Also let,
$$\hat{G}(S, \x) = \{x_1, x_2, \ldots, x_m\} \cap G(S).$$
(Note that if for a give row $i$, we do not have any $x_{ij}\in \hat{G}(S,\x)$, then we have no information about $\theta_i$. As a preview to what is to come, we now argue that with high probability $|\hat{G}(S,\x)| < a/8$ which will mean that, if $|D(S)| > a/4$, then we have no information about $\theta_i$ for at least $a/8$ of the rows $i \in D(S)$.)

For any fixed $i,j$, observe that $\Pr[x_{ij}\in G(S)] = \alpha(1-\alpha)$ so $\E[|G(S)|]=\alpha(1-\alpha)ab \le \alpha ab$. By the Chernoff bound that $\Pr[Z \ge 2\E[Z]] \le e^{-\E[Z]/3}$,
$$\Pr_S[|G(S)| \ge 2\alpha ab] \le e^{-\alpha ab/3} \le e^{- a/3} \le 0.001,$$
for sufficiently large $c_1$. (We have used the fact that the above probability is smaller than if $E[|G(S)|]$ were actually $\alpha ab$.)

Also, $\E_S[|T \cap S|] \ge |T|/2$ since $\alpha \le 1/2$. So, by the Chernoff bound $\Pr[Z \le \E[Z]/2] \le e^{-\E[Z]/8}$,
$$\Pr_S\left[|T \cap S| \le \frac{|T|}{4}\right] \le e^{-|T|/16} \le 0.001,$$
for sufficiently large $c_1$ since $|T| \ge \log n \ge c_1$.

Thus, by the union bound:
\begin{equation}
\Pr_S\left[|T \cap S| \le \frac{|T|}{4} \vee |G(S)| \ge 2\alpha ab \right] \le 0.002.\label{eq:asaa}
\end{equation}

Also,
$$\E_{\x}\left[|\hat{G}(S,\x)|  \mid S\right] \le m \frac{|G(S)|}{|T \cap S|}.$$
Thus, using Markov's inequality in the second line below,
\begin{align*}
\E_{\x, S}\left[|\hat{G}(S,\x)| ~\left\vert~ |T\cap S| > \frac{|T|}{4}, |G(S)| \le 2\alpha ab\right.\right] &\le  m \frac{2\alpha ab}{|T|/4} = \frac{8 \alpha abm}{|T|},\\
\Pr_{\x, S}\left[|\hat{G}(S,\x)| > \frac{8000 \alpha abm}{|T|} ~\left\vert~ |T\cap S| > \frac{|T|}{4}, |G(S)| \le 2\alpha ab\right.\right] &\le  0.001.
\end{align*}
By the union bound over the above and \Cref{eq:asaa}, since $\Pr[E] \le \Pr[E | F] + \Pr[\neg F]$
$$\Pr_{\x, S}\left[|\hat{G}(S, \x)| > \frac{8000 \alpha abm}{|T|} \right] \le 0.001 + 0.002.$$
Finally, since 
\begin{align*}
b &\le \frac{|T|}{\alpha 10^5 m}\\
\frac{8000 \alpha abm}{|T|} &\le 0.08 \cdot a \le \frac{a}{8}\\
\Pr_{\x, S}\left[|\hat{G}(S, \x)| > \frac{a}{8} \right] &\le 0.003
\end{align*}
By the union bound with \Cref{eq:qq9},
$$\Pr_{\x, S}\left[|D(S)|\le \frac{a}{4} \vee |\hat{G}(S, \x)| > \frac{a}{8} \right] \le 0.004$$
Let $$F(S, \x)\defeq \{i \in D(S) \mid \forall j \in [b] ~ x_{ij} \notin \hat{G}(S)(S,\x)\}\subseteq D.$$
Clearly $|F(S, \x)| \ge |D(S)| - |\hat{G}(S, \x)|$ because each $x \in \hat{G}(S, \x)$ can remove at most one row from $D(S)$. Thus, 
\begin{equation}
    \Pr_{\x, S}\left[|F(S, \x)| \le \frac{a}{8} \right] \le 0.004.
    \label{eq:121}
\end{equation}

$F(S, \x)$ will function exactly like $C$ in the analysis  above of \Cref{eq:q1}. We repeat this analysis for completeness, replacing $C$ by $F$. Let $F'(S,\x) \subseteq F(S, \x)$ be those $i$ which $\hat\theta_i \ne \star_i$,
$$F'(S,\x) \defeq \{i \in F(S, \x) \mid \hat\theta_i \ne \star_i\}.$$
For any algorithm and any $F(S, \x)$, $\E_\x[|F'(S, \x)| \mid S]=|F(S, \x)|/2$ because no information whatsoever has been revealed about $\theta_i$ for any $i \in F$. Thus, by the same Chernoff bound, we have: 
$$\Pr_{S, \x}\left[|F'(S, \x)| \le \frac{1}{4}|F(S, \x)| ~\left\vert~ |F(S, \x)| > \frac{a}{8}\right.\right] \le e^{-\frac{a}{16}\cdot \frac{1}{8}} \le 0.001,$$
for sufficiently large $c_1$. By the union bound over this and \Cref{eq:121},
$$\Pr_{S, \x}\left[|F'(S, \x)| \ge \frac{a}{32}\right] \ge 0.995.$$
Since each row $i \in F'$ incurs $\ge b/2$ errors on examples $x$ by definition of $F'$ and $D$, since $F' \subseteq D$ and thus at least $b/2$ errors on $j$ for which $f_\star(x_{ij})\in S$. Thus,
$$\err\bigl(\hat\theta\bigr) \ge \frac{b \cdot \bigl|F'(S, \x)\bigr|}{2|T|}.$$
Thus,
$$\Pr_{S, \x}\left[\err\bigl(\hat\theta\bigr) \ge \frac{b a}{64\alpha |T|}\right] \ge 0.995.$$
Now, $a \ge \frac{1}{2}\log n$ for sufficiently large $c_1$ and we also have
$b \ge \frac{|T|}{2\alpha 10^5 m}$ by \Cref{eq:floor} since $\frac{|T|}{\alpha 10^5 m} \ge 2$ since $\alpha \le 1/2$ and we are in the case where $10^5 m \le |T|$. Thus,
$$\Pr_{S, \x}\left[\err\bigl(\hat\theta\bigr) \ge \frac{\log n}{256\alpha 10^5 m}\right] \ge 0.995.$$
This establishes \Cref{eq:q2} for $c_3 \ge 256 \times 10^5$.
\end{proof}

\section{Proofs for random knowledge graph}\label{ap:graph}
Our goal in this section is to prove \Cref{thm:KG}. The proof is based on the following main lemma. We first state and prove this lemma, and then derive the theorem from it. Recall that the sets $T, P \subseteq \Y = Y \times Y = Y^2$ represent the edges of the two knowledge graphs.
\begin{lemma}\label{lem:BC}
Fix $\emptyset \neq S \subseteq Y$, $\pi: Y^2 \oto Y^2$, $\delta, p, \alpha \in (0,1)$, $q \defeq 1-p$, and
\begin{equation*}
    \eps \geq 32 \cdot \frac{\ln (1/\delta)}{p \alpha^2q^2 |S|^2}.
\end{equation*}
For any $(T, P) \sim \KG(S, Y, p, \alpha)$ chosen from the random knowledge graph distribution, we define
\begin{align*}
    A &\defeq \{y \in T \mid \pi(y) \ne y\}\\
    B &\defeq \{y \in A \mid \pi(y) \notin P\}\\
    C &\defeq \{y \in A \mid y \notin P\} 
\end{align*}
Then,
$$\Pr_{T, P}\left[|A| \ge \eps |T| ~\wedge~ |B| - |C| \le \frac{\alpha q}{2} |A| \right] \le 5 \delta.$$
\end{lemma}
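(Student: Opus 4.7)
Apply \lemmaref{lem:trisect} to $\pi: Y^2 \oto Y^2$, obtaining a partition $\{y \in Y^2 : \pi(y) \neq y\} = Q_1 \cup Q_2 \cup Q_3$ with $\pi(Q_i) \cap Q_i = \emptyset$ for each $i$. Set $N_i \defeq |Q_i \cap S^2|$ and $N \defeq N_1 + N_2 + N_3$. The key property of this partition is that for distinct $y, y' \in Q_i$, the pairs $\{y, \pi(y)\}$ and $\{y', \pi(y')\}$ are disjoint; consequently, the indicators $\indicator_{y \in T}$, $\indicator_{y \notin P}$, $\indicator_{\pi(y) \notin P}$ depend on disjoint atomic Bernoulli bits across distinct $y \in Q_i$, and are mutually independent. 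Combined with \eqref{eq:pq0} and \eqref{eq:pq1}, within each $Q_i$ the quantities $|A \cap Q_i|$, $|B \cap Q_i|$, and $|C \cap Q_i|$ are each sums of $N_i$ i.i.d.\ Bernoulli random variables with parameters $p$, $pq$, and $(1-\alpha)pq$, respectively.

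I would then apply multiplicative Chernoff bounds with relative deviation $\gamma = \alpha/8$ to each of these three sums within each $Q_i$, together with a further Chernoff bound ensuring $|T| \ge p|S|^2/2$. Union-bounding over the resulting constantly-many events and summing over $i \in [3]$, with probability $\ge 1 - 5\delta$ one obtains
\begin{equation*}
|B| - |C| \ge \bigl((1 - \alpha/8) - (1+\alpha/8)(1-\alpha)\bigr)\, pqN \ge \tfrac{3\alpha}{4}\, pqN \quad\text{and}\quad |A| \le (1+\alpha/8)\, pN,
\end{equation*}
which combine directly to yield $|B| - |C| \ge (\alpha q/2)\,|A|$. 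The hypothesis on $\eps$, together with $|T| \ge p|S|^2/2$, ensures $N \ge |A| \ge \eps p|S|^2/2 = \Omega\bigl(\ln(1/\delta)/(\alpha^2 q^2)\bigr)$, giving the size needed for each Chernoff failure probability to be $\le \delta$.

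\textbf{Main obstacle.} The Chernoff bounds above are tight only when each individual $N_i$ is of order $\ln(1/\delta)/(\alpha^2 pq)$, whereas the hypothesis guarantees only total $N = \Omega\bigl(\ln(1/\delta)/(\alpha^2 q^2)\bigr)$. These bounds match when $p = \Theta(1)$, but in the small-$p$ regime one must sharpen the concentration via Bernstein's inequality, exploiting the fact that each Bernoulli has variance $\le pq$, to close the gap of a factor of $1/p$. A secondary subtlety is that when some $N_i$ is much smaller than the others, its individual Chernoff estimate is loose; however, the deterministic bound $O(N_i)$ on that part's contribution ensures that the dominant part's concentration alone carries the aggregate signal.
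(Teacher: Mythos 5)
Your setup is sound---the trisection into $Q_1,Q_2,Q_3$ so that $\pi(Q_i)\cap Q_i=\emptyset$, and the observation that within each $Q_i$ the triples $(\indicator_{y\in T},\indicator_{y\notin P},\indicator_{\pi(y)\notin P})$ are mutually independent across $y$, are both correct and are also used by the paper. The algebra showing that multiplicative accuracy $\alpha/8$ on all three counts yields $|B|-|C|\ge(\alpha q/2)|A|$ is fine as well. The gap you flag, however, is real and your proposed repair does not close it.

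The core issue is that your Chernoff/Bernstein step is applied to $|B\cap Q_i|$ and $|C\cap Q_i|$ viewed as sums of $N_i$ Bernoullis of mean $pq$ and $(1-\alpha)pq$. Any mean-aware tail bound (multiplicative Chernoff, Bernstein, Bennett) here has exponent proportional to $\alpha^2\,pq\,N_i$; for fixed-sign Bernoullis with variance $\le pq$, Bernstein and multiplicative Chernoff give the \emph{same} order of exponent, so Bernstein buys you nothing over what you already used. Meanwhile the hypothesis on $\eps$, combined with $|T|\ge p|S|^2/2$, only yields $|A|\ge 16\ln(1/\delta)/(\alpha^2 q^2)$, hence $N\ge|A|\ge 16\ln(1/\delta)/(\alpha^2 q^2)$. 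You therefore need $pqN_i\gtrsim\ln(1/\delta)/\alpha^2$ but are only guaranteed $q^2 N\gtrsim\ln(1/\delta)/\alpha^2$, a shortfall by a factor $q/p$ that is unbounded as $p\to 0$ (precisely the sparse-graph regime one cares about). There is no concentration inequality that recovers this: the bound you seek is genuinely false without more information, because $pN$---not $N$---is the natural scale of $|A|$, and you have not pinned down $|A|$.

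The missing idea, which the paper uses, is to \emph{condition on $A$} (equivalently, on $A_i\defeq A'_i\cap T$ for each part, where $A'_i\defeq Q_i\cap S^2$) before applying Chernoff. The trisection is exactly what licenses this: with $\pi(A'_i)\cap A'_i=\emptyset$, conditioning on $A_i=V$ leaves the indicators $\indicator_{\pi(y)\notin P}$ for $y\in V$ unconditioned, and \eqref{eq:pq2} shows $\Pr[y\notin P\mid y\in T]=(1-\alpha)q$. After conditioning, $|C|$ is Binomial$(|A|,(1-\alpha)q)$ and each $|B_i|$ is Binomial$(|A_i|,q)$---\emph{the factor $p$ has disappeared from the Bernoulli parameter}. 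A Hoeffding-style bound then gives deviation $O\bigl(\sqrt{|A|\ln(1/\delta)}\bigr)$ against signal $\alpha q|A|$, i.e.\ the requirement $|A|\gtrsim\ln(1/\delta)/(\alpha q)^2$, which is exactly what the hypothesis supplies (after one more Chernoff event ensuring $|T|\ge p|S|^2/2$). Conditioning on $A$ also eliminates the need to concentrate $|A|$ itself, so the union bound has only five events (three for the $|B_i|$, one for $|C|$, one for $|T|$), matching the $5\delta$ in the statement, whereas your unconditioned route would accumulate closer to ten. If you replace your unconditional Chernoff step with this conditioning step, keeping the rest of your outline, the proof goes through.
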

\begin{proof}
Let $A' \defeq \{y \in S^2 \mid \pi(y) \ne y\}$, so $A\subseteq A'$. If $\pi$ is the identity then the lemma holds trivially, therefore we can assume that $A' \neq \emptyset$. If $\eps > 1$ then the lemma holds trivially as well, because $A \subseteq T$ and therefore $|A| \leq |T|$, so we assume $\eps \in (0,1]$.

For each $y \in A'$, \Cref{eq:pq0,eq:pq1} under \definitionref{def:KG} imply that $\Pr[y \in C] = \Pr[y \in T \setminus P]=(1-\alpha)pq$ and $\Pr[y \in A] = \Pr[y \in T]=p$, thus Bayes rule gives
$$\forall y \in A' ~~\Pr[y \in C \mid y \in A]=\frac{(1-\alpha)pq}{p}=(1-\alpha)q.$$
Now suppose we fix $V \subseteq A'$ and condition on $A \defeq A' \cap T = V$. Note that the event $y \in C$ is independent for different $y \in V$, therefore for any $y \in V$ it holds that
$$\Pr_{T, P}\left[y \in C \mid A = V\right] = \Pr_{T, P}\left[y \in C \mid y \in A\right] = (1-\alpha)q.$$
Therefore, $\E[|C|\ |\ A = V] = (1-\alpha)q|A|$, and so a Chernoff bound gives
$$\forall V \subseteq  A'~~\Pr_{T, P}\left[\left. |C| \le (1-\alpha)q |A| + \sqrt{\frac{1}{2} |A| \ln \frac{1}{\delta}}~~\right|~ A = V\right] \ge 1-\delta.$$
(Normally, Chernoff bounds would give the tight inequality that $\Pr[|C|< \ldots] \ge 1-\delta$, but the $\le$ in the above is necessary for the case in which $A=\emptyset$ in which case Chernoff bounds do not apply because it would be over $|A|=0$ coin flips.)
Since this holds for every $V$, we have:
\begin{equation}\label{eq:C}
\Pr_{T, P}\left[ |C| \le (1-\alpha) q|A| + \sqrt{\frac{1}{2} |A| \ln \frac{1}{\delta}}\right] \ge 1-\delta.\end{equation} By \lemmaref{lem:trisect}, we can partition $A'$ into three \textbf{disjoint} sets,
$$A' = A'_1 \cup A'_2 \cup A'_3 \text{ such that } \pi(A'_i) \cap A'_i = \emptyset.$$
As above, we are going to condition on the value of $A_i \defeq A'_i \cap T$.  Also, define,
$$B_i \defeq \{y \in A_i \mid \pi(y) \notin P\} = B \cap A_i.$$
Now, fix any $i \in [3]$ and any set $V \subseteq A'_i$. We now claim that for all $i \in [3]$, $V \in A'_i$, and $y \in V$
$$
\Pr_{T, P}\left[y \in B_i \mid A_i = V\right] = \Pr_{T, P}\left[\pi(y) \notin P \mid y \in T\right] =q.$$
The rightmost equality follows from the fact that $\pi(y) \neq y$ so $\pi(y) \notin P$ is independent of $y \in T$. The leftmost equality follows similarly: Since $\pi(A'_i) \cap A'_i = \emptyset$, the event $A_i = V$ is independent of $\pi(y) \notin P$. Thus, again by Chernoff bounds we have
$$\forall i \in [3] ~~\forall V \subseteq A_i'~~\Pr_{T, P}\left[\left.|B_i| \ge q |A_i| - \sqrt{\frac{1}{2}|A_i| \ln \frac{1}{\delta}} ~~\right|~ A_i = V\right] \ge 1-\delta.$$ Since this holds for all $V$, it holds unconditionally, and by the union bound it follows that
\begin{equation}\label{eq:B_i}
    \Pr_{T, P}\left[ \forall i \in [3]~~|B_i| \ge q |A_i| - \sqrt{\frac{1}{2}|A_i| \ln \frac{1}{\delta}} \right] \ge 1-3\delta.
\end{equation}
Now, since the sets $B_i$ partition $B$ and $A_i$ partition $A$, we have $|B|=\sum_i |B_i|$ , $|A|=\sum_i |A_i|$ , and also $\sum_{i=1}^3 \sqrt{|A_i|} \le \sqrt{3|A|}$ by Cauchy--Schwartz. Thus, summing the three equations in \Cref{eq:B_i} probability implies
$$\Pr_{T, P}\left[|B| \ge q |A| - \sqrt{\frac{3}{2}|A| \ln \frac{1}{\delta}} \right] \ge 1-3\delta.$$Combining with \Cref{eq:C} gives, by the union bound,
$$\Pr_{T, P}\left[|B|-|C| \ge q |A| - \sqrt{\frac{3}{2}|A| \ln \frac{1}{\delta}}  - (1-\alpha)q |A| - \sqrt{\frac{1}{2} |A| \ln \frac{1}{\delta}}\right] \ge 1-4\delta.$$ Since $\sqrt{3/2}+ \sqrt{1/2} \le 2$, this implies:
$$\Pr_{T, P}\left[|B|-|C| \ge \alpha q |A| - 2\sqrt{|A| \ln \frac{1}{\delta}}  \right] \ge 1-4\delta.$$ 
Or equivalently,
$$\Pr_{T, P}\left[|B|-|C| < \alpha q |A| - 2\sqrt{|A| \ln \frac{1}{\delta}}  \right] \le 4\delta.$$
Since adding additional restrictions can only reduce a probability, we have: 
$$\Pr_{T, P}\left[\frac{p|S|^2}{2} \le |T|\le \frac{|A|}{\eps} \wedge |B|-|C| < \alpha q |A| - 2\sqrt{|A| \ln \frac{1}{\delta}}  \right] \le 4\delta.$$
But if $\frac{p|S|^2}{2} \le |T|\le \frac{|A|}{\eps} $ then  $2|A| \ge \eps p|S|^2$ and then, since $\eps \geq \frac{32}{\alpha^2 q^2 p |S|^2}\ln \frac{1}{\delta}$:
$$2 \sqrt{|A|\ln \frac{1}{\delta}} \le 2 \sqrt{\frac{2|A|}{\eps p |S|^2} \cdot |A|\ln \frac{1}{\delta}}\le 2 |A| \sqrt{\frac{2}{p |S|^2\frac{32}{\alpha^2q^2p |S|^2}\ln \frac{1}{\delta} } \ln \frac{1}{\delta}}=\frac{\alpha q}{2} |A|.$$
Thus, 
$$\Pr_{T, P}\left[\frac{p|S|^2}{2} \le |T|\le \frac{|A|}{\eps} \wedge |B|-|C| < \frac{\alpha q}{2} |A| \right] \le 4\delta.$$

Since, in general, for any two events $X$ and $Y$ it holds that $\Pr[Y] \leq \Pr[X,Y] + \Pr[\overline{X}]$, we have
\begin{align*} \Pr_{T, P}\left[|T|\le \frac{|A|}{\eps} \wedge |B|-|C| < \frac{\alpha q}{2} |A| \right] &\le \Pr_{T, P}\left[\frac{p|S|^2}{2} \le |T|\le \frac{|A|}{\eps} \wedge |B|-|C| < \frac{\alpha q}{2} |A| \right] + \\&~~~~~~+ \Pr_{T, P}\left[\frac{p|S|^2}{2}> |T| \right] \\&\le 4\delta + \Pr_{T, P}\left[\frac{p|S|^2}{2}> |T| \right]\\ &\le 4\delta + \delta, \end{align*}
which is equivalent to the statement in the lemma. To see the last step above, note that $\E[|T|]=p|S|^2$ and thus by multiplicative Chernoff bounds, $$
\Pr\left[|T|<\frac{p|S|^2}{2} \right]\le
\exp\left({-\frac{p|S|^2}{8}}\right)
\leq
\exp\left({-\frac{p|S|^2}{8}\cdot \frac{1}{\eps}\cdot\frac{32}{\alpha^2q^2p |S|^2}\ln \frac{1}{\delta}}\right)
=
\delta^{\frac{4}{\eps \alpha^2q^2}}\le \delta.
$$ In the last step we have utilized the fact that $\alpha, q,\delta \in (0,1]$, and the fact (observed in the first paragraph of this proof) that we may assume that $\eps \in (0,1]$ else the lemma holds trivially.
\end{proof}
Using the above lemma, we now prove our main theorem regarding knowledge graphs.%
\begin{proof} of \Cref{thm:KG}.
Let $q \defeq 1-p$ and,
\begin{equation*}
    \eps \defeq \max\left( \frac{64}{\alpha^2 pq^2 |S|^2}\ln \frac{6n^{|S|}}{\delta},  \frac{2}{\alpha q}\sqrt{\frac{2}{m}\ln \frac{6n^{|S|}}{\delta}}\right).
\end{equation*}

For any $\theta \in \Theta$ define,
\begin{align*}
    A_\theta &\defeq \{y \in T \mid \pi_\theta(y) \ne y\}\\
    B_\theta &\defeq \{y \in A_\theta \mid \pi_\theta(y) \notin P\}\\
    C_\theta &\defeq \{y \in A_\theta \mid y \notin P\}\\
\end{align*}
Note that since $\err(\theta) = |A_\theta|/|T|$, our goal is to show that, with probability $\ge 1-\delta$, we will not output any $\theta$ with $|A_\theta| \ge \eps |T|$.

Recall that $|\Theta|\le n^{|S|}$. By \lemmaref{lem:BC} substituting $\delta'=\frac{1}{6n^{|S|}}\delta$ and the union bound over $\theta \in \Theta$ which is of size $|\Theta|\le n^{|S|}$,
$$
\Pr_{T,P}\left[\exists \theta \in \Theta ~~|A_\theta|\ge \eps |T| \wedge |B_\theta|-|C_\theta| \le \frac{\alpha q}{2} |A_\theta| \right] \le \frac{5 \delta}{6}.
$$
Using $\err(\theta) = |A_\theta|/|T|$, this implies,
\begin{align}\nonumber
\Pr_{T,P}\left[\exists \theta \in \Theta ~~\err(\theta)\ge \eps \wedge |B_\theta|-|C_\theta| \le \frac{\alpha q \eps |T|}{2}  \right] &\le \frac{5 \delta}{6}\\
\Pr_{T,P}\left[\exists \theta \in \Theta ~~\err(\theta)\ge \eps \wedge \frac{|B_\theta|}{|T|}-\frac{|C_\theta|}{|T|} \le \frac{\alpha q \eps}{2}  \right] &\le \frac{5 \delta}{6}
\label{eq:asa}
\end{align}
Finally, define the empirical ``errors'' for any $\theta$ to be,
$$\hat{e}_\theta=\frac{1}{m}\{i \mid f_\theta(x_i) \notin P\}.$$
It is not difficult to see that the algorithm outputs a $\theta$ with minimal $\hat{e}_\theta$, and thus it will not output any $\theta$ with $\hat{e}_\theta - \hat{e}_\star > 0$. Now, it is also not difficult to see that $\hat{e}_\theta - \hat{e}_\star$ is the mean of $m$ random variables in $\{-1,0,1\}$ and 
$$\E[\hat{e}_\theta - \hat{e}_\star] = \Pr_{y \sim \tau}\left[\pi_\theta(y)\notin P\right] - \Pr_{y \sim \tau}\left[y\notin P\right]=\frac{|B_\theta|}{|T|}-\frac{|C_\theta|}{|T|}.$$
The last step above follows because $\pi_\star$ is the identity, and because if $y=\pi_\theta(y)$ then $y \in  P \iff \pi_\theta(y) \in P$.  (Formally, one may define $E_\theta \defeq \{y \in T \mid \pi_\theta(y) \notin P\}$ and observe that $B_\theta \subseteq E_\theta$, $C_\theta \subseteq E_\star$ and $E_\theta \setminus B_\theta = E_\star \setminus C_\theta$). Thus, by Chernoff bounds, 
$$\forall \theta \in \Theta ~~\Pr_{x_1, \ldots, x_m}\left[\hat{e}_\theta - \hat{e}_\star \le \frac{|B_\theta|}{|T|}-\frac{|C_\theta|}{|T|} + \sqrt{\frac{2}{m}\ln \frac{6|\Theta|}{\delta}}\right]  \le \frac{\delta}{6 |\Theta|}.
$$
By the union bound over $\theta \in \Theta$,
$$\Pr_{x_1, \ldots, x_m}\left[\exists \theta\in \Theta ~~\hat{e}_\theta - \hat{e}_\star \le \frac{|B_\theta|}{|T|}-\frac{|C_\theta|}{|T|} + \sqrt{\frac{2}{m}\ln \frac{6|\Theta|}{\delta}}\right]  \le \frac{\delta}{6}.
$$
Combining with \Cref{eq:asa} gives,
$$\Pr_{T,P}\left[\exists \theta \in \Theta ~~\err(\theta)\ge \eps \wedge \hat{e}_\theta - \hat{e}_\star \le \frac{\alpha q \eps}{2} - \sqrt{\frac{2}{m}\ln \frac{6|\Theta|}{\delta}} \right] \le \frac{5 \delta}{6} + \frac{\delta}{6} = \delta.
$$
Since, for our choice of $\eps \ge \frac{2}{\alpha q}\sqrt{\frac{2}{m}\ln \frac{6|\Theta|}{\delta}}$,
$$\Pr_{T,P}\left[\exists \theta \in \Theta ~~\err(\theta)\ge \eps \wedge \hat{e}_\theta - \hat{e}_\star \le 0\right] \le \delta.
$$
Put another way,
$$\Pr_{T,P}\left[\forall \theta \in \Theta ~~\err(\theta)\le \eps \vee \hat{e}_\theta - \hat{e}_\star > 0\right] \ge 1-\delta.
$$

We claim that we are done: Observe that if $\alg$ outputs some $\theta\neq \star$ then, %
$\hat{b}_\theta \le \hat{b}_\star$. To see this, recall the definition of the prior $\rho$,
\begin{equation*}
    \rho(f_\theta(x)) \defeq \begin{cases}
        \frac{1}{2} \cdot \left( \frac{1}{|P|} + \frac{1}{|\Y|}\right) &\text{if }f_\theta(x) \in P \\
        \frac{1}{2|\Y|} &\text{if }f_\theta(x) \notin P.
    \end{cases}
\end{equation*}
and therefore the objective function minimized by $\alg$, namely, $\frac{1}{m} \sum_{i=1}^m - \log(\rho(f_\theta(x_i)))$, is strictly monotonic in $\hat{b}_\theta$:
\begin{align*}
    \frac{1}{m} \sum_{i=1}^m - \log(\rho(f_\theta(x_i))) &= \hat{b}_\theta\cdot \log\frac{2}{1/|\Y|}
    + (1 - \hat{b}_\theta) \log \frac{2}{1/|P| + 1/|\Y|}\\
    &= \log \frac{2}{1/|P| + 1/|\Y|}  + \hat{b}_\theta\cdot \log\frac{1/|P| + 1/|\Y|}{1/|\Y|}
\end{align*}
so the $\theta$ output by $\alg$ necessarily minimizes $b_\theta$.

Finally, for the simplification in the theorem, note that for $p < 0.99$, $1/q<100$ is at most a constant and note that a maximum is never more than a sum.
\end{proof}

It is interesting to note that it is possible to prove the same theorem using a generalization of Plausible Ambiguities, though we use the shorter proof above here because it is somewhat more involved. This generalization may be useful for other priors of full support.
Many LMs, in practice, assign non-zero probability to every string due to softmax distributions or a process called ``smoothing.'' A full-support prior $\rho$ has full support, then $\mathcal{A}_\gamma=\Theta$ and so the parameter $\varepsilon_\gamma$ becomes too large to be meaningful even for $\gamma = 0$. To address this, we refine our definition of plausible ambiguities as follows. For generality, we state them in terms of arbitrary loss $\loss$, though we only use them for the semantic error $\loss = \err$.
\begin{definition}[$(\gamma,\kappa)$-plausible ambiguities]\label{def:nap}
For any $\gamma, \kappa \in [0,1]$, the set of $(\gamma, \kappa)$-\textit{plausible ambiguities} is:
\[
\mathcal{A}_{\gamma, \kappa} \defeq \left\{\theta \in \Theta ~\left\vert~ \Pr_{y \sim \tau}[ \rho(\pi^\star_\theta(y))\le \kappa] \le \gamma  \right.\right\}, \text{ and } \varepsilon_{\gamma,\kappa} \defeq \max_{\theta \in \mathcal{A}_\gamma} \loss(\theta).
\]
Furthermore, $\mathcal{A}_\gamma=\mathcal{A}_{\gamma, 0}$ and $\varepsilon_\gamma=\varepsilon_{\gamma, 0}$.
\end{definition}

\section{Experiments}\label{ap:experiments}

For the experiments used to generate \Cref{fig:exp_kg,fig:exp_cn}, we sampled random languages according to either the knowledge graph model or the common nonsense model, and then used a brute-force learning algorithm to find the optimal translator given an increasing amount of samples. A detailed description follows, and code can be found at \url{https://github.com/orrp/theory-of-umt}.

\subsection{Experiments in the knowledge graph model}
Recall that in the knowledge graph model, text describes relations between nodes in a directed graph. Due to computational constraints, we consider ten nodes, each corresponding to a different word in the target language. To generate edges corresponding to the target language $P$, two nodes are connected with a directed edge independently, with probability $0.5$. We then consider source languages with $r \leq 10$ words. Given a ground-truth translator $f_\star \colon [r] \to [10]$, the source language graph $T$ is obtained by choosing a random subset of nodes $S$ of size $r$, taking the pre-image of graph induced on $S$ under $f_\star$, and (3) adding noise by redrawing each edge with probability $1-\alpha$ for a fixed agreement coefficient $\alpha \in (0,1)$.

The prior $\rho$ is derived from the edges of $P$, and the source language $\mu$ is derived from the (noisy) permuted subgraph $T$. We consider the translator family $\lbrace f_\theta | \theta \in \Theta\rbrace$ of all node-to-node (word-to-word) injective translators, of which one is secretly chosen to be ground-truth. Similarly to the previous setting, we train an MLE algorithm on randomly chosen edges from $T$, which correspond to sentences in the source language. For each sampled edge $(x_1, x_2)$, we increase the "score" of each translator that agrees with the edge, that is, that $(f_\theta(x_1), f_\theta(x_2))$ is an edge in the graph $P$.

To show how common ground affects translatability, we ablate the parameter $\alpha$ determines the fraction of edges on which the source language graph $T$ and the target language graph $P$ agree. \Cref{fig:exp_kg} validates the intuition that increased agreement results in lower translation error, and that as the number of samples increases, the error of the top-scoring translator decreases.

To show how language complexity affects translatability, we ablate $r$, which is the size of the subgraph corresponding to the source language. \Cref{fig:exp_kg} (right) validates the intuition that a larger subgraph results in lower translation error.

The error of a translator is computed as the fraction of edges whose labels are different than the ground-truth. The values with which the model is instantiated are detailed in \Cref{tab:kg_exp_table}.

\begin{figure}
    \centering
    \begin{tabular}{ccc}
         Name & Symbol & Value \\
         \toprule
         Number of source nodes & $r$ & $1,4,7,10$\\
         Number of target nodes & $n$ & $10$ \\
         Number of training data& $m$ & $1,2,\dots$ up to all edges \\
         Edge density (probability of including an edge)& $p$ & $0.5$ \\
         Agreement parameter & $\alpha$ & $0, 0.33, 0.66, 1$\\
         \bottomrule
    \end{tabular}
    \caption{Parameters for experiments in the knowledge graph model (\Cref{fig:exp_kg}). %
    For ablations on $r$ we take $\alpha = 0.5$, and for ablations on $\alpha$ we take $r = 9$. The experiments were run in parallel on an AWS r6i.4xlarge for a total of two and a half CPU-hours.}
    \label{tab:kg_exp_table}
\end{figure}

\subsection{Experiments in the common nonsense model}
Since in this model the structure of sentences is arbitrary, we represent sentences by integer IDs, $[10^5]={1,2,\ldots, 10^5}$ and $[10^6]$ for the target language. We generate a prior $\rho$ from the common nonsense model by taking the target sentence ids $[10^6]$ and labeling a random $\alpha$-fraction of them as nonsense; the remaining sentences are called sensical $S$. Given a ground-truth translator $f_\star \colon [10^5] \to [10^6]$, the source language then distributes uniformly over the back-translation of sensical sentences, $f_\star^{-1}(S)$.

The translator family $\lbrace f_\theta | \theta \in \Theta\rbrace$ is taken to be a set of $10^5$ random one-to-one translators, of which one is secretely chosen to be ground-truth $f_\star$. We then train an MLE algorithm on random samples from the source language: Each sample $x\sim \mu$ rules-out a subset of translators, namely, all $\theta \in \Theta$ such that $f_\theta(x) \notin S$, i.e., is nonsensical.

\Cref{fig:exp_cn} shows that as the number of samples increases, the average error over the plausible translators (that have not been ruled-out) decreases.
To show how language complexity / common ground affect translatability, we ablate the parameter $\alpha$ which determines the fraction of common nonsense. Our experiments validate the intuition that increased common nonsense results in lower translation error.
The error of a translator is computed as the fraction of disagreements with the ground-truth on a hold-out validation set of size 1000. The values with which the model is instantiated are detailed in \Cref{tab:cn_exp_table}.

\begin{figure}
    \centering
    \begin{tabular}{ccc}
         Name & Symbol & Value \\
         \toprule
         Number of source sentences & $|T|$ & $10^5$\\
         Number of target sentences& $|P|$ & $10^6$ \\
         Number of training data& $m$ & $1,\dots,100$ \\
         Number of validation data&  & $1000$ \\
         Fraction of common nonsense& $\alpha$ & $0, 0.1, \dots, 0.8$\\
         \bottomrule
    \end{tabular}
    \caption{Parameters for experiments in the common nonsense model (\Cref{fig:exp_cn}). The experiments were run in parallel on an AWS r6i.4xlarge for a total of four CPU-hours.}
    \label{tab:cn_exp_table}
\end{figure}

\section{Related work}\label{sec:related}
\paragraph{Project CETI.} The sperm whale data collection effort began with a longitudinal dataset from a community of whales off the coast of Dominica that revealed interesting communication findings, such as dialects and vocal clans \citep{GeroWL2016}. A recent effort by the Cetacean Translation Initiative (Project CETI) has been to collect custom-built passive bioacoustic arrays (installed in Fall 2022) covering a $20\times 20$ kilometer area where these whale families reside (collecting over 3 TB/month) in tandem with on-whale robotic acoustic and video tags, underwater (robotic swimming fish) and aerial drones as well as other observation techniques in effort to augment rich contextual communication data.  CETI's scientific team consists of specialists in machine learning, robotics, natural language processing, marine biology, linguistics, cryptography, signal processing and bio-acoustics. \citet{cetiRoadmap} present CETI's initial scientific roadmap for understanding sperm whale communication, identifying the potential for unsupervised translation to be applied to whale communication. That roadmap suggests training a full generative LM for whale communication (often using trillions of bits for parameters \citep{GPT3, palm22}). In contrast, our analysis suggests that the data requirements for translation may be similar to those of supervised translation, which is often several orders of magnitude smaller \citep{TranBCKEF21}.

With this setting in mind, our requirements from source and target language are not symmetric: it would be unreasonable (and unnecessary) for our framework to assume that any sentence in the target language could be expressed in the source language. Put simply: whales need not understand what a smartphone is for us to gain some understanding of their communication. Also note, regarding domain gap, that some (although not all) knowledge can be inferred by training data from, e.g., online catalogs of hundreds of thousands of marine species \citep{WoRMS20221022}).
Of course, there are also data-collection and transcription challenges, a challenge also present in the setting of \emph{low-resource} (human) language translation \citep{ranathunga2021neural}. While these challenges are outside the scope of this paper, our theoretical bounds on the data requirements may inform how much and what types of data are collected. For instance, it is less expensive to acquire textual data alone than both textual and video data. Therefore, if it is believed that an adequate translation is statistically possible using textual data alone, then greater effort may be placed on collecting this data and on UMT algorithms. 

\paragraph{Unsupervised translation.} In unsupervised machine translation \citep{RaviK11}, a translator between two languages is learned based only on monolingual corpora from each language. A body of work on UMT uses neural networks \citep{Barone16,LampleCDR18,ArtetxeLA19,LampleOCDR18,SongTQLL19} or statistical methods \citep{lample-etal-2018-phrase,artetxe-etal-2018-robust} for this task. 
Empirical evaluation of UMT found that it is outperformed by supervised machine translation, even when UMT is trained on several orders of magnitude more data \cite{MarchisioDK20,KimGN20}. 
Among the key barriers for UMT identified in these evaluations are the domain gap and the data gap, and recent works propose techniques for bridging these gaps \cite{EdmistonKS22,HeWWST22}.
Our theory suggests that sample complexity should remain roughly the same between the supervised and unsupervised settings, barring computational constraints. This, we hope, will encourage practitioners to bridge the remaining gaps.

\paragraph{Language models (LMs).}
In recent years, LMs such as GPT \citep{GPT3}, BERT \citep{DevlinCLT18} and PaLM \citep{palm22} were shown to achieve state-of-the-art performance on many tasks in natural language processing (NLP) such as text generation, summarization, or (supervised) MT. These models are indeed large, with hundreds of billions of parameters, and are pre-trained on hundreds of billions of tokens. 

LMs are useful for machine translation in a variety of ways (e.g. \cite{BrantsPXOD07,abs-2110-05448}). Of particular relevance are empirical works that use target LMs as priors to improve machine translation \cite{LynumMBG12,BaziotisHB20}. To our knowledge, our work is the first \emph{theoretical} work formally proving error bounds for prior-assisted translation. \Cref{sec:prior} discusses the use of LMs to establish priors for translation.

\paragraph{Goal-oriented communication.}
It is interesting to contrast our work with the work on goal-oriented communication, which was introduced by \cite{JubaS08} and extended in \citep{GoldreichJS12}. They study the setting of two communicating parties (one of which is trying to achieve a verifiable goal) using each a language completely unknown to the other. They put forward a  theory of goal-oriented communication, where communication is not an end in itself, but rather a means to achieving some goals of the communicating parties. Focusing on goals provides a way to address  ``misunderstanding'' during communication, as in when one can verify whether the goal is (or is not) achieved. Their theory shows how to overcome any initial misunderstanding between parties towards achieving a given goal. Our setting is different:  Informally, rather than be a participant in a communication with someone speaking a different language, we wish to translate communications between two external parties speaking in a language unknown to us and there is no verifiable goal to aid us in this process.

\paragraph{Subgraph isomoprhism.}
For simplicity, we model the knowledge graphs of \Cref{sec:graph} as a pair of correlated \ErdosRenyi (ER) graphs.
The computational problem of identifying a subgraph of an ER graph has been studied by Erdős and others
\cite{bes80,graphmatch,ERgraphmatch}.
In particular, \cite{ERgraphmatch} consider a model in which two correlated graphs $P,T$ are derived from a ``parent graph'' $G$ by independently deleting rows and edges $G$, and then applying a permutation $\pi^\ast$ to the vertices of $T$. Although their model differs from our knowledge graph model,\footnote{In the knowledge graph (a) the vertices of $T$ are \emph{always} a subset of the vertices of $P$, and (b) the deleted vertices are fixed rather than randomly chosen} they propose efficient algorithms for recovering the latent permutation $\pi^\ast$ and provide an empirical evaluation on synthetic and real-world data. Given the similarity between our models, it would be interesting to see if their algorithm can be adapted to our setting, which would nicely complement our formally-proven-yet-inefficient algorithm.

\section{Future Work}\label{sec:future}
Our initial exploration leaves plenty of room for future work. In particular, we propose the following possible directions:
\begin{enumerate}
\item In our lossless models, the target language subsumes the source language in the sense that everything that is representable in the source language can also be represented in the target language. It would be interesting to extend our work to the partially-overlapping case.
\item The language distribution in our models are all uniform. It would be interesting to examine non-uniform distributions such as Zipfian or power-law distributions.
\item As stated earlier, our analysis is purely information-theoretic and leaves the question of the \emph{efficiency} of UMT open.
\item A good starting point for the efficiency question would be to design efficient UMT algorithms for one of the randomized models of language presented in this paper. 
\end{enumerate}

\section{Good Priors through Prompts}\label{sec:prior}
The most direct way to improve a prior is to train (or fine-tune) the LM on a dataset that includes a large number of articles relevant to the source language, e.g., volumes of oceanic research, for whales. In addition, training on a wide variety of sources including multiple languages, diverse sources, and encoding systems may be helpful. It is possible that a system that has how to transfer knowledge between  hundreds of languages and even programming languages, may have a better prior.

Another general strategy for creating a prior is to use prompting: Given a \emph{prompt} string $s$, one can define $\rho(y)\propto \nu(s\,y)$, that is the prior distribution of text that that LM generates conditioned on the text beginning with $s$. \Cref{fig:prompt} illustrates some toy examples of how even a simple prompt like \literal{A sperm whale said:} can help focus on translations that are more likely for a sperm whale to say, and eliminate irrelevant translations.

\begin{figure}
\begin{center}
\begin{tabular}{ll}
Sentence & Probability\\
\toprule
I just ate a giant squid. & $1.5 \times 10^{-9}$ \\
I just ate a giant cheeseburger. & $\mathbf{1.9 \times 10^{-8}}$ \\\midrule
A sperm whale said: I just ate a giant squid. & $\mathbf{6.8 \times 10^{-14}}$ \\
A sperm whale said: I just ate a giant cheeseburger. & $1.2 \times 10^{-17}$\\
 \bottomrule
\end{tabular}
\end{center}
\caption{Without using a prompt, the sentence \literal{I just ate a giant cheeseburger} is more likely, but using the prompt \literal{A sperm whale said:}, the sentence \literal{I just ate a giant squid} is much more likely. Probabilities are from the GPT-3 API.\label{fig:prompt}
}
\end{figure}

\paragraph{Background prompts.}
There is a natural and potentially powerful idea that an unsupervised translator, in addition to outputting translations, would automatically generate a \emph{background prompt} that increases the intelligibility of many translations.\footnote{In general, the problem of AI-based prompt generation has recently attracted attention, e.g., \cite{autoprompt}.} Suppose, for example, across numerous communications, the unsupervised translator determines that sperm whales measure time in ``number of naps'' and that typical nap duration varies by age. Then, rather than having to repeatedly explain this in each translation, it can be explained once in a background prompt that is automatically inserted before each translation. For example, following the prompt
\begin{description}
    \item[$s=$] \literal{Sperm whales measure time in numbers of naps, but each whale's typical nap duration depends on their age.  So if a whale's nap duration is 9 minutes, then 4 naps is about 36 minutes (though whales tend to exaggerate times). A sperm whale said:}
\end{description}
would make translations like \literal{Wow, I just dove for 6 naps} or \literal{How many naps ago was that?} both more likely and more understandable.

\section{Generalizations of the framework}\label{sec:all-generalizations}

\subsection{Lossy Translation}\label{sec:lossy}

Many things can be included in textual transcriptions suitable for translation. For instance, one can distinguish the speakers, e.g., ``\literal{Whale 1: ... Whale 2: ... Whale 1: ...}'' if the source data is annotated with speaker identifiers. Some aspects of the way one is speaking can be transcribed, e.g., ``\literal{Whale 1 (fast tempo clicking): ... Whale 2 (slow, loud clicking): ...}'' It may be possible to encode these textually in $x$.

However, if $x$ is encoded in a more flexible format, such as arbitrary binary files, one can include as much raw information as possible, including the source audio recording, to provide the translator with as much context as possible. In that case, lossless translation will no longer be possible, because one cannot compute the raw $x$ from a textual translation.

Given the possible benefits of such annotations, we propose an extension of our theory to the lossy setting. A natural generalization of the maximum-likelihood approach is as follows:
$$\min_{\theta \in \Theta} \frac{1}{m}\sum_{i=1}^m-\log \rho(f_\theta(x_i)) - \frac{1}{\lambda}\log \inv_\theta(x_i \mid y=f_\theta(x_i)).$$
Here $\inv_\theta\colon \X \times \Y \to [0,1]$ is a probabilistic inverse (``randomized back translation'') of $f$ whose parameters are also encoded in $\theta$. Note that the family $\{(f_\theta, \phi_\theta) \mid \theta \in \Theta\}$ must satisfy that for all $y\in \Y$, $\sum_{x \in f^{-1}(y)} \inv_\theta(x \mid y)=1$, though it is no longer required that $f_\theta$ be 1--1.

As $\lambda$ decreases to 0, the optimal solution would assign infinite loss to any $f_\theta\colon \X \oto \Y$ and $\inv_\theta$ that are not perfect inverses, where there is some $x$ (with positive probability under $\mu$) such that $\inv_\theta(x \mid y=f_\theta(x))< 1$. Thus, for sufficiently small $\lambda$, the algorithm is exactly the minimum cross-entropy (maximum likelihood) algorithm of \definitionref{def:alg}. For arbitrarily large $\lambda$, the algorithm will collapse to always outputting the most likely $y \in \Y$ under $\rho$. For example, everything could be translated to \literal{Hello} regardless of its contents.

For intermediate values of $\lambda$, the chosen translator trades off naturalness in the form of $\rho(f_\theta(x))$ versus information loss which is inversely related to $\inv_\theta(x_i \mid y=f_\theta(x_i))$. This trade-off makes it challenging to define and analyze the success of a lossy unsupervised translation algorithm. Nonetheless, the algorithm is intuitive. %

\subsection{Infinite parameter sets \texorpdfstring{$\Theta$}{Θ}}\label{sec:infinite}

In some cases, one can learn with many fewer examples, which is important when parameters are real-valued and $|\Theta|=\infty$ or when the model is over-parameterized. However, one can analyze even these cases in our model using the following ``trick.'' Suppose one has a supervised translation algorithm $\super$ that takes $m$ labeled examples $(x_i, y_i)$ as input and outputs $\theta \in \Theta$. A simple observation in these cases is that one could use an initial set of $m$ unlabeled examples, $x_1, \ldots, x_m$, to define a subset of translators:
$$\overline{\Theta} \defeq \left\{\super\bigl((x_1, y_1), (x_2, y_2), \ldots, (x_m, y_m)\bigr)\mid y_1, y_2, \ldots, y_m \in \Y\right\}.$$
That is, we restrict attention to the set of possible translators that we could output for any given ground-truth translations, then it is not difficult to see $\log |\overline{\Theta}| \le m \log |\Y|$. If we assume that one of these is accurate and natural, then restricting the attention of $\alg$ to this set will suffice, and \Cref{thm:stat_sem0} means that the number of examples required is $O(\log |\overline{\Theta}|)=O(m \log |\Y|)$ which is a linear blowup in the number of examples $m$ used for supervised translation. To make this formal, one would start from only assuming that one of the translators had negligible error---this is left to future work. %

\section{Bounds for Supervised Translation}\label{sec:supervised}

In this work, for ease of presentation, we have stated that error decreases like $O(\frac{1}{m}\log \frac{|\Theta|}{\delta})$ for noiseless supervised translation. This is based in the classic Occam bound for supervised learning: 
\begin{theorem}[Occam bounds]\label{thm:occam}
Let $\X$, $\Y$, be sets, $\D$ be a joint distribution over $\X \times \Y$, $\ell\colon \Y \times \Y \to[0,1]$ be a loss, and $f_\theta: \X \to \Y$ be a family of functions parameterized by $\theta \in \Theta$, and $\loss(\theta)\defeq \E_{(x, y) \sim \D}[\ell(y, f_\theta(x))]$. For any $\delta>0$, 
\begin{align*}
&\Pr_{(x_1, y_1), \ldots, (x_m, y_m) \sim \D^m}\left[\loss\bigl(\hat\theta\bigr)\le \frac{1}{m} \ln \frac{|\Theta|}{\delta} \right] \ge 1-\delta &\text{ if } \loss(\star)=0,\\
&\Pr_{(x_1, y_1), \ldots, (x_m, y_m) \sim \D^m}\left[\loss\bigl(\hat\theta\bigr)\le \loss(\star) + \sqrt{\frac{1}{m} \ln \frac{|\Theta|}{\delta}} \right] \ge 1-\delta &\text{ if } \loss(\star)\ne 0,
\end{align*}
for $\hat\theta \defeq \argmin_{\theta \in \Theta} \sum_i \ell(y_i, f_\theta(x_i))$.\footnote{As in $\alg$, ties can be broken arbitrarily, e.g., lexicographically. Our  bounds, like the Occam bound, hold simultaneously for all minimizers. In the realizable case, the Empirical Risk Minimizer $\hat\theta$ will have $\sum \ell(y_i, f_\theta(x_i))=0$.}
\end{theorem}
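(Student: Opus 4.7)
The plan is to invoke uniform convergence over $\Theta$ via a union bound, handling the two regimes separately but analogously; throughout, let $\hat\loss(\theta)\defeq\frac{1}{m}\sum_{i=1}^m \ell(y_i,f_\theta(x_i))$ and note that by definition $\hat\theta$ minimizes $\hat\loss$ over $\Theta$.

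For the realizable case $\loss(\star)=0$: First I would observe that, since $\ell\ge 0$, $\loss(\star)=\E[\ell(y,f_\star(x))]=0$ forces $\ell(y_i,f_\star(x_i))=0$ almost surely for every $i$, hence $\hat\loss(\hat\theta)\le\hat\loss(\star)=0$ with probability one. Declare $\theta$ \emph{bad} if $\loss(\theta)>\eps$ for $\eps\defeq\frac{1}{m}\ln\frac{|\Theta|}{\delta}$. For any fixed bad $\theta$, Markov's inequality applied to the nonnegative random variable $\ell(y,f_\theta(x))$ gives $\Pr_{(x,y)\sim\D}[\ell(y,f_\theta(x))=0]\le 1-\loss(\theta)\le 1-\eps$, so by independence of the $m$ samples, $\Pr[\hat\loss(\theta)=0]\le (1-\eps)^m\le e^{-\eps m}=\delta/|\Theta|$. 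A union bound over the at most $|\Theta|$ bad parameters shows that with probability $\ge 1-\delta$ no bad $\theta$ has $\hat\loss(\theta)=0$; combined with the earlier observation that $\hat\loss(\hat\theta)=0$, this forces $\loss(\hat\theta)\le\eps$.

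For the agnostic case: For each fixed $\theta\in\Theta$, $\hat\loss(\theta)$ is the mean of $m$ i.i.d.\ $[0,1]$-valued random variables with expectation $\loss(\theta)$, so Hoeffding's inequality yields the one-sided tail bounds $\Pr[\hat\loss(\theta)>\loss(\theta)+t]\le e^{-2mt^2}$ and $\Pr[\hat\loss(\theta)<\loss(\theta)-t]\le e^{-2mt^2}$. Taking a union bound with appropriately chosen $t$ over a single upper-tail event for $\star$ and lower-tail events for each $\theta\in\Theta$ gives, with probability $\ge 1-\delta$, both $\hat\loss(\star)\le\loss(\star)+t$ and $\loss(\theta)\le\hat\loss(\theta)+t$ for every $\theta$. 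Chaining these with the optimality $\hat\loss(\hat\theta)\le\hat\loss(\star)$ yields $\loss(\hat\theta)\le\hat\loss(\hat\theta)+t\le\hat\loss(\star)+t\le\loss(\star)+2t$, and tuning $t$ to absorb the $\log|\Theta|$ from the union bound delivers the stated $\sqrt{(\log|\Theta|/\delta)/m}$ rate up to the absolute constant inside the square root.

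There is no conceptual obstacle; this is the classical Occam/uniform-convergence argument. The only minor care required is that $\ell$ takes values in $[0,1]$ rather than in $\{0,1\}$, which is why Markov's inequality is the right tool in the realizable case (to convert ``$\loss(\theta)$ large'' into ``$\Pr[\ell>0]$ large'') and why Hoeffding's inequality (rather than a multiplicative Chernoff bound) is used in the agnostic case. Matching the constants in the theorem exactly is a bookkeeping exercise that may require a slightly asymmetric split of the failure-probability budget between the single event for $\star$ and the $|\Theta|$ events for the remaining parameters, but this is not substantive.
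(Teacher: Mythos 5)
The paper states \Cref{thm:occam} without proof, referring to it as ``the classic Occam bound,'' so there is no in-paper argument to compare against; your proposal is the standard uniform-convergence proof and is essentially correct. Two small points are worth flagging. In the realizable case, the step you call ``Markov's inequality'' is really the observation that a $[0,1]$-valued random variable satisfies $\E[\ell]\le\Pr[\ell>0]$, because $\ell\le\indicator_{\ell>0}$; Markov proper gives an \emph{upper} bound on $\Pr[\ell\ge a]$ and yields nothing useful for $\Pr[\ell=0]$. The conclusion $\Pr[\ell(y,f_\theta(x))=0]\le 1-\loss(\theta)$ is nonetheless correct, and the rest of the realizable argument (product bound $(1-\eps)^m\le e^{-\eps m}$, union over at most $|\Theta|$ bad parameters, $\hat\loss(\hat\theta)=0$ a.s.) delivers the stated bound with the stated constant. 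In the agnostic case your chaining gives $\loss(\hat\theta)\le\loss(\star)+2t$ with $t$ chosen so that roughly $|\Theta|$ Hoeffding events union-bound to $\delta$, i.e.\ $t\approx\sqrt{\ln(|\Theta|/\delta)/(2m)}$, for a final bound of about $\loss(\star)+\sqrt{2\ln(|\Theta|/\delta)/m}$, a $\sqrt2$ overhead over the statement. Your claim that an asymmetric split of the failure budget fixes the constant is not right: any allocation $\delta_1+\delta_2=\delta$ still costs $\sqrt{\ln(1/\delta_1)/(2m)}+\sqrt{\ln(|\Theta|/\delta_2)/(2m)}$, and applying Hoeffding directly to the centered differences $\ell(y_i,f_\theta(x_i))-\ell(y_i,f_\star(x_i))$ (range $2$) gives the same $\sqrt2$. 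The rate is correct and the discrepancy is an inconsequential looseness in the theorem's stated constant, but it should be acknowledged rather than asserted away as bookkeeping.
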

Note that supervised translation is simply classification with many classes $\Y$.

\end{document}